\documentclass[11pt]{amsart}

\RequirePackage{etex} 
% if you need to pass options to natbib, use, e.g.:
%     \PassOptionsToPackage{numbers, compress}{natbib}
% before loading neurips_2024

% ready for submission

\usepackage[T1]{fontenc}      % Police contenant les caractÃ¨res franÃ§ais
\usepackage[utf8]{inputenc}   % LaTeX, comprends les accents !
%\usepackage[french]{babel}  % Placez ici une liste de langues
%\usepackage{multicol}

%%%%%%%%%%%%%%
%% comment uncomment
%\usepackage[notref,notcite]{showkeys}
%%%%
\usepackage{hyperref}
\usepackage{amsmath,amsthm}
\usepackage{mathtools}
\usepackage{geometry}
\usepackage{amssymb,mathrsfs} 
\usepackage{amssymb}
\usepackage{amsfonts}
\usepackage{upgreek}
\usepackage{nicefrac}
\usepackage{graphicx}
 \usepackage{color}
 \usepackage{algorithm2e}
%###########
%\usepackage{manuColor}
\usepackage{stmaryrd}
\DeclareMathAlphabet{\mathpzc}{OT1}{pzc}{m}{it}
\usepackage[inline]{enumitem}
%[wide, labelwidth=!, labelindent=0pt]
%\usepackage{textcomp}
%\usepackage{charter}
%\usepackage[svgnames]{color}
\usepackage{url}
\usepackage{graphicx} %insertion d'images 
\usepackage{lmodern} %% improve pdf rendering
\usepackage{xcolor}
\usepackage{bbm,bm}
% \RequirePackage{tikz}
% \usetikzlibrary{external}

\usepackage{ifthen}
\usepackage{xargs}
\usepackage{caption}
\usepackage{subcaption}
% \usepackage{pgfplots}
% \pgfplotsset{compat=1.9}
% \usepackage{scalefnt}
%\usepgfplotslibrary{external}
%\usetikzlibrary{external}
%\tikzexternalize[prefix=../Img/tikzPlots/]
%\tikzexternalize[shell escape=-enable-write18, prefix=../Img/tikzPlots/]
%\usepackage{pstricks}
%\usepackage[squaren,Gray]{SIunits}

%\usepackage[style=verbose]{biblatex}
%\usepackage{biblatex}
%\bibliography{bibLatex}
% \usepackage{physics}
% \usepackage{dsfont}		% "doublestroke font" for the characteristic function \mathds{1}

%\usepackage[disable]{todonotes}
\usepackage{todonotes}

\hypersetup{
  colorlinks = true,
  citecolor = blue,
 %   linkbordercolor = {white},
}

\usepackage{aliascnt}
\usepackage{cleveref}
\usepackage{autonum}
\makeatletter
\newtheorem{theorem}{Theorem}
\crefname{theorem}{theorem}{Theorems}
\Crefname{Theorem}{Theorem}{Theorems}

\newtheorem{lemma}{Lemma}
\crefname{lemma}{lemma}{lemmas}
\Crefname{Lemma}{Lemma}{Lemmas}

\crefname{corollary}{corollary}{corollaries}
\Crefname{Corollary}{Corollary}{Corollaries}

\crefname{proposition}{proposition}{propositions}
\Crefname{Proposition}{Proposition}{Propositions}

\newtheorem{definition}{Definition}
\crefname{definition}{definition}{definitions}
\Crefname{Definition}{Definition}{Definitions}

\newtheorem{remark}{Remark}
\crefname{remark}{remark}{remarks}
\Crefname{Remark}{Remark}{Remarks}

\crefname{example}{example}{examples}
\Crefname{Example}{Example}{Examples}

\crefname{figure}{figure}{figures}
\Crefname{Figure}{Figure}{Figures}

\newtheorem{assumption}{\textbf{H}\hspace{-3pt}}
\Crefname{assumption}{\textbf{H}\hspace{-3pt}}{\textbf{H}\hspace{-3pt}}
\crefname{assumption}{\textbf{H}}{\textbf{H}}

\Crefname{assumptionAO}{\textbf{AO}\hspace{-3pt}}{\textbf{AO}\hspace{-3pt}}
\crefname{assumptionAO}{\textbf{AO}}{\textbf{AO}}

\Crefname{assumptionL}{\textbf{L}\hspace{-3pt}}{\textbf{L}\hspace{-3pt}}
\crefname{assumptionL}{\textbf{L}}{\textbf{L}}

\Crefname{assumptionA}{\textbf{A}\hspace{-3pt}}{\textbf{A}\hspace{-3pt}}
\crefname{assumptionA}{\textbf{A}}{\textbf{A}}

\Crefname{assumptionG}{\textbf{G}\hspace{-3pt}}{\textbf{G}\hspace{-3pt}}
\crefname{assumptionG}{\textbf{G}}{\textbf{G}}

\Crefname{assumptionAp}{\textbf{A'}\hspace{-3pt}}{\textbf{A'}\hspace{-3pt}}
\crefname{assumptionAp}{\textbf{A'}}{\textbf{A'}}

\makeatletter

\makeatother

\usepackage{version}
% \newenvironment{commenta}{}{}
% \includecomment{commenta}
\excludeversion{commenta}

%%% mathsf
\def\msi{\mathsf{I}}
\def\msa{\mathsf{A}}

\def\mse{\mathsf{E}}

%% mathcal

\def\mcbb{\mathcal{B}}  %%% \mcb est déjà pris

\def\mcp{\mathcal{P}}

\def\mcp{\mathcal{P}}

%% mathbb

\def\rset{\mathbb{R}}
\def\rsetd{\mathbb{R}^d}

\def\nset{\mathbb{N}}
\def\nsets{\mathbb{N}^*}

%%%% mathrm 

\def\mrl{\mathrm{L}}
\def\rml{\mathrm{L}}
\def\rmd{\mathrm{d}}

\def\rmC{\mathrm{C}}

%Commandes PM

%Fin des commandes PM

\newcommandx{\functionspace}[2][1=+]{\mathbb{F}_{#1}(#2)}
%% argmin, argmax

\newcommandx{\VarDeux}[3][3=]{\operatorname{Var}^{#3}_{#1}\left\{#2 \right\}}

\newcommand{\1}{\mathbbm{1}}

\newcommand{\LeftEqNo}{\let\veqno\@@leqno}

%%%% Floating Points Notation

%voc

%order

% Sets
\newcommand{\N}{\ensuremath{\mathbb{N}}}

\newcommand{\PE}{\mathbb{E}}
\newcommand{\PP}{\mathbb{P}}

% Operands

\newcommandx{\Vnorm}[2][1=V]{\| #2 \|_{#1}}
\newcommandx{\VnormEq}[2][1=V]{\left\| #2 \right\|_{#1}}
\newcommandx{\norm}[2][1=]{\ifthenelse{\equal{#1}{}}{\left\Vert #2 \right\Vert}{\left\Vert #2 \right\Vert^{#1}}}
\newcommandx{\normLigne}[2][1=]{\ifthenelse{\equal{#1}{}}{\Vert #2 \Vert}{\Vert #2\Vert^{#1}}}

\newcommand{\parenthese}[1]{\left(#1 \right)}

%\newcommand{\defSystem}[1]{\left\lbrace #1 \right. }

% Relations

% Proba

\newcommandx\probaMarkovTilde[2][2=]
{\ifthenelse{\equal{#2}{}}{{\widetilde{\mathbb{P}}_{#1}}}{\widetilde{\mathbb{P}}_{#1}\left[ #2\right]}}

\newcommand{\expe}[1]{\PE \left[ #1 \right]}

% Landau notation (big O)
\newcommand{\bigO}{\ensuremath{\mathcal O}}

% Environments

%\renewenvironment{proof}[1][{\textit{Proof:}}]{\begin{trivlist} \item[\em{\hskip \labelsep #1}]}{\ensuremath{\qed} \end{trivlist}}

%\renewenvironment{proof}[1][{\textit{Proof:}}]{\begin{trivlist} \item[\em{\hskip \labelsep #1}]}{\ensuremath{\qed} \end{trivlist}}

%fleche limite

%notation infini
\newcommand{\plusinfty}{+\infty}

%notation egale

%plusieurs ligne indice
%\sum\limits_{\substack{i=0 \\ i \neq i_0}}^{n}{A_

\def\ie{\textit{i.e.}}

\def\eqsp{\;}

\newcommand{\ccint}[1]{\left[#1\right]}

\newcommandx{\weight}[2][2=n]{\omega_{#1,#2}^N}

\newcommandx\sequence[3][2=,3=]
{\ifthenelse{\equal{#3}{}}{\ensuremath{\{ #1_{#2}\}}}{\ensuremath{\{ #1_{#2}, \eqsp #2 \in #3 \}}}}
\newcommandx\sequenceD[3][2=,3=]
{\ifthenelse{\equal{#3}{}}{\ensuremath{\{ #1_{#2}\}}}{\ensuremath{( #1)_{ #2 \in #3} }}}

\newcommandx{\sequencen}[2][2=n\in\N]{\ensuremath{\{ #1_n, \eqsp #2 \}}}
\newcommandx\sequenceDouble[4][3=,4=]
{\ifthenelse{\equal{#3}{}}{\ensuremath{\{ (#1_{#3},#2_{#3}) \}}}{\ensuremath{\{  (#1_{#3},#2_{#3}), \eqsp #3 \in #4 \}}}}
\newcommandx{\sequencenDouble}[3][3=n\in\N]{\ensuremath{\{ (#1_{n},#2_{n}), \eqsp #3 \}}}

\def\eg{e.g.}

\newcommand{\opnorm}[1]{{\left\vert\kern-0.25ex\left\vert\kern-0.25ex\left\vert #1 
    \right\vert\kern-0.25ex\right\vert\kern-0.25ex\right\vert}}

\def\Id{\operatorname{Id}}

\newcommandx{\CPE}[3][1=]{{\mathbb E}_{#1}\left[\left. #2 \, \middle \vert \, #3 \right. \right]} %%%% esperance conditionnelle

\newcommandx{\CPELigne}[3][1=]{{\mathbb E}_{#1}[\left. #2 \,  \vert \, #3 \right. ]} %%%% esperance conditionnelle

\newcommandx{\CPVar}[3][1=]{\mathrm{Var}^{#3}_{#1}\left\{ #2 \right\}}
\newcommand{\CPP}[3][]
{\ifthenelse{\equal{#1}{}}{{\mathbb P}\left(\left. #2 \, \right| #3 \right)}{{\mathbb P}_{#1}\left(\left. #2 \, \right | #3 \right)}}

\newcommandx{\osc}[2][1=]{\mathrm{osc}_{#1}(#2)}

\def\Id{\operatorname{Id}}

\def\transpose{\operatorname{T}}

%%%% bar

%%%% normal

%%%% tilde

%%%%%%%%

%%rmd déjà pris

\newcommand\coupling[2]{\Gamma(\mu,\nu)}

\renewcommand{\geq}{\geqslant}
\renewcommand{\leq}{\leqslant}

\def\Leb{\mathrm{Leb}}

\newcommand{\wasserstein}{\mathscr{W}}

\def\wiener{\mathbb{W}}

\def\rmM{\mathrm{M}}

\def\gauss{\mathrm{N}}

\def\bfX{\mathbf{X}}
\def\bfY{\mathbf{Y}}

\def\bfo{\mathbf{o}}

\newcommandx{\Voi}[1][1=i]{\mathfrak{V}_{\bfo,#1}}
\newcommandx{\Vlyapc}[2][1=\bfo,2=i]{\mathfrak{V}_{#1,#2}}

\def\Wlyap{\mathfrak{W}}
\def\bfomega{\boldsymbol{\omega}}

\newcommandx{\Woi}[1][1=i]{\Wlyap_{\bfomega,#1}}
\newcommandx{\Wlyapc}[2][1=\bfomega,2=i]{\Wlyap_{#1,#2}}

%% 19/04

\def\bfA{\mathbf{A}}

\newcommand{\app}{X^{\theta^{\star}}}

 \newcommand{\foX}{\overrightarrow{X}}
 
 \newcommand{\baX}{\overleftarrow{X}}
 
 \newcommand{\mustar}{\mu}

 \newcommand{\KL}{\mathrm{KL}}

\def\wiener{\mathbb{W}}

\def\rmI{\mathrm{I}}

\def\thetas{\theta^{\star}}
\def\div{\operatorname{div}}

\def\nustar{\nu^{\star}}
\def\OU{\mathrm{OU}}
\def\iX{\overleftarrow{X}}
\def\ibx{\overleftarrow{\mathbf{x}}}

\newcommandx{\moment}[2][1=8]{\mathbf{m}_{#1}[{#2}]}
\def\bbb{\mathbb{B}}

\def\intZ{\ccint{0,1}}
\def\bZ{\beta}
\def\tbZ{\tilde{\beta}}
\newcommandx{\inter}[2][2=]{\mathbb{I}_{#2}(#1)}
\newcommandx{\interM}[2][2=]{\mathbb{M}_{#2}(#1)}
\newcommand{\interD}{\mathbb{I}}
\newcommand{\interMD}{\mathbb{M}}
\def\Xint{X^{\rmI}}
\def\XintM{X^{\rmM}}

\def\eqlaw{\stackrel{\text{dist}}{=}}

\newcommand{\fob}{\tbZ}
 \newcommand{\babi}{\overleftarrow{b}}
  \newcommand{\foG}{\mathcal{L}^\rmM}
 
 \newcommand{\fobi}{\overrightarrow{b}}
 \newcommand{\foB}{\overrightarrow{B}}
 \newcommand{\baB}{\overleftarrow{B}}
  \newcommand{\ind}{\perp\!\!\!\!\perp} 
  \def\bbq{\mathbb{Q}}
  \def\bbbq{\mathrm{b}\bbq}
    \def\bbbb{\mathrm{b}\bbb}
  \def\bB{\bar{B}}
  
  \def\rmR{\mathrm{R}}
% to compile a preprint version, e.g., for submission to arXiv, add add the
% [preprint] option:
%     \usepackage[preprint]{neurips_2024}

% to compile a camera-ready version, add the [final] option, e.g.:
%     \usepackage[final]{neurips_2024}

% to avoid loading the natbib package, add option nonatbib:
%    \usepackage[nonatbib]{neurips_2024}

            % simple URL typesetting
\usepackage{booktabs}       % professional-quality tables
      % blackboard math symbols
       % compact symbols for 1/2, etc.
\usepackage{microtype}      % microtypography
        % colors

\title{Theoretical guarantees in KL for Diffusion Flow Matching}

\author{Marta Gentiloni Silveri}
\address{\'Ecole Polytechnique, D\'epartement de Math\'ematiques Appliqu\'es,
Palaiseau, France.}
\email{marta.gentiloni-silveri@polytechnique.edu}
\author{Giovanni Conforti}
\address{Università degli studi di Padova,Dipartimento di Matematica Tullio Levi-Civita, Padova, Italie.}
\email{giovanni.conforti@math.unipd.it}
\author{Alain Durmus}
\address{\'Ecole Polytechnique, D\'epartement de Math\'ematiques Appliqu\'es,
Palaiseau, France.}
\email{alain.durmus@polytechnique.edu}

\begin{document}

\maketitle

\begin{abstract}
  Flow Matching (FM) (also referred to as stochastic interpolants or rectified flows)  stands out as a class of generative models that
  aims to bridge in finite time the target distribution $\nustar$ with
  an auxiliary distribution $\mu$, leveraging a fixed coupling $\pi$
  and a bridge which can either be deterministic or
  stochastic. These two ingredients define a path measure which can
  then be approximated by learning the drift of its Markovian
  projection. The main contribution of this paper is to provide
  relatively mild assumptions on $\nustar$, $\mu$ and $\pi$ to obtain
  non-asymptotics guarantees for Diffusion Flow Matching (DFM) models using as bridge the
  conditional distribution associated with the Brownian motion. More
  precisely, we establish bounds on the Kullback-Leibler divergence
  between the target distribution and the one generated by such
  DFM models under moment conditions on the score of $\nustar$, $\mu$ and $\pi$,
  and a standard $\rml^2$-drift-approximation error assumption. 
\end{abstract}

\section{Introduction}
A significant task in statistics and machine learning currently revolves around generating samples from a target distribution that is only accessible via a dataset. To tackle this challenge, generative models have become prominent as effective computational tools for learning to simulate new data. Essentially, these models involve learning a \textit{generator} capable of mapping a source distribution into new approximate samples from the target distribution.

One of the most productive approaches to generative modeling is based on deterministic and stochastic transport dynamics, that connect a target distribution with a base distribution. Typically, the target distribution represents the data set from which we want to generate new samples, while the base distribution is one that can be easily simulated or sampled. Regarding the dynamics, they correspond to SDEs Stochastic Differential Equations (SDEs) or Ordinary Differential Equations  (ODEs), where the drift (for SDEs) or velocity field (for ODEs) is determined by solving a regression problem. This regression problem is usually addressed with appropriate neural networks and related training techniques \cite{pmlr-v37-sohl-dickstein15,onken2021ot,finlay2020train, durkan2019neural,chen2022likelihood, de2021diffusion, song2019generative, song2020score, grathwohl2019scalable}.

% L'une des approches les plus fertiles de la modélisation générative repose sur des dynamiques de transport déterministe et stochastique. L'idée essentiel de ces approches est de considérer des équations différentielles ordinaires ou stochastiques qui font un pont entre une distribution d'intérêt et une distribution de base.
% Le plus souvent, la distribution d'intérêt est la distribution d'un jeu de donnée dont on veut échantillonner de nouveaux échantillons et la distribution de base est une distribution que l'on sait simuler ou pour laquelle il est possible d'obtenir des échantillons. Aussi, les dynamiques sus-mentionnées correspondent à des SDE (resp. des ODEs) dont le drift (resp. le champs de vélocités) est solution d'un problème de regréssion qui est le plus souvent résolu en utilisant des réseaux de neuronnes appropriés et des techniques d'inférence associées [24, 58, 25, 60, 5, 2, 41, 39].

Among these methods, score-base generative models (SGMs) and in particular diffusion models based on score matching \cite{sohldickstein2015deep,ho2020denoising,song2020generative,song2019generative} was an important milestone. In a nutshell, these models involve transforming an arbitrary density into a standard Gaussian model and consists in learning the drift of the corresponding reciprocal process. More precisely,
the idea is to first consider an Ornstein-Uhlenbeck (OU) process $(X_t^{\OU})_{t \in \ccint{0,T}}$, over a time interval $\ccint{0,T}$,
\begin{equation}
  \label{eq:OU}
  \rmd X_t^{\OU} = -(1/2) X_t^{\OU} \rmd t + \rmd B_t \eqsp, \quad X_0^{\OU} \sim \nustar \eqsp,
\end{equation}
where $(B_t)_{t \geq 0}$ is a $d$-dimensional Brownian motion. 
Then, the reciprocal process $(\iX_t^{\OU})_{t\in[0,T]}$, which is defined from the non-homogeneous SDE \cite{anderson1982reverse}:
\begin{equation}
  \label{eq:recip_OU}
  \rmd \iX_t^{\OU} = \{(1/2) \iX_t^{\OU} + \nabla \log p_{T-t}^{\OU}(\iX_t^{\OU}) \} \rmd t + \rmd B_t \eqsp, \quad t \in \ccint{0,T} \quad \eqsp, \text{ with } \iX_0^{\OU} \sim \nustar P_T^{\OU} \eqsp,  
\end{equation}allows the law of $X_T^{\OU}$ to be transported to $\nustar$: \cite[Equations 3.11, 3.12]{anderson1982reverse} show  that $\iX_T^{\OU}$ has distribution $\nustar$. The initialization $\nustar P_T^{\OU}$ in \eqref{eq:recip_OU} is the distribution of $X_T^{\OU}$ defined in \eqref{eq:OU}. The drift in \eqref{eq:recip_OU} can be decomposed as the sum of a linear function and the score associated with the density of $X_{T-t}^{\OU}$ with respect to the Lebesgue measure, denoted by $p_{T-t}^{\OU}$. From the Tweedie identity, this score is the solution to a regression problem that can be solved efficiently by score matching \cite{hyvarinen2005estimation,JMLR:v6:hyvarinen05a,Vincent2011ACB}. 
% Parmi ces méthodes, les modèles de diffusion basées sur le score \cite{sohldickstein2015deep,ho2020denoising,song2020generative} \alain{cite (Sohl-Dickstein et al., 2015; Song and Ermon, 2019;
% Ho et al., 2020; Song et al., 2021), see ; see \cite{gao2024ODE}  a constitué une étape importante. In a nutshell, ces modèles consistent à transformer une densité arbitraire en un modèle gaussien standard.
% L'idée est de considérer tout d'abord un processus d'Ornstein-Uhlenbeck (OU) $(X_t^{\mathrm{OU}})_{t \in \ccint{0,T}}$, sur un interval de temps $\ccint{0,T}$, pour transporter la distribution d'intérêt vers $\mu$ qui est ici une distribution Gaussienne. Une dynamique qui permet alors de transporter la loi de $X_T$ vers $\nustar$ est la dynamique réciproque \alain{give ref}, qui est définie à partir d'une SDE dont le drift est inhomogène: a l'instant $t \in \ccint{0,T}$, il s'exprime cependant comme la somme d'une fonction linéaire et le score associé à la densité de $X_{T-t}^{\mathrm{OU}}$ par rapport à la mesure de Lebesgue. A partir de l'identité de Tweedie, ce score est solution d'un problème de regression qu'il est possible de résoudre efficacement by score matching \alain{cite (Hyvärinen and Dayan,
% 2005; Vincent, 2011; [29] Aapo Hyvärinen. Estimation of non-normalized statistical models by score matching. Journal
% of Machine Learning Research, 6(24):695–709, 2005.); see ; see \cite{gao2024ODE}}.
Learning this score at different times can also be formulated as a sequence of denoising problems. Once the drift of the reciprocal process is learned or equivalently the scores $(\nabla \log p_t^{\OU})_{t \in[0,T]}$, score-based generative models consist in following the reciprocal dynamics over $[0,T]$ or, more commonly, a discretization of it, starting with a sample from $\gauss(0,\Id)$. The final sample at time $T$ is then approximatively distributed according to  $\nustar$. Note that an approximation is made even if the reciprocal dynamics were simulated exactly, because for full accuracy, the model would need to start from a sample of $\nustar P_T^{\OU}$. However, it is well known that for sufficiently large $T$, $\nustar P_T^{\OU}$ is (exponentially) close to $\gauss(0,\Id)$.

% Aussi, l'apprentissage de ce score à différents instants peut également se formuler comme une suite de problème de débruitage [68]. Une fois que le drift du processus réciproque est appris, cela définit simplement un modèle génératif: à partir d'un échantillon de $\gauss(0,\sigma^2 \Id)$, score generative models consistent en suivre la dynamque réciproque sur $[0,T]$ ou le plus souvent une discrétisation. Notons ici que une approximation est faite même dans le cas où la dynamque réciproque serait exactement simulée car pour être exact, le modèle devrait commencer à partir d'un échantillon de $X_{T}^{\mathrm{OU}}$. Cependant, pour $T$ assez grand, il est bien connu que the distribution of $X_{T}^{\mathrm{OU}}$ suit approximativement $\gauss(0,\sigma^2 \Id)$.

When exploring diffusion models, it has been realized that the generation of approximate data samples could also be achieved using an ODE instead of the reciprocal diffusion:
\begin{equation}
  \label{eq:1}
  \rmd \ibx^{\OU}_t/\rmd t  = (1/2) (\ibx_t^{\OU} + \nabla \log p_{T-t}^{\OU}(\ibx_t^{\OU}))\eqsp.
\end{equation}
Similarly to the drift of the reciprocal diffusion, the velocity fields at time $t\in\ccint{0,T}$ associated with  this ODE is the sum of a linear function and the score of the density of $X_{T-t}^{\OU}$. This observation  has prompted the introduction of the Probability Flow ODE implementation of diffusion models \cite{chen2023the}. 
SGMs in their standard and probability flow ODE implementations
have achieved notable success in a range of applications; see \eg, \cite{rombach2022high,ramesh2022hierarchical,popov2021grad}.

While diffusion-based methods have now become popular generative models, they can suffer from two limitations. First, there is a trade-off in selecting the time horizon $T$ and second, they rely solely on Gaussian distributions as base distributions, in general.
Therefore, there remains considerable interest in developing methods that consider a more general base distribution $\mu$ and that accomplish the transport between $\nustar$ and $\mu$ relying on dynamics defined on a fixed finite
time interval. 
Flow Matching (FM) models \cite{albergo2022building,albergo2023stochastic,lipman2023flow,liu2022rectified,Liu2022FlowSA}, also referred to as stochastic interpolants or rectified flows, aim to address this problem.

In its simplest form, the main strategy employed by FMs to bridge two distributions, involves a fixed coupling $\pi$ between $\nustar$ and $\mu$ and the use of a bridge, \ie, a conditional distribution on the path space $\rmC([0,1],\rset^d)$ of a reference process $(R_t)_{t \in\ccint{0,1}}$ given its starting point $R_0$ and end point $R_1$. In case $R_t$ is a deterministic function of $R_0$ and $R_1$, we say that the bridge is deterministic and stochastic otherwise. As $(R_t)_{t \in[0,1]}$ corresponds to the solution of a stochastic differential equation, we coin the term Diffusion Flow Matching (DFM) to distinguish the latter case from the former one and focus on it.
Then, this bridge and the coupling $\pi$ between $\nustar$ and $\mu$ define an interpolated process $(X_t^{\rmI})_{t \in[0,1]}$, referred to as an interpolant, defined as $(X_0^{\rmI},X_1^{\rmI}) \sim \pi$ and $(X_t^{\rmI})_{t \in[0,1]}$ given  $ (X_0^{\rmI}, X_1^{\rmI})$ has the same conditional distribution as $(R_t)_{t \in\ccint{0,1}}$ given $(R_0,R_1)$. However, $(X_t^{\rmI})_{t \in[0,1]}$ does not correspond in general to the distribution of a diffusion or even to the one of a Markov process. This characteristic poses a challenge when dealing with potential stochastic sampling procedures: indeed, similarly to SGMs, FMs and DFMs aim to design a Markov process that approximatively transport $\mu$ to $\nustar$. To address this issue, most works proposing  FM and DFM models \cite{shi2023diffusion,liu2023learning} rely on mimicking the marginal flow of the interpolated process $(X_t^{\rmI})_{t \in[0,1]}$ through a diffusion process known as the Markovian projection. A remarkable feature of this diffusion lies in the fact that its drift is also a solution of a regression problem that can be approximatively solved using only samples from the interpolant $(X_t^{\rmI})_{t \in[0,1]}$. Then, an approximate samples from $\nustar$ is obtained by following a discretization of the dynamics associated with the considered Markovian projection, starting with a sample of $\mu$. 

While there exists now an important literature on theoretical guarantees for SGMs \cite{conforti2023score,chen2023improved,chen2022sampling,pedrotti2023improved,lee2023convergence,bortoli2022convergence}, only a few works have been considering FMs.  
In addition, up to our knowledge, these works on FMs only consider deterministic interpolants \cite{albergo2022building,benton2023error,gao2024cnf}. 
The main objective of this paper is to fill this gap and to analyze DFMs using the $d$-dimensional Brownian motion as reference process, in which case the bridge is simply the $d$-dimensional standard Brownian bridge.  We provide theoretical guarantees, upper bounding the Kullback Leibler divergence between the target distribution and the one resulting from the DFM. Our results consider the two sources of error coming from the DFM model, namely drift-estimation and time-discretization. This pursuit underscores the significance of comprehending and quantifying the factors influencing the performance of DFMs, paving the way for further advancements in generative modeling techniques.

\paragraph{Our contribution.}
In this work, we analyze a DFM model using as bridge the $d$-dimensional Brownian bridge and examine how it performs in two distinct scenarios: one without early-stopping and another with early-stopping. % We consider To generate samples that closely resemble those of the target distribution within a finite timeframe, we utilize Markovian projections of mixtures of Brownian Bridges. Indeed, the algorithm we propose \eqref{model} is essentially a discretized and approximated version of the diffusion that mirrors the stochastic interpolant under consideration. To approximate the mimicking drift, we exploit a standard $\mrl^2$ loss function. To discretize time, we make use of the Euler Maruyama scheme with constant step-size.
In our first main contribution \Cref{theo_no_early}, we establish an explicit and simple bound on  the $\KL$ divergence between the data distribution and the distribution at the final time of the DFM model. We achieve our bound without early stopping, by assuming only (1) moment conditions on the target $\nustar$ and the base $\mu$ (\Cref{ass_data_no_early}); (2) integrability conditions on the scores associated with the data distribution $\nustar$, the base distribution $\mu$ and the coupling $\pi$ (\Cref{ass_data_no_early_v2}); (3) a $\mrl^2$-drift-approximation error (\Cref{ass_drift_approx}) (an assumption commonly considered in previous works).
Note that condition (2) implies that $\nustar$ necessarily admits a density. We relax this condition in our second contribution. 
In \Cref{theo_early}, we  establish an explicit  bound on the $\KL$ divergence between a smoothed version of the target distribution and an early stopped version of the DFM model, assuming (1) and (3), but replacing the condition (2) by assuming (4) $\pi = \mu \otimes \nustar$ and integrability conditions only on the score associated with $\mu$.

To the best of our knowledge, our paper provides the first convergence analysis for diffusion-type FMs, that tackles all the sources of error, \ie, the drift-approximation-error and the time-discretization error. In addition,  previous studies concerning FMs and Probability Flow ODEs, with deterministic or mixed sampling procedure, either rely on at least some Lipschitz regularity of the flow velocity field or its estimator and/or do not take the time-discretization error into consideration. Also, in the context of SGMs with constant step-size, most of existing works without early-stopping are obtained assuming  either the score (or its estimator) to be Lipschitz or the data distribution to satisfy some additional conditions (e.g., manifold hypothesis, bounded support, etc.); the unique exception being \cite{conforti2023score}.  We refer to \Cref{section_on_literature} for a more in depth literature comparison. 
\paragraph{Notation.}
Given a measurable space $(\mse, \mathcal{E})$, we denote by $\mathcal{P}(\mse)$ the set of probability measures of $\mse$. Also, given a topological space $(\mse, \tau)$, we use $\mathcal{B}(\mse)$ to denote the Borel $\sigma$-algebra on $\mse$. Given two random variables $Y, \tilde{Y}$, we write $Y\ind \tilde{Y}$ to say that $Y$ and $\tilde{Y}$ are independent. Denote by $(B_t)_{t \in[0,1]}$ a $d$-dimensional Brownian motion. 
We denote by $\Leb^d$ the Lebesgue measure on $\rset^d$. Given two real numbers $u,v\in \rset$, we write $u\lesssim v$ (resp. $u \gtrsim v$) to mean $u\le C v$ (resp. $u\ge C v$) for a universal constant $C>0$. 
Also, we denote by $\norm{x}$ the Euclidean norm of $x\in\rset^d$, by $\langle x, y\rangle$ the scalar product between $x,y \in\rset^d$, and  by $x^{\transpose}$ the transpose of $x$.  Given a matrix $\bfA\in \rset^{d\times s}$, we denote by $\norm{\bfA}_{\mathrm{op}}$ the operator norm of $\bfA$. 
For $f :\ccint{0,1}\times  \rset^d \to \rset$ regular enough, we denote by $\nabla_x f(t,x), \nabla^2_x f(t,x)$ and $\Delta_x f(t,x)$ respectively the gradient, hessian and laplacian of $f$, defined for $t,x\in [0,1]\times \rset^d$ by $\nabla_x f(t,x):= (\partial_{x_i}f(t,x))_i$, $\nabla^2_x f(t,x):=(\partial_{x_i}\partial_{x_j} f(t,x))_{i,j}$, $\Delta f(t,x):=\sum_{i=1}^{d} \partial^2_{x_i}f(t,x)$, where $\partial_{x_j}$ denotes the partial derivative with respect to the $j$-th variable. For  $F: \ccint{0,1}\times \rset^d\to \rset^d$ regular enough,   we denote by $D_x F$, $\div_x F$ and $\Delta_x F$ respectively, the Jacobian matrix, the divergence and the vectorial laplacian of $F$, defined for $t,x \in \ccint{0,1} \times \rset^d$ by $D_x F(t,x) = (\partial_{x_j}F_i(t,x))_{i,j}$, $\div_x F(t,x):= \sum_{j=1}^{d} \partial_{x_j} F_j(t,x)$, $\Delta_x F(t,x)= (\Delta_x F_1 (t,x),..., \Delta_x F_d (t,x))$.

\section{Diffusion Flow Matching.}
Given a target distribution $\nustar \in \mcp (\rset^d)$ and a base
distribution $\mu \in \mcp(\rset^d)$, the idea at the core of FM
models is intuitively to construct a path between these two by
considering two ingredients (1) a coupling $\pi$ and (2) a bridge (or
an interpolant following \cite{albergo2022building}) between $\mu$ and
$\nustar$ (or more precisely a bridge with foundations
$\pi$). More formally, here we say that $\pi$ is a coupling between $\mu$ and
$\nustar$ if for any $\msa \in\mcbb(\rset^d)$,
$\pi(\msa\times \rset^d) = \mu(\msa)$ and
$\pi(\rset^d\times \msa)= \nustar(\msa)$, and denote by $\Pi(\mu,\nustar)$
the set of coupling between $\mu$ and $\nustar$. Then, based on a
probability measure on $\wiener= \rmC(\ccint{0,1},\rset^d)$ the set of
continuous functions from $\ccint{0,1}$ to $\rset^d$, we define the
bridge $\bbbq$ associated with $\bbq$ as the Markov kernel $\bbbq$ on
$\rset^{2d}\times \wiener$, such that, for any $\msa \in \mcbb(\wiener)$,
$\bbq(\msa) = \int \bbq_{0,1} \rmd (x_0,x_1) \bbbq((x_0,x_1),\msa)$ (see \eg, \cite[Theorem
8.37]{klenke2013probability} for the existence of this kernel),
where for any $\msi = \{t_1,\ldots,t_n\} \subset \ccint{0,1}$,
$t_1<\cdots< t_n$, $\bbq_{\msi} $ is the $\msi$-marginal
distribution of $\bbq$, \ie, the pushforward measure of $\bbq$ by
$(x_t)_{t\in\intZ} \mapsto (x_{t_1},\ldots, x_{t_n})$. From a
probabilistic perspective, it implies that if
$(W_t)_{t \in \intZ} \sim \bbq$, then $\bbbq$ is a conditional
distribution of $(W_t)_{t \in \intZ} \sim \bbq$ given $(W_0,W_1)$: for
any bounded and measurable function on $\wiener$,
$\PE[f((W_t)_{t \in \intZ}) | X_0, X_1] = \int f((w_t)_{t \in
  \ccint{0,1}}) \bbbq((W_0,W_1), \rmd (w_t)_{t \in \ccint{0,1}})$.

\subsection{Definition of the interpolated process}
Consider now a coupling $\pi$ and a bridge $\bbbq^{\bZ}$ associated to $\bbq^{\bZ} \in \mcp(\wiener)$. We suppose here that $\bbq^{\bZ}$ is the distribution of $(Y_t)_{t \in\intZ}$ solution of the stochastic differential equation
\begin{equation}
  \label{eq:sde_v0}
  \rmd Y_t = \bZ(Y_t)\rmd t + \sqrt{2}\rmd B_t \eqsp, \quad t\in [0,1]\eqsp, \quad Y_0 \sim \mu \in \mcp(\rset^d)\eqsp,
\end{equation}
where $(B_t)_{t \in\rset_+}$ is a standard $d$-dimensional Brownian motion. In addition,  we suppose $\bZ \in \rmC^{\infty}(\rset^d,\rset^d)$ for simplicity and that \eqref{eq:sde_v0} admits a unique strong solution. 
Consider now, the interpolated measure $\inter{\pi,\bbq^{\bZ}}$\footnote{It corresponds to $\pi \otimes \bbbq^{\bZ}$ the tensor product between $\pi$ and $\bbbq^{\bZ}$: $\inter{\pi,\bbq^{\bZ}}(\msa) = \int \bbbq^{\bZ}((x_0,x_1),\msa)\rmd \pi(x_0,x_1)$, $\msa \in \mcbb(\rset^d)$. In other words, $\pi$-mixture of $\bbq^{\bZ}$-bridges.} corresponding to the distribution of the process defined by $(X^{\rmI}_0,X^{\rmI}_1) \sim \pi$ and $(X^{\rmI}_t)_{t\in\intZ} |(X^{\rmI}_0,X^{\rmI}_1) \sim \bbbq^{\bZ}((X^{\rmI}_0,X^{\rmI}_1),\cdot)$. 
In \cite{albergo2023stochastic}, $(X^{\rmI}_t)_{t\in\intZ}$ is referred to as a stochastic interpolant. Denote by $(s,t,x,y)\mapsto  p_{t|s}^Y(y|x)$ the conditional density of $Y_t$ given $Y_s$ with respect to the Lebesgue measure and by $(p_t^{\rmI})_{t\in\ccint{0,1}}$ the  time marginal densities of $(X_t^{\rmI})_{t\in\ccint{0,1}}$ with respect to the Lebesgue measure, that is  $\PP(Y_t \in\msa | Y_s) = \int_{\msa} p_{t|s}(x|Y_s) \rmd x$ and $\PP(\Xint_t \in\msa) = \int_{\msa} p_t^{\rmI}(x)\rmd x$, for $\msa \in \mcbb(\rset^d)$ and $s,t\in\ccint{0,1}$, $s \neq t$. Then, note that, as a straightforward consequence of the definition of $(X^{\rmI}_t)_{t\in\intZ}$,  it holds
\begin{equation}\label{marginals_interpolant}
    p^\rmI_t(x)=\int_{\rset^{2d}} p_{t|0}^{Y}(x|x_0) p_{1|t}^{Y}( x_1|x) \tilde{\pi}(\rmd x_0, \rmd x_1)\eqsp,
\end{equation}
where
\begin{equation}
    \tilde{\pi}(\rmd x_0,\rmd x_1) = \frac{ \pi(\rmd x_0,\rmd x_1)}{p^{Y}_{1|0}(x_1|x_0)} \eqsp.
\end{equation}
An example that we will focus on in this paper is $\bbq^{\bZ} = \bbb$ the distribution of the Brownian motion $(\sqrt{2} B_t)_{t \in\rset}$ solution of \eqref{eq:sde_v0} with $\bZ \equiv 0$. Then, it is well known that $\bbbb$ is then the Markov kernel associated with the Brownian bridge and the resulting stochastic interpolant satisfies for any $t\in [0, 1]$
\begin{equation}\label{interpolant_in_linear_form}
    X^{\rmI}_t \eqlaw (1-t)X^{\rmI}_0+tX^{\rmI}_1+\sqrt{2t(1-t)}\mathrm{Z}\eqsp, \quad \mathrm{Z}\sim \gauss(0, \Id)\eqsp,
  \end{equation}
  where $\eqlaw$ denotes the equality in distribution.\\
  
  It is well-known that for any $x_0,x_1\in \rset^d$, the distribution $\bbbq^{\bZ}((x_0,x_1),\cdot)$ is diffusion-like under appropriate conditions. More precisely, $\bbbq^{\bZ}((x_0,x_1),\cdot)$ is the distribution of $(\bfY_t)_{t \in \ccint{0,1}}$ solution to 
  \begin{equation}\label{eq:sde_bridge}
    \rmd \bfY_t = \{\bZ(\bfY_t) + 2\nabla \upphi^{x_1}_t (\bfY_t) \}\rmd t + \sqrt{2} \rmd B_t \eqsp,\quad t\in [0,1]\eqsp, \quad \bfY_0 = x_0\eqsp,
  \end{equation}
  where $\upphi^{x_1}_t(y) = \log p_{1|t}^Y(x_1|y)$. 
  % The process $(\bfY_t)_{t\in [0,1]}$ is known in literature as \textit{pinned process}. For completeness, an informal result and a sketch of proof is given in \Cref{appendix_on_pinned}.\\
  For instance, for $x_0,x_1\in\rset^d$, the bridge $\bbbb((x_0,x_1), \cdot)$ associated to  $\bbb$ is the distribution of a Brownian bridge $(\bB^{x_0,x_1}_t)_{t\in[0,1]}$ solution to the SDE
  \begin{equation}\label{eq:4}
        \rmd \bB^{x_0,x_1}_t=\frac{x_1-\bB^{x_0,x_1}_t}{1-t}\rmd t +\sqrt{2} \rmd B_t\eqsp,\quad t\in [0,1]\eqsp,\quad
        \bB^{x_0,x_1}_0=x_0\eqsp.
      \end{equation}
  From \eqref{eq:sde_bridge}, it turns out that $(X^{\rmI}_t)_{t \in \ccint{0,1}}$ given $(X_{0}^{\rmI}, X_1^{\rmI})$ is therefore solution to
    \begin{equation}\label{eq:sde_stochastic_interpolant}
    \rmd X^{\rmI}_t = \{\bZ(X^{\rmI}_t) + 2\nabla \upphi_t^{X_1^{\rmI}} (X^{\rmI}_t) \}\rmd t + \sqrt{2} \rmd B_t \eqsp, \quad t\in [0,1]\eqsp, \quad X^{\rmI}_0 \sim \mustar\eqsp.
  \end{equation}
  Note that the drift coefficient in \eqref{eq:sde_stochastic_interpolant} depends on $X_1^{\rmI}$, and 
  therefore, $(X^{\rmI}_t)_{t \in \ccint{0,1}}$ is not Markov, which is a natural property if we are interested in constructing a generative process. To circumvent this issue, 
  based on $ \inter{\pi,\bbq^{\bZ}}$, we aim to define a distribution $\interM{\pi,\bbq^{\bZ}}$ such that it has the same one-dimensional time marginals as  $\inter{\pi,\bbq^{\bZ}}$ and corresponds to a diffusion, \ie, if $(\Xint_t)_{t\in\intZ} \sim \inter{\pi,\bbq^{\bZ}}$ and $(\XintM_t)_{t\in\intZ} \sim \interM{\pi,\bbq^{\bZ}}$, for any $t\in\ccint{0,1}$, $\Xint_t \eqlaw \XintM_t$ and $(\XintM_t)_{\in\nset}$ is solution of a diffusion with Markov coefficient. This can be done trough the Markovian projection. 

  \subsection{Markovian projection and Diffusion Flow Matching}
  \paragraph{Markovian projection.}

  The idea of Markovian projection originally dates back to \cite{gyongy1986mimicking} and \cite{krylov:1984}. Its main idea is in essence to define a diffusion Markov process which ``mimics'' the time-marginal of an Itô process:  
  \begin{equation}
 \rmd \bfX_t  = \mathbf{b}_t \rmd t + \sqrt{2} \rmd B_t \eqsp,\quad t\in [0,1]\eqsp, \quad \bfX_0 \sim \mu \eqsp,
\end{equation}
for some adapted process $(\mathbf{b}_t)_{t \in \ccint{0,1}}$ and initial distribution $\mu$. Under appropriate conditions (see \cite[Corollary 3.7]{brunick:shreve:2013}), this diffusion process, denoted by $(\bfX^{\rmM}_t)_{t\in\ccint{0,1}}$, exists and is solution to an SDE with a (relatively) simple modification of the drift $\mathbf{b}_t$, namely
\begin{equation}
  \label{eq:2}
 \rmd \bfX_t^{\rmM}  = \tilde{\mathbf{b}}_t(\bfX_t^{\rmM}) \rmd t + \sqrt{2} \rmd B_t \eqsp,\quad t\in [0,1]\eqsp, \quad \bfX^{\rmM}_0 \sim \mu \eqsp,  
\end{equation}
where $\tilde{\mathbf{b}}_t(\bfX_t^{\rmM}) = \PE[\mathbf{b}_t | \bfX_t^{\rmM}]$.
This result can be applied to the non-Markov process $(\Xint_t)_{t\in\ccint{0,1}}$ solution of \eqref{eq:sde_stochastic_interpolant}: its Markovian projection is solution of
  \begin{equation}
    \label{eq:Markov_projection}
    \rmd \XintM_t = \tbZ_t(\XintM_t) \rmd t + \sqrt{2} \rmd B_t \eqsp,\quad t\in [0,1]\eqsp,
  \end{equation}
  for some function $\tbZ : \ccint{0,1} \times \rset^d \to \rset^d$.
It turns out that we can identify $\tbZ$, relying on the family of conditional densities $(p_{t|s}^Y)_{0\le s\le t \le 1}$ and marginal densities $(p_t^{\rmI})_{0\le t \le 1}$. This is the content of the following result.

\begin{theorem}\label{theo_on_markovian_projection}
Consider a $\pi \in \Pi(\mu,\nustar)$ and $\bbq^{\bZ}$ associated with \eqref{eq:sde_v0}. Consider  the drift field
\begin{equation}
  \label{def_mimicking_drift}
    \tbZ^{Y}_t(x)= \beta(x)+ 2\frac{\int \nabla_x \log p_{1|t}^{Y}( x_1|x) 
 p_{t|0}^{Y}(x|x_0) p_{1|t}^{Y}( x_1|x) \tilde{\pi}(\rmd x_0, \rmd x_1)}{p^{\rmI}_t(x)}\eqsp.
  \end{equation}Under appropriate conditions (see \Cref{section_markovian}), the Markov process $(\XintM_t)_{t\in\ccint{0,1}}$ solution of 
\begin{equation}\label{def_markovian_proj}
        \rmd \XintM_t=\tbZ^{Y}_t(\XintM_t)\rmd t +\sqrt{2} \rmd B_t\eqsp, \quad t\in [0,1)\eqsp, \quad \XintM_0\sim \mu\eqsp,
\end{equation}
  mimics the one-dimensional time marginals of $\inter{\pi,\bbq^{\bZ}}$, \ie, for any $t \in [0,1)$, $\Xint_t \eqlaw  \XintM_t$.
\end{theorem}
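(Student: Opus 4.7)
The plan is to invoke the Gyöngy--Brunick--Shreve mimicking theorem, which asserts that an Itô process with given drift and diffusion admits a Markovian SDE-driven ``mimicker'' with the same one-dimensional time marginals, whose drift at $(t,x)$ is the conditional expectation of the original drift given $X_t = x$. Concretely, I would start from the representation \eqref{eq:sde_stochastic_interpolant}, which identifies $(X^{\rmI}_t)_{t \in \ccint{0,1}}$ as an Itô process with diffusion coefficient $\sqrt{2}$ and adapted (non-Markovian) drift $\bZ(X^{\rmI}_t) + 2\nabla\upphi_t^{X^{\rmI}_1}(X^{\rmI}_t)$, where $\upphi_t^{x_1}(y) = \log p_{1|t}^{Y}(x_1|y)$. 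Applying \cite[Corollary 3.7]{brunick:shreve:2013} then produces a Markov process $(\XintM_t)_{t \in \ccint{0,1)}}$, driven by $\sqrt{2}\rmd B_t$ and with drift $\tbZ^Y_t(x) = \PE\bigl[\bZ(X^{\rmI}_t) + 2\nabla\upphi_t^{X^{\rmI}_1}(X^{\rmI}_t) \bigm| X^{\rmI}_t = x\bigr]$, that has the same one-dimensional marginals as $(X^{\rmI}_t)$.

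The next step is to make $\tbZ^Y_t(x)$ explicit and match it with \eqref{def_mimicking_drift}. The first term of the conditional expectation is trivial: $\PE[\bZ(X^{\rmI}_t)\mid X^{\rmI}_t = x] = \bZ(x) = \beta(x)$. For the second term I would exhibit the conditional law of $(X^{\rmI}_0, X^{\rmI}_1)$ given $X^{\rmI}_t = x$. Using the standard $h$-transform identity $p_{t|0,1}^{Y}(x|x_0,x_1) = p_{t|0}^{Y}(x|x_0)\, p_{1|t}^{Y}(x_1|x)/p_{1|0}^{Y}(x_1|x_0)$ for the bridge density, together with the definition of $\tilde{\pi}$, the joint law of $(X^{\rmI}_0, X^{\rmI}_t, X^{\rmI}_1)$ factors as $\tilde{\pi}(\rmd x_0, \rmd x_1)\, p_{t|0}^{Y}(x|x_0)\, p_{1|t}^{Y}(x_1|x)\, \rmd x$. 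Integrating out $(x_0,x_1)$ recovers the marginal formula \eqref{marginals_interpolant} for $p^{\rmI}_t(x)$, and dividing yields the conditional density of $(X^{\rmI}_0, X^{\rmI}_1)$ given $X^{\rmI}_t = x$. The conditional expectation of $2\nabla_x \log p_{1|t}^{Y}(X^{\rmI}_1|X^{\rmI}_t)$ against this density is then exactly the second summand in \eqref{def_mimicking_drift}.

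The main obstacle lies in the hypotheses needed to legitimately apply the mimicking theorem: one requires local square-integrability of the drift along the trajectories of $(X^{\rmI}_t)$, and this is delicate near $t = 1$ because $\nabla\upphi_t^{x_1}(x)$ carries a $1/(1-t)$ singularity (visible, for the Brownian reference \eqref{eq:4}, as $(x_1-x)/(1-t)$). This is why the conclusion is phrased on $\ccint{0,1)}$ and relegated to the ``appropriate conditions'' of \Cref{section_markovian}; those conditions must simultaneously guarantee that the conditional densities and their gradients are integrable against $\tilde{\pi}\otimes p^Y_{t|0}\otimes p^Y_{1|t}$, that the resulting $\tbZ^Y_t$ is a well-defined measurable map, and that \eqref{def_markovian_proj} admits a weak solution whose marginals coincide with those of $(X^{\rmI}_t)$. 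A secondary technical point, handled along the way, is to justify interchanging $\nabla_x$ with the conditional expectation, which again reduces to the same integrability/regularity controls on the transition densities of $Y$.
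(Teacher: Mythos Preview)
Your approach is correct but proceeds differently from the paper. In \Cref{section_markovian} the paper does \emph{not} invoke the mimicking theorem; instead it verifies by direct PDE computation that the interpolant marginal $p_t^{\rmI}$ solves the Fokker--Planck equation associated with the drift $\tbZ_t^Y$. Concretely, it differentiates $p_t^{\rmI}(x)=\int p_{t|0}^Y(x|x_0)\,p_{1|t}^Y(x_1|x)\,\tilde{\pi}(\rmd x_0,\rmd x_1)$ in $t$, applies the forward and backward Kolmogorov equations to $p_{t|0}^Y$ and $p_{1|t}^Y$ respectively, and reassembles the result into divergence form to obtain a continuity equation $\partial_t p_t^{\rmI}+\div_x(v_t^Y\, p_t^{\rmI})=0$; adding $\nabla_x\log p_t^{\rmI}$ to $v_t^Y$ gives a Fokker--Planck drift which is then shown algebraically to coincide with $\tbZ_t^Y$. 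Uniqueness for Fokker--Planck solutions (\cite[Theorem~9.4.3]{Bogachev2015FokkerplanckkolmogorovE}, under a local integrability hypothesis on the drift) then forces the marginals of $\XintM$ to agree with $p_t^{\rmI}$. Your route via Brunick--Shreve is more conceptual and bypasses the PDE algebra, trading Fokker--Planck uniqueness for the mimicking-theorem hypotheses; the conditional-expectation identification you carry out is exactly the content of the paper's \Cref{drift_as_conditional_expectation}. One point worth making explicit in your argument is the filtration: the drift $2\nabla\upphi_t^{X_1^{\rmI}}(X_t^{\rmI})$ is adapted only with respect to a filtration in which $X_1^{\rmI}$ is $\mathcal{F}_0$-measurable (sample $(X_0^{\rmI},X_1^{\rmI})\sim\pi$ first, then run the bridge), which is permitted by \cite{brunick:shreve:2013} but not spelled out around \eqref{eq:sde_stochastic_interpolant}.
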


%\begin{proof}
This result is well-know, but, for sake of completeness, we provide the proof in \Cref{section_markovian}. \\
%\end{proof}
The process \eqref{def_markovian_proj} is known as the \textit{Markovian projection} of $\interD(\pi,\bbq^{\bZ})$, and  its drift \eqref{def_mimicking_drift} as \textit{mimicking drift.} In what follows, we denote by $\interMD(\pi,\bbq^{\bZ})$ the distribution of $(\XintM_t)_{t\in\ccint{0,1}}$  on $\wiener$.
\begin{remark}\label{connection_projection_gm}
    It can be easily shown by continuity that $\XintM_t = \Xint_t \Rightarrow \Xint_1$ for $t\to 1$, where $\Rightarrow$ denotes the convergence in distribution. As $\text{Law}(\Xint_1)=\nustar,$ the Markovian projection therefore gives a an \emph{ideal} generative model which would  consist in following the SDE \eqref{def_markovian_proj} with initial distribution $\mu$.
\end{remark}
\begin{remark}\label{drift_as_conditional_expectation}
    Note that, because of \eqref{marginals_interpolant}, the mimicking drift \eqref{def_mimicking_drift} rewrites as
    \begin{equation}
        \tbZ^{Y}_t(\Xint_t)=\PE\Big[2\nabla_x \log p_{1|t}^{Y}( X^\rmI_1|X^\rmI_t)+\bZ(X^\rmI_t)| X^\rmI_t\Big]\eqsp.
    \end{equation}
\end{remark}

\paragraph{Diffusion Flow Matching.}

Eventually, as pointed out in \Cref{connection_projection_gm}, the Markovian projection gives a an \emph{ideal} generative model.  However, a) the mimicking drift \eqref{def_mimicking_drift} is intractable and b) the continuous-time SDE \eqref{def_markovian_proj} can not be numerically simulated. Thus, in order to implement the proposed theoretical idea, we first need to address and overcome the aforementioned computational challenges. To circumvent a), observe that, because of \Cref{drift_as_conditional_expectation} and \cite[Corollary 8.17]{klenke2013probability}, 
we can approximate the mimicking drift via solving 
\begin{equation}\label{minimization_problem}
    \min_{\theta\in \Theta} \PE\Big[\norm{ s^{Y}_{\theta}(t, X^{\rmI}_{t}) - \fob^Y_{t}(X^{\rmI}_{t})}^2\Big]\eqsp,
\end{equation}for a properly chosen class of neural networks $\{(t,x)\mapsto s^{Y}_{\theta}(t,x)\}_{\theta\in \Theta}$, and replace $\fob^Y$ in \eqref{def_markovian_proj} with $s^{Y}_{\theta^\star}$, where $\theta^\star\in \Theta$ denotes a minimizer of \eqref{minimization_problem}. To deal with b), we simply make use of the Euler-Maruyama scheme, \ie, for a choice of sequence of step sizes $\{h_k\}_{k=1}^N$, $N \geq 1$, and the corresponding time discretization $t_k= \sum_{i=1}^{k} h_i $, such that $t_0=0$ and $t_N = 1$, we
define the continuous process $(\app_t)_{t \in\ccint{0,1}}$
  recursively on the intervals $[t_k, t_{k+1}]$ by
\begin{equation}\label{model}
    \rmd \app_t=s^Y_{\theta^\star}(t_k, \app_{t_k})\rmd t +\sqrt{2} \rmd B_t\eqsp, \quad t\in [t_k,t_{k+1}]\eqsp,\quad \text{with}\quad \app_0\sim \mu\eqsp. 
\end{equation}\eqref{model} is the DFM generative model we are going to analyze.\\

\section{Main results}

In this section, we provide convergence guarantees in Kullback-Leibler divergence for the Diffusion Flow Matching model \eqref{model}, under mild assumptions on the $\mu, \nustar, \pi$ and $s_{\theta^\star}$, either within a non-early-stopping regime or  within a early-stopping regime. 

From now on, we consider the case $\bZ\equiv 0$, \ie, $\bbq^{\bZ} = \bbb$.
We show in \Cref{prelimiaries}, \Cref{remark_on_hyp_mp}, that, under out set of assumptions, the conditions of \Cref{theo_on_markovian_projection} hold for this setup.
Moreover, in this case, for any $s,t \in \ccint{0,1}$, $s <t$ and $x,y \in \rset^d$, the conditional density $p^Y_{t|s}(y|x) \equiv   p_{t-s}(y|x)$ where $(t,x,y) \mapsto p_t(y|x)$ is the heat kernel:
\begin{equation}\label{def_heat_kernel}
 p_{t}(y|x) = \frac{1}{(4\pi t)^{d/2}} \exp \Big(-\frac{\norm{y-x}^2}{4t}\Big)\eqsp,\quad t\in (0,1]\eqsp.
\end{equation}
In the following, we set $\tbZ^Y\equiv \tbZ$ and $s^Y_{\theta^\star}\equiv  s_{\theta^\star}.$

\subsection{Convergence Bounds.}\label{section_convergence_bounds}

We assume moment conditions on the probability measures $\mu$ and $\nustar$, and mild integrability conditions on the probability distributions $\mu, \nustar$ and on the coupling $\pi$.\\

For $p \geq 1$, we denote for $\zeta \in \mcp(\rset^d)$,
\begin{equation}
  \label{eq:def_moment}
  \moment[p]{\zeta} = \int \norm{x}^p \rmd \zeta (x) \eqsp.
\end{equation}
\begin{assumption}
  \label{ass_data_no_early}The probability distributions $\mu, \nustar$ satisfy
  $\moment[8]{\mu} + \moment[8]{\nustar}<+\infty$.
\end{assumption}

\begin{assumption}\label{ass_data_no_early_v2}
  The probability distributions $\nustar, \mu$ and $\pi$  are absolutely continuous with respect to the Lebesgue measure on $\rset^d$ and $\rset^{2d}$ respectively, and  satisfy
\begin{enumerate}[wide, labelwidth=!, labelindent=0pt,label =(\roman*)]
\item The functions $ \log \rmd \mu /\rmd \Leb^d$ and $ \log \rmd \nustar /\rmd \Leb^d$ are continuously differentiable  and satisfy $\norm{\nabla \log \nustar}_{\mrl^8(\nustar)}^8 + \norm{\nabla \log \mu }_{\mrl^8(\mu)}^8 < \plusinfty$ where for $\zeta \in \{\nustar,\mu\}$,
  \begin{equation}
    \label{eq:3}
\norm{\nabla \log \zeta }_{\mrl^8(\zeta)}^8 =     \int \norm{\nabla \log\Big(\frac{ \rmd \zeta }{\rmd \Leb^d}\Big)(x_0)}^8 \rmd \zeta (x_0) \eqsp.
  \end{equation}
  \item The function $ \log \rmd \pi /\rmd \Leb^{2d}$ is continuously differentiable  and satisfies
$\norm{\nabla \log \tilde{\pi}}_{\mrl^8(\pi)}^8 < \plusinfty$ where 
  \begin{align}
    \label{eq:def_tilde_pi}
\norm{\nabla \log \tilde{\pi}}_{\mrl^8(\pi)}^8 &=     \int \norm{\nabla \log\parenthese{ \tilde{\pi}} (x_0,x_1)}^8 \rmd \pi(x_0,x_1)  \eqsp,\quad   \tilde{\pi}(x_0,x_1) = \frac{1}{p_1(x_1|x_0)}\frac{ \rmd \pi }{\rmd \Leb^{2d}}(x_0,x_1) \eqsp,
\end{align}
 and $p_1$ is defined in \eqref{def_heat_kernel}.
\end{enumerate}
\end{assumption}

\begin{remark}
    Under \Cref{ass_data_no_early}, note that $\norm{\nabla \log \tilde{\pi}}_{\mrl^8(\pi)}^8 < \plusinfty$ is equivalent by \eqref{def_heat_kernel} and $\pi \in\Pi(\mu,\nustar)$ to
\begin{equation}
  \label{eq:7}
  \int \normLigne{\nabla \log(\rmd \pi / \rmd \Leb^{2d}) (x_0,x_1)}^8 \rmd \pi(x_0,x_1)  < \plusinfty\eqsp.
\end{equation}
\end{remark}

\begin{remark}
  We can relax the condition that $ \log \rmd \mu /\rmd \Leb^d$, $ \log \rmd \nustar /\rmd \Leb^d$ and $ \log \rmd \pi /\rmd \Leb^{2d}$ are continuously differentiable assuming that $ \sqrt[8]{\rmd \mu/\rmd \Leb^d}$, $\sqrt[8]{\rmd \nustar/\rmd \Leb^d}$ and $\sqrt[8]{\rmd \pi/\rmd \Leb^{2d}}$ belongs to some Sobolev space,  but, for ease of presentation, we prefer not to delve into these technical details.
\end{remark}

Moreover, we assume to have estimated the mimicking drift with an $\varepsilon^2$-precision, for some $\varepsilon^2>0$ sufficiently small.
\begin{assumption}\label{ass_drift_approx}
    There exist $\theta^\star\in\Theta$ and $\varepsilon^2>0$ such that 
    \begin{equation}
    \sum_{k=0}^{N-1} h_{k+1}\PE\Big[\norm{s_{\theta^\star}(t_k, \XintM_{t_k})-\fob_{t_k}(\XintM_{t_k})}^2\Big]\le\varepsilon^2\eqsp.
    \end{equation}
\end{assumption}
\begin{remark}
    To be coherent with the previous section, observe that, as a consequence of \Cref{theo_on_markovian_projection}, for any $k=0,\cdots,N$, it holds
    \begin{equation}
        \PE\Big[\norm{s_{\theta^\star}(t_k, \XintM_{t_k})-\fob_{t_k}(\XintM_{t_k})}^2\Big]=\PE\Big[\norm{ s_{\theta^{\star}}(t_k, X^{\rmI}_{t_k}) - \fob_{t_k}(X^{\rmI}_{t_k})}^2\Big]\eqsp.
    \end{equation}
\end{remark}
Under such assumptions, we derive an upper bound on the KL divergence between the data distribution $\nustar$ and the output of the DFM \eqref{model}:
\begin{theorem}\label{theo_no_early}
    Consider a uniform partition of $[0,1]$ with a constant stepsize $h_k \equiv h$, $h=1/ N_h>0$, for $N_h \in \nsets$ and consider the corresponding process $(\app_t)_{t\in [0,1]}$ defined in \eqref{model}. Assume \Cref{ass_data_no_early,ass_data_no_early_v2,ass_drift_approx}. Denoting by $\nu^{\theta^{\star}}_1$ the distribution of $\app_1$, we have that

    \begin{equation}\label{convergence_bound_no_early}
    \begin{split}
        \KL(\nustar|\nu^{\theta^{\star}}_1 )&\lesssim \varepsilon^2 +h(h^{1/8}+1)\Big(d^4+ \moment[8]{\mu}+\moment[8]{\nustar}+\norm{\nabla \log \tilde{\pi}}_{\mrl^8(\pi)}^8\\
    &+ \norm{\nabla \log \mu}_{\mrl^8(\mu)}^8
    +  \norm{\nabla \log \nustar}_{\mrl^8(\nustar)}^8\Big)\eqsp.
    \end{split}
    \end{equation}
  \end{theorem}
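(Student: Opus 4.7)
The plan is to reduce the bound on $\KL(\nustar \mid \nu^{\theta^\star}_1)$ to a path-space Kullback-Leibler divergence via the Markovian projection and Girsanov's theorem, and then to split the resulting $\rmL^2$ drift mismatch into an approximation error controlled by \Cref{ass_drift_approx} and a discretization error controlled by moment estimates. First, \Cref{theo_on_markovian_projection} gives $\XintM_1 \eqlaw \Xint_1 \sim \nustar$, so the data processing inequality applied to the evaluation map at $t=1$ yields $\KL(\nustar\mid \nu^{\theta^\star}_1) \le \KL(\interMD(\pi,\bbb)\mid \mathrm{Law}(\app))$. Both path laws share the initial distribution $\mu$ and the constant diffusion coefficient $\sqrt{2}$, so Girsanov's theorem, applied on $[0,1-\delta]$ and extended to $[0,1]$ by letting $\delta\downarrow 0$ to handle the singularity of $\tbZ_t$ at $t=1$, gives
\[
\KL(\nustar\mid \nu^{\theta^\star}_1) \le \tfrac{1}{4}\,\PE\int_0^1 \norm{\tbZ_t(\XintM_t) - s_{\theta^\star}(t_k, \XintM_{t_k})}^2 \rmd t\eqsp,
\]
where $t_k$ denotes the largest grid-point not exceeding $t$.

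A triangle inequality with pivot $\tbZ_{t_k}(\XintM_{t_k})$, combined with the identity in law $\XintM_{t_k}\eqlaw \Xint_{t_k}$ from \Cref{theo_on_markovian_projection}, decomposes the right-hand side into $2\varepsilon^2$ (via \Cref{ass_drift_approx}) plus twice the discretization error
\[
\mathcal{D}(h) = \sum_{k=0}^{N_h-1}\int_{t_k}^{t_{k+1}}\PE\!\left[\norm{\tbZ_t(\XintM_t) - \tbZ_{t_k}(\XintM_{t_k})}^2\right]\rmd t\eqsp,
\]
which must be bounded by the right-hand side of \eqref{convergence_bound_no_early}. To handle $\mathcal{D}(h)$, I would apply Itô's formula component-wise to $t\mapsto \tbZ_t(\XintM_t)$. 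Writing $\mathcal{A}_s = \tbZ_s\cdot\nabla + \Delta$ for the generator of $\XintM$,
\[
\tbZ_t(\XintM_t) - \tbZ_{t_k}(\XintM_{t_k}) = \int_{t_k}^t (\partial_s\tbZ_s + \mathcal{A}_s\tbZ_s)(\XintM_s)\rmd s + \sqrt{2}\int_{t_k}^t D_x\tbZ_s(\XintM_s)\rmd B_s\eqsp.
\]
Cauchy-Schwarz on the finite-variation part and Itô's isometry on the martingale part, summed in $k$ and integrated in $t$, show that $\mathcal{D}(h)$ is of order $h$ times the time integrals of $\PE\norm{\partial_s\tbZ_s + \mathcal{A}_s\tbZ_s}^2$ and $\PE\norm{D_x\tbZ_s}_{\mathrm{op}}^2$ under the law of $\XintM$.

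The main obstacle, and the crux of the argument, is to bound these two quantities on $[0,1)$ in terms of the quantities on the right-hand side of \eqref{convergence_bound_no_early} while controlling their singular behaviour as $s\uparrow 1$. Starting from $\tbZ_t(x) = (1-t)^{-1}(\PE[\Xint_1\mid \Xint_t=x]-x)$ and the marginal formula \eqref{marginals_interpolant}, I would differentiate under the integral in $t$ and $x$ and apply Gaussian (Tweedie-type) integration-by-parts on the conditional law of $(\Xint_0,\Xint_1)$ given $\Xint_t$, to re-express $D_x\tbZ_s$, $\Delta_x\tbZ_s$ and $\partial_s\tbZ_s + \mathcal{A}_s\tbZ_s$ as conditional expectations of polynomials in $\nabla\log\tilde{\pi}(\Xint_0,\Xint_1)$, $\nabla\log\mu(\Xint_0)$ and $\nabla\log\nustar(\Xint_1)$, with prefactors rational in $t$ and $1-t$. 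Hölder's inequality combined with the $\mrl^8$ bounds of \Cref{ass_data_no_early_v2} and the 8-moment bound of \Cref{ass_data_no_early} should then yield, for some $\alpha\in[0,1)$,
\[
\PE\norm{\partial_s\tbZ_s + \mathcal{A}_s\tbZ_s}^2 + \PE\norm{D_x\tbZ_s}_{\mathrm{op}}^2 \lesssim (1-s)^{-\alpha}\,C_\star\eqsp,
\]
where $C_\star$ denotes the parenthesis on the right-hand side of \eqref{convergence_bound_no_early}. The $d^4$ factor originates from the vector Laplacian $\Delta_x\tbZ_s$ (trace of a $d\times d$ Hessian, squared), whereas the correction $h^{1/8}$ arises from integrating the boundary singularity $(1-s)^{-\alpha}$ on the last sub-interval $[1-h,1]$, using Hölder with exponent $8$ against the $\mrl^8$ score norms. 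Plugging these bounds into the expression for $\mathcal{D}(h)$ and adding the approximation estimate $2\varepsilon^2$ closes the proof.
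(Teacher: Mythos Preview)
Your opening reduction (data processing, Girsanov on $[0,1-\epsilon]$, triangle inequality, and It\^o's formula for $\tbZ_t(\XintM_t)$) matches the paper exactly. The gap is in how you propose to control the two resulting integrals, and it is a real one.

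First, the claim that $\PE\norm{D_x\tbZ_s}_{\mathrm{op}}^2+\PE\norm{(\partial_s+\mathcal A_s)\tbZ_s}^2\lesssim (1-s)^{-\alpha}C_\star$ with some $\alpha<1$ is not correct under \Cref{ass_data_no_early_v2}. After one Gaussian integration by parts, $D_x\tbZ_s$ still carries a factor $(X^{\rmI}_1-X^{\rmI}_s)/(1-s)$ whose second moment is of order $(1-s)^{-1}$, so $\int_0^1\PE\norm{D_x\tbZ_s}^2\rmd s$ diverges. The paper does \emph{not} bound this integral directly: it applies It\^o's formula a second time, to $\norm{\tbZ_t(\XintM_t)}^2$, and reads off
\[
2\int_{t_k}^{t}\PE\norm{D_x\tbZ_s}^2\rmd s
=\PE\norm{\tbZ_{t}}^2-\PE\norm{\tbZ_{t_k}}^2
-2\int_{t_k}^{t}\PE\langle\tbZ_s,(\partial_s+\foG_s)\tbZ_s\rangle\rmd s\eqsp,
\]
which trades the Jacobian for the reciprocal characteristic plus boundary terms that are uniformly bounded by $\norm{\nabla\log\tilde\pi}_{\mrl^2(\pi)}^2$.

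Second, you only flag the singularity at $s=1$, but $(\partial_s+\foG_s)\tbZ_s$ is singular at \emph{both} endpoints. The worst pieces (the paper's $A^5_s,A^6_s$) contain $\norm{X^{\rmI}_s-X^{\rmI}_0}^2/s^2$, whose Brownian part is of order $d/s$ and would make $\int_0^1\PE\norm{A^5_s}^2\rmd s$ diverge. Integration by parts alone does not remove this. The paper's device is to write $X^{\rmI}_s-X^{\rmI}_0$ via the forward and backward SDE representations of \Cref{ne_auxiliary_lemma} as $\overrightarrow{f}+\overrightarrow{g}$ (or $\overleftarrow{f}+\overleftarrow{g}$), observe that the Brownian increment $g$ is independent of the earlier path, and use that $A^5,A^6$ have a conditional-covariance structure so that the pure $\norm{g}^2$ contribution cancels exactly. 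Only cross terms $\langle f,g\rangle$ and $\norm{f}^2$ survive, and these become integrable once one weights by $\uprho(s)=\min(s,1-s)^{7/8}$ through a change of measure on each subinterval $[t_k,t_{k+1}]$.

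This change of measure is also the true source of the $h^{1/8}$: it is $Z_k\lesssim h^{1/8}$ for every $k$, coming from Jensen's inequality against $\lambda_k^h(\rmd s)=\uprho(s)^{-1}\rmd s/Z_k$, not from integrating a singularity only on the last interval. The exponent $8$ in the assumptions is forced by the number of Young/Cauchy--Schwarz splits needed to separate $f$ from $g$ after the cancellation.
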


\begin{remark}
\label{rem:horizon}
Under almost the same conditions as \Cref{theo_no_early}, \ie, \Cref{ass_data_no_early} \Cref{ass_data_no_early_v2}, \Cref{ass_drift_approx}, replacing $\tilde{\pi}$ in \eqref{eq:def_tilde_pi} by 
\begin{equation}
  \label{eq:def_tilde_pi_T}
  \tilde{\pi}_T(x_0,x_1) = \frac{1}{p_{T}(x_1|x_0)}\frac{ \rmd \pi }{\rmd \Leb^{2d}}(x_0,x_1)\eqsp,
\end{equation}
 our proofs apply also to DFM using a time horizon $T>0$ and the Brownian bridge on $\ccint{0,T}$. In particular, we would have obtained similar bounds as the ones derived in \Cref{theo_no_early}  but with a factor $\max(1,T^8)$ in front of the second addend.
\end{remark}
\begin{remark}
Choosing, in \Cref{theo_no_early}, \begin{equation}
    N_h=\frac{d^4+ \moment[8]{\mu}+\moment[8]{\nustar}+\norm{\nabla \log \tilde{\pi}}_{\mrl^8(\pi)}^8+ \norm{\nabla \log \mu}_{\mrl^8(\mu)}^8
    +  \norm{\nabla \log \nustar}_{\mrl^8(\nustar)}^8}{\varepsilon^2}\eqsp
\end{equation} makes the approximation error of order $\bigO( \varepsilon^2)$ and the complexity of order $\bigO( \varepsilon^{-2})$.
\end{remark}

  Using an early-stopping procedure, we obtain in the case  $\pi$ is the independent coupling:
\begin{theorem}\label{theo_early}
    Fix $0<\delta<1/2$.     Consider a uniform partition of $[0,1]$ with a constant stepsize $h_k \equiv h$, $h= 1/ N_h>0$, for $N_h \in \nsets$ and consider the corresponding process $(\app_t)_{t\in [0,1]}$ defined in \eqref{model}.  Assume \Cref{ass_data_no_early}, \Cref{ass_drift_approx} and $\pi= \mu \otimes \nustar$ to be the independent coupling. Suppose in addition that $\mu$ is absolutely continuously with respect to the Lebesgue measure, $ \log \rmd \mu /\rmd \Leb^d$ is continuously differentiable  and satisfies $\norm{\nabla \log \mu}_{\mrl^8(\mu)}^8 < \plusinfty$. 
    
    Then, denoting by $\nustar_{1-\delta}$ and $\nu^{\thetas}_{1-\delta}$ the distribution of $\XintM_{1-\delta}$ and $\app_{1-\delta}$ respectively, we have that
    \begin{equation}\label{convergence_bound_early}
        \KL(\nustar_{1-\delta} |\nu^{\thetas}_{1-\delta} )\lesssim \varepsilon^2 +h(h^{1/8}+1)\Big( \frac{d^4}{\delta^4} + \moment[8]{\mu}+\moment[8]{\nustar}\frac{1}{\delta^8} +\norm{\nabla \log \mu}_{\mrl^8(\mu)}^8\Big)\eqsp.
        \end{equation}
\end{theorem}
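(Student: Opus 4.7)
The plan is to follow the same blueprint as for \Cref{theo_no_early}, but to exploit both the early stopping at time $1-\delta$ and the independent coupling $\pi=\mu\otimes\nustar$ in order to remove any requirement on the regularity of $\nustar$ and of $\pi$. As a first step, I would combine the data-processing inequality with Girsanov's theorem on the time interval $\ccint{0,1-\delta}$. Since the Markovian projection $(\XintM_t)$ and the DFM scheme $(\app_t)$ share the diffusion coefficient $\sqrt{2}$ and initial law $\mu$, this gives
\begin{equation*}
\KL(\nustar_{1-\delta}|\nu^{\thetas}_{1-\delta})
\leq \frac{1}{2}\sum_{k:\,t_k<1-\delta}\int_{t_k}^{t_{k+1}\wedge (1-\delta)}
\PE\Big[\normLigne{\tbZ_t(\XintM_t)-s_{\thetas}(t_k,\XintM_{t_k})}^2\Big]\rmd t.
\end{equation*}
Inserting $\pm \tbZ_{t_k}(\XintM_{t_k})$ and applying the triangle inequality splits the integrand into a drift-approximation piece, controlled by $\varepsilon^2$ through \Cref{ass_drift_approx}, and a time-discretization piece of the form $\PE[\normLigne{\tbZ_t(\XintM_t)-\tbZ_{t_k}(\XintM_{t_k})}^2]$.

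The bulk of the work then consists in establishing quantitative space and time regularity estimates for $\tbZ_t$ on $\ccint{0,1-\delta}$. Using \Cref{drift_as_conditional_expectation} with $\bZ\equiv 0$, one has $\tbZ_t(x)=\PE[(X_1^{\rmI}-x)/(1-t)\mid X_t^{\rmI}=x]$. When $\pi=\mu\otimes\nustar$, the conditional density of $(X_0^{\rmI},X_1^{\rmI})$ given $X_t^{\rmI}=x$ factorises as $\mu\otimes\nustar$ reweighted by the Gaussian kernel $\exp(-\normLigne{x-(1-t)x_0-tx_1}^2/(4t(1-t)))$. Differentiating under the conditional expectation yields Gibbs-type identities for $\nabla_x\tbZ_t$ and $\partial_t\tbZ_t$. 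My strategy is to integrate by parts in $x_0$ against $\mu$, which transfers the $x_0$-derivatives onto $\nabla\log\mu$, while keeping the derivatives in $x_1$ act on the Gaussian factor; the latter produce $1/(1-t)$ factors but never the score of $\nustar$, because the positivity of $2t(1-t)\geq 2(1-\delta)\delta$ on $\ccint{0,1-\delta}$ gives sufficient Gaussian smoothing. This is the mechanism by which the early stopping and the product structure together replace the $\mrl^8(\nustar)$-condition on $\nabla\log\nustar$ and the $\mrl^8(\pi)$-condition on $\nabla\log\tilde\pi$ (used in \Cref{theo_no_early}) by polynomial factors in $1/\delta$ combined with moment/score conditions on $\mu$ and $\nustar$ alone. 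After Cauchy--Schwarz and Hölder, the targeted estimate has the schematic form
\begin{equation*}
\PE\Big[\normLigne{\nabla_x\tbZ_t(\XintM_t)}_{\mathrm{op}}^2\Big]^{1/2}
+\PE\Big[\normLigne{\partial_t\tbZ_t(\XintM_t)}^2\Big]^{1/2}
\lesssim \frac{\mathcal{C}(\mu,\nustar,d)}{(1-t)^{\alpha}}
\end{equation*}
for some explicit exponent $\alpha$, with $\mathcal{C}(\mu,\nustar,d)$ polynomial in $\moment[8]{\mu}^{1/8}$, $\moment[8]{\nustar}^{1/8}$, $\normLigne{\nabla\log\mu}_{\mrl^8(\mu)}$ and $d^{1/2}$.

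With these regularity estimates in hand, I would combine a short-time Itô moment bound $\PE[\normLigne{\XintM_t-\XintM_{t_k}}^2]\lesssim h(d+\PE[\normLigne{\tbZ_{t_k}(\XintM_{t_k})}^2])$ with the space-time bounds above to control $\PE[\normLigne{\tbZ_t(\XintM_t)-\tbZ_{t_k}(\XintM_{t_k})}^2]$ by $h$ times a rational function of $(1-t_k)^{-1}$. Summing in $k$ and integrating on $\ccint{0,1-\delta}$ converts the $(1-t_k)^{-p}$ weights into $\delta^{-p+1}$ factors, which matches the $\delta^{-4}$ and $\delta^{-8}$ dependencies displayed in \eqref{convergence_bound_early}; the additional $h^{1/8}$ factor is produced, as in the proof of \Cref{theo_no_early}, by a refined Hölder argument on the spatial increment that trades a power of $h$ against higher moments of the score and of $\XintM_{t_k}$. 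The main obstacle is the temporal-regularity estimate for $\tbZ_t$: since $\tbZ_t$ is defined only implicitly through a conditional expectation, $\partial_t\tbZ_t$ generates several competing contributions coming from the explicit $1/(1-t)$ factor, from the time-dependence of the Gaussian kernel and from the normalising factor $p_t^{\rmI}$, and isolating the correct $1/(1-t)$ blow-up without producing higher-order singularities or dependence on $\nabla\log\nustar$ is precisely the point where the independent coupling plays a decisive role.
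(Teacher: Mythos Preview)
Your approach is genuinely different from the paper's, and it contains a real gap. The paper does not redo any regularity analysis on $\ccint{0,1-\delta}$; instead it observes that $(X^{\rmI}_t)_{t\in\ccint{0,1-\delta}}$ is itself a stochastic interpolant between $\mu$ and $\nustar_{1-\delta}$ for the coupling $\pi_{1-\delta}=\mathrm{Law}(X^{\rmI}_0,X^{\rmI}_{1-\delta})$ and the Brownian bridge on $\ccint{0,1-\delta}$, verifies that the triple $(\mu,\nustar_{1-\delta},\pi_{1-\delta})$ satisfies \Cref{ass_data_no_early} and \Cref{ass_data_no_early_v2} (with $\tilde\pi$ replaced by $\tilde\pi_{1-\delta}$), and then invokes \Cref{theo_no_early} as a black box. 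The smoothing inherent in $\nustar_{1-\delta}$ and $\pi_{1-\delta}$ makes the score norms $\norm{\nabla\log\nustar_{1-\delta}}_{\mrl^8}$ and $\norm{\nabla\log\tilde\pi_{1-\delta}}_{\mrl^8}$ automatically finite, with explicit bounds of order $\delta^{-8}$ and $\delta^{-4}$ in terms of $\moment[8]{\mu}$, $\moment[8]{\nustar}$, $\norm{\nabla\log\mu}_{\mrl^8(\mu)}$ and $d$; plugging these into \eqref{convergence_bound_no_early} yields \eqref{convergence_bound_early} directly. No new stochastic-calculus argument is needed, and the independent coupling is used only to compute $\pi_{1-\delta}$ explicitly and check these score bounds.

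The gap in your route is the step where you ``combine a short-time It\^o moment bound $\PE[\normLigne{\XintM_t-\XintM_{t_k}}^2]\lesssim h(\cdots)$ with the space-time bounds'' on $\nabla_x\tbZ_t$ and $\partial_t\tbZ_t$ to control $\PE[\normLigne{\tbZ_t(\XintM_t)-\tbZ_{t_k}(\XintM_{t_k})}^2]$. A moment bound on $\nabla_x\tbZ_t(\XintM_t)$ does \emph{not} allow you to pass from $\normLigne{\XintM_t-\XintM_{t_k}}$ to $\normLigne{\tbZ_t(\XintM_t)-\tbZ_t(\XintM_{t_k})}$: that would require a genuine Lipschitz constant for $\tbZ_t$, which is precisely the kind of hypothesis the paper avoids. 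The correct mechanism (used in the proof of \Cref{theo_no_early}) is It\^o's formula applied directly to $\tbZ_t(\XintM_t)$, which produces the reciprocal characteristic $(\partial_s+\foG_s)\tbZ_s=\partial_s\tbZ_s+\langle D_x\tbZ_s,\tbZ_s\rangle+\Delta_x\tbZ_s$ in the drift and $D_x\tbZ_s$ in the martingale part; note that this requires controlling the \emph{Laplacian} $\Delta_x\tbZ_s$ as well, which your list of regularity estimates omits. Your asymmetric integration-by-parts idea (transfer $x_0$-derivatives onto $\nabla\log\mu$, keep $x_1$-derivatives on the heat kernel) is sound and could in principle be pushed through the full computation of $(\partial_s+\foG_s)\tbZ_s$ on $\ccint{0,1-\delta}$, but this amounts to re-deriving the estimates of \Cref{sec:proof-crefth} from scratch for the early-stopped problem; the paper's reduction sidesteps all of it. Finally, the $h^{1/8}$ factor in \eqref{convergence_bound_early} does not come from a H\"older trade-off on the spatial increment as you suggest, but from the auxiliary weight $\uprho(s)^{-1}=s^{-7/8}\vee(1-s)^{-7/8}$ introduced in \eqref{Auxiliary_measure_on_time}, whose normalising constants $Z_k\lesssim h^{1/8}$ propagate through the Jensen step \eqref{bd:first_addend}.
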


\begin{remark}
    Note that, as a consequence of \eqref{interpolant_in_linear_form}, $
        \wasserstein_2^2(\nustar, \nustar_{1-\delta}) $ is of order $\delta$, where we denote by $\wasserstein_2$ the $2$-Wasserstein distance defined as $\wasserstein_2(\nustar,\nustar_{1-\delta}) = \inf_{\zeta \in\Pi(\nustar,\nustar_{1-\delta})} \int \norm{y-x}^2 \rmd \zeta(x,y)$. Therefore, if we choose  $\delta = \bigO(\sqrt{ \varepsilon})$, for the second term in \eqref{convergence_bound_early} to be of order $\varepsilon^2$, we have to consider a stepsize $h = \bigO(\min(\varepsilon^4/d^{4}, \varepsilon^{6} ))$ . This leads to a computational complexity of order $1/h =\bigO(\max(d^4 \varepsilon^{-4},\varepsilon^{-6})).$
\end{remark}

\subsection{Related works and comparison with existing literature.}\label{section_on_literature}

FMs stand at the forefront of innovation in generative modeling, offering a practical solution to the longstanding challenge of bridging two arbitrary distributions within a finite time interval. Their consequent immense potential has prompted substantial research efforts aimed at providing a theoretical explanation for their effectiveness and has put SGMs and Probability Flow ODEs all in perspective. In this section we report and discuss previous researches on FMs, SGMs and Probability Flow ODEs with the purpose of highlighting the links and differences with our work and contextualizing our contribution. 

\subsubsection*{Non-early-stopping setting.} In the context of FMs, \cite{albergo2022building} and \cite{benton2023error} seek convergence guarantees in 2-Wasserstein distance. Both consider more general designs for the stochastic interpolant and a deterministic sampling procedure, rather than a stochastic one. However, both works rely on some regularity condition on the approximated flow velocity filed, \ie, that it is Lipschitz. Namely, \cite{albergo2022building} works under a $K$-Lipschitz (uniform in time and space) assumption on the estimator of the exact flow velocity field. How such assumption pertains to the flow matching framework for generative modeling is not articulated and remains unclear. On the other hand, \cite{benton2023error} assumes the estimator of the exact flow velocity field to be $L_t$-Lipschtz for any $t\in [0,1]$ and discuss in \cite[Theorem 2]{benton2023error} how such assumption relate to the setting : under the additional assumption \cite[Assumption 4]{benton2023error}, the true flow velocity field is proven to be $L_t$-Lipschitz in space for any $t\in [0,1]$. Therefore, \cite{benton2023error} enhances the findings of \cite{albergo2022building}. However, \cite[Assumption 4]{benton2023error}  is not an usual conditions considered in papers about convergence guarantees for SGM and it is unclear which type of distributions satisfy \cite[Assumption 4]{benton2023error}. Moreover, both works \cite{benton2023error,albergo2022building} do not take into account the discretization error in their analysis.

In the context of Probability Flow ODEs, \cite{li2024towards} and \cite{gao2024ODE} investigate the performance of such models in Total Vartiation distance and 2-Wasserstein distance. In contrast to \cite{benton2023error} and \cite{albergo2022building}, \cite{li2024towards} and \cite{gao2024ODE} examine the error coming from the (prerequisite when implementing an algorithm) introduction of a time-discretization scheme. However, once again, the provided bounds work under smoothness assumptions either on the score or on its estimator. More precisely, the result reported in \cite{li2024towards} depends on a small $\mrl^2$-Jacobian-estimation error assumption, besides  a classical small $\mrl^2$-score-estimation error assumption. As for \cite{gao2024ODE}, they assume the score to be Lipschitz in time and the data to be smooth and log-concave. In contrast, our result do not make such assumptions. Finally, to the best of our knowledge, the recent work \cite{conforti2023score} represents the state of art in the context of SGMs without early-stopping procedure: \cite[Theorem 2.1]{conforti2023score} provides a sharp bound in $\KL$ divergence between the data distribution and the law of the SGM both in the overdamped and kinetic setting under the sole assumptions of an $\mrl^2$-score-approximation error and that the data distribution has finite Fisher information with respect to the standard Gaussian distribution. All previous results are obtained assuming either some Lipschitz condition on the score and/or its estimator (\cite{chen2022sampling}, \cite{chen2023improved}) or a manifold hypothesis on the data distribution (\cite{de2022convergence}). However, we underline that the FM framework enables to consider a significantly wider range of interpolating paths compared to SGMs and to avoid the trade-off concerning the time horizon $T$ which is inevitable when dealing with SGMs.

\subsubsection*{Early-stopping procedure.} The recent work \cite{gao2024cnf} establishes convergence guarantees in 2-Wasserstein distance for FMs based on a deterministic sampling procedure. However, the results of \cite{gao2024cnf} requires to interpolate with a Gaussian distribution and applies only to data distributions which either have a bounded support, are strongly log-concave, or are the convolution between a Gaussian and an other probability distribution supported on an Euclidean Ball. In contrast, for our bound to hold true we only need the data distribution $\nustar$ and its score to have finite eight-order moment. Furthermore, even if \cite{gao2024cnf} goes into depth when dealing with the statistical analysis of the estimator and the $\mrl^2$-estimation error, the entire investigation therein pursued depends on the choice of ReLUnetworks with Lipschitz regularity control to approximate the velocity field. They motivate such choice by proving (see \cite[Theorem 5.1]{gao2024cnf}) Lipschitz properties in time and space of the true velocity field under the aforementioned assumptions on the data. On the contrary, we do not assume any regularity on the estimator of the mimicking drift. In the context of Probability Flow ODEs, \cite{chen2023the} provides bound in Total Variation distance, but assuming both the score and its estimator to be Lipschitz in space. So, also in the early-stopping regime, our bound improves previously obtained one.

To conclude, in the context of SGMs, \cite{chen2023improved,conforti2023score,benton2023linear} are able to cover any data distribution with bounded second moment at the cost of using exponentially decreasing step-sizes. However, \cite[Corollary 2.4]{conforti2023score} and \cite{benton2023linear} improves upon \cite[Theorem 2.2]{chen2023improved}: the term that takes track of the time-discretization error is linearly dependent on the dimension $d$ in the former works, whereas quadratically dependent on $d$ in the latter.

\subsection{The proposed methodology.} \label{section_sketch}
In what follows, we provide a sketch of the proofs of \Cref{theo_no_early} and \Cref{theo_early} in order to outline and delineate our methodology. 

The starting point of our proof of \Cref{theo_no_early}  is the following (by now) standard \cite{chen2022sampling,chen2023improved,conforti2023score} decomposition of the KL divergence which is derived from Girsanov theorem:
% \begin{theorem}\label{theo_kl_deco}
%     Let $\mu_1, \mu_2\in \mathcal{P}(\rset^d)$ and, for $i=1,2$, let $(X^i_t)_{t\in[0,1]}$ be the solution to
%     \begin{equation}
%         \rmd X^i_t= b^i_t(X^i_{[0,t]}) \rmd t + \sqrt{2} \rmd B_t\eqsp, \quad t\in [0,1]\eqsp, \quad X^i_0\sim \mu_i\eqsp,
%     \end{equation}where $X^i_{[0,t]}$ stands for the time evolution of the process up to time $t$. Then, if we set $\mathbb{P}^i:=\mathrm{Law}(X^i_{[0,1]})\in \mathcal{P}(\wiener),$ it holds
%     \begin{equation}
%         \KL(\mathbb{P}^2|\mathbb{P}^1)=\KL(\mu^2|\mu^1) + \PE\Big[\int_0^1 \norm{b_t^2-b_t^1}^2(X^2_{[0,t]})\rmd t\Big]\eqsp.
%     \end{equation}
% \end{theorem}
% In the case $(X^1_t)_{t\in[0,1]}=(\app_t)_{t\in[0,1]}$ and $(X^2_t)_{t\in[0,1]}=(\XintM_t)_{t\in[0,1]}$, the above result leads to
\begin{align}
      \KL(\nustar|\nu^{\theta^{\star}}_1)& \le \KL(\interM{\pi,\bbb}| \mathrm{Law}(\app_{[0,1]})) \lesssim  \sum_{k=0}^{N-1}\int_{t_k}^{t_{k+1}} \PE \Big[\norm{ s_{\theta^\star}(t_k, \XintM_{t_k})- \fob_{t}(\XintM_{t})}^2\Big]\rmd t
\end{align}
\begin{align}\label{sketch_proof_kl_decomposition}
\phantom{      \KL(\nustar|\nu^{\theta^{\star}}_1)}
  &\lesssim \varepsilon^2 +\sum_{k=0}^{N-1}\int_{t_k}^{t_{k+1}} \PE \Big[\norm{ \fob_{t_k}(\XintM_{t_k})- \fob_{t}(\XintM_{t})}^2\Big]\rmd t\eqsp,
\end{align}
where, for  the first inequality, we used the  data processing inequality \cite[Lemma 1.6]{nutz2021introduction} and the last inequality follows from the triangle inequality and the assumption \Cref{ass_drift_approx}. In order to conclude we should bound the last term. In this regard, our strategy relies on the study of the time evolution of the mimicking drift, \ie,  
\begin{equation}\label{sketch_proof_ito_b}
    \rmd \fob_t(\XintM_t)= (\partial_t +\foG_t)\fob_t(\XintM_t) \rmd t + \sqrt{2} D_x \fob_t(\XintM_t) \rmd B_t\eqsp,\quad t\in [0,1]\eqsp,
\end{equation}where $\foG$ denotes the generator of $\XintM$. Note that \eqref{sketch_proof_kl_decomposition}, \eqref{sketch_proof_ito_b} and Ito isometry yield to 

\begin{equation}\label{sketch_of_the_proof_dec_via_generator}
\begin{split}
\KL(\nustar|\nu^{\theta^{\star}}_1)&\lesssim \varepsilon^2 +\sum_{k=0}^{N-1}\int_{t_k}^{t_{k+1}} \Big\{
 \PE\Big[\norm{\int_{t_k}^t (\partial_s +\foG_s)\fob_s(\XintM_s) \rmd s }^2\Big]\\
 &+ 2 \int_{t_k}^{t_{k+1}} \PE\Big[\norm{D_x \fob_s(\XintM_s)}^2\Big] \rmd s \Big\}\rmd t\eqsp.
    \end{split}
\end{equation}Hereunder, to clarify our reasoning, we make three remarks. The first one is that, exploiting the Ito's decomposition of the norm squared of $\fob_t(\XintM_t)$, we can bound the martingale part in \eqref{sketch_proof_ito_b} via the drift part. A similar observation was made in \cite[Proposition 3.2]{conforti2023score}. The second one is that the heat kernel \eqref{def_heat_kernel} satisfies the heat equation. Thus, $(\partial_s+\foG_s)\tbZ_s(x)$ writes as a sum of terms of the form
\begin{equation}\label{sketch_of_the_proof_general_term}
    \frac{\int_{\rset^{2d}} \nabla_x^\alpha p_s(x|x_0) \nabla_x^\beta p_{1-s}(x_1|x)  \tilde{\pi}(x_0,x_1) \rmd x_0 \rmd x_1}{p^I_s(x)}\eqsp,
\end{equation}with $0\le\alpha, \beta \le 2$. The third one is that, being for any $t\in [0,1]$ and $j=1,2$, 
\begin{equation}
    \nabla_x^j p_t(y|x) \sim  t^{-j}\eqsp, \text{ as } t \to 0\eqsp,
  \end{equation}
  $\nabla_x^\alpha p_s(x|x_0)$ is not integrable in $s=0$ and $\nabla_x^\beta p_{1-s}(x_1|x)$ is not integrable in $s=1,$ for $\alpha, \beta=1,2.$ The main and challenging point of our proof consists therefore in deriving minimal assumptions on the data under which the integral between $t=0$ and $t=1$ of \eqref{sketch_of_the_proof_general_term} is finite.

  Regarding the proof of \Cref{theo_early}, we consider the interpolated process $(\Xint_{t})_{t\in\ccint{0,\delta}}$ restricted to $\ccint{0,1-\delta}$. Denoting by $\pi_{1-\delta}$, the coupling between $\mu$ and $\nustar_{1-\delta}$ corresponding to the distribution of the couple $(X_0^{\rmI},X_{1-\delta}^{\rmI})$. By the property of the Brownian bridge, $(\Xint_{t})_{t\in\ccint{0,1-\delta}}$ is a stochastic interpolant resulting from $\pi_{1-\delta}$ and the Brownian bridge on $\ccint{0,1-\delta}$. Therefore based on \Cref{rem:horizon}, we only have to show $\nustar_{1-\delta}$ and $\pi_{1-\delta}$ satisfy \Cref{ass_data_no_early} and \Cref{ass_data_no_early_v2}, replacing $\tilde{\pi}$ in \Cref{ass_data_no_early_v2} by $\tilde{\pi}_{1-\delta}$ defined in \eqref{eq:def_tilde_pi_T}. More precisely,  we show that they hold and that
  \begin{align}
    \label{eq:6}
    \norm{\nabla \log \tilde{\pi}_{1-\delta}}_{\mrl^8(\pi_{1-\delta})}^8 &\leq \norm{\nabla\log\mu}^8_{\mrl^8(\mu)} + \moment[8]{\mu}\frac{1}{(1-\delta)^8}+ \moment[8]{\nustar}\frac{1}{\delta^8} +d^4\frac{1}{\delta^4(1-\delta)^4} \\
    \norm{\nabla \log \nustar_{1-\delta}}_{\mrl^8(\nustar_{1-\delta})}^8 & \leq  \moment[8]{\mu}\frac{1}{(1-\delta)^8}+ \moment[8]{\nustar}\frac{1}{\delta^8} +d^4\frac{1}{\delta^4(1-\delta)^4} \eqsp.
  \end{align}
\section{Conclusion}\label{section_conclusion}
In this work, we have investigated a Diffusion Flow Matching model built around the Markovian projection of the $d$-dimensional Brownian bridge between the data distribution $\nustar$ and the base distribution $\mu$. In particular, we have derived convergence guarantees in Kullback-Leibler divergence, which take account of all the sources of error - time-discretization error and drift-estimation error - that arise when implementing the model, and which hold under mild moments conditions on $\mu$, $\nustar$, the scores of $\mu$, $\nustar$ and the score of the coupling $\pi$ between $\mu$ and $\nustar$. However, there are several questions remaining open. First, it would be worthy to understand if we could lower the order of integrability of the score associated with $\mu,\nu$ and $\pi$. Second, it would be interesting to complement our analysis by a statistical analysis of DFM \eqref{model}, similarly to what have been achieved in  \cite{gao2024cnf} for a particular deterministic FM model. Finally, it would be valuable to obtain better dimension dependence with respect to the space dimension $d$ when applying early-stopping procedure. 

\appendix

\section{Postponed proofs}\label{post_proofs}
\subsection{The Markovian Projection}\label{section_markovian}
First, we introduce the set of assumptions under which \Cref{theo_on_markovian_projection} holds true. 

\begin{assumption}\label{regularity_transition_densities}
    For $t>s$, $(s,t,x,y) \mapsto p^{Y}_{t|s}(y|x)$ is continuously differentiable in the $t$ and $s$ variables and twice continuously differentiable in the $x$ and $y$ variables. Furthermore, $\partial_t p_{t|s}^Y(y|x)$, $\partial_s p_{t|s}^Y(y|x)$, $\nabla_x p_{t|s}^Y(y|x)$, $\nabla_y p_{t|s}^Y(y|x)$, $\nabla^2_x p_{t|s}^Y(y|x)$, $\nabla^2_y p_{t|s}^Y(y|x)$ are bounded. 
\end{assumption}
\Cref{regularity_transition_densities} ensures that $(s,t,x,y) \mapsto p^{Y}_{t|s}(y|x)$ is enough regular to allow a series of algebraic manipulations.\\
\begin{assumption}\label{uniqueness_fp}
    $\tbZ^{Y}$ is a locally bounded Borel vector field on $\rsetd\times (0,1)$ such that, for at least one probability solution $\mu_t$ to the Fokker-Planck equation 
    \begin{equation}\label{fk_associated_with_tbZ}
        \partial_t \mu_t + \div (\tbZ_t^Y \mu_t) -\Delta \mu_t=0\eqsp, \quad t
        \in (0,1)\eqsp,\quad  \mu_0=\mu\eqsp,
    \end{equation}it holds $\int \|\tbZ^Y_t(x)\|\mu_t(\rmd x)\rmd t<+\infty.$
    % for any $x\in \rsetd$ (uniformly in time)
    % \begin{equation}
    %     \langle \tbZ^{Y}_t(x), x \rangle \lesssim 1+\norm{x}^2\eqsp.
    % \end{equation}
\end{assumption}
\Cref{uniqueness_fp} provides uniqueness of the solution to the Fokker Planck equation with drift field $\tbZ^{Y}$, see \cite[Theorem 9.4.3]{Bogachev2015FokkerplanckkolmogorovE}.\\

We are now ready to rigorously state and prove \Cref{theo_on_markovian_projection}:
\begin{theorem}\label{rigorous_version_theo_MP}
    Consider a $\pi \in \Pi(\mu,\nustar)$, $\bbq^{\bZ}$ associated with \eqref{eq:sde_v0} and the drift field defined in \eqref{def_mimicking_drift}. Under \Cref{uniqueness_fp,regularity_transition_densities}, the Markov process $(\XintM_t)_{t\in\ccint{0,1}}$ solution of \eqref{def_markovian_proj}
 is such that, for any $t \in [0,1)$, $\Xint_t \eqlaw  \XintM_t$.
\end{theorem}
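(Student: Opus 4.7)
The plan is to show that the family of densities $(p^{\rmI}_t)_{t\in (0,1)}$ defined by \eqref{marginals_interpolant} is a probability solution to the Fokker--Planck equation \eqref{fk_associated_with_tbZ} associated with the mimicking drift $\tbZ^Y$ and the initial condition $\mu$. Once this is established, the marginal laws of $\XintM$ -- which satisfy the same FP equation in the distributional sense by a standard It\^o-formula argument applied to the SDE \eqref{def_markovian_proj} -- must coincide with $p^{\rmI}_t\,\rmd x$ on $[0,1)$ by the uniqueness statement of \cite[Theorem 9.4.3]{Bogachev2015FokkerplanckkolmogorovE}, which is applicable thanks to \Cref{uniqueness_fp}.

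The heart of the argument is thus to check that $p^{\rmI}_t$ solves the correct PDE. I would first record that, under \Cref{regularity_transition_densities}, the transition density of \eqref{eq:sde_v0} obeys the forward Kolmogorov equation $\partial_t p^Y_{t|s}(y|x) = -\div_y(\beta(y)\, p^Y_{t|s}(y|x)) + \Delta_y p^Y_{t|s}(y|x)$ in the endpoint variable $(t,y)$ and the backward Kolmogorov equation $\partial_s p^Y_{t|s}(y|x) + \beta(x)\cdot \nabla_x p^Y_{t|s}(y|x) + \Delta_x p^Y_{t|s}(y|x) = 0$ in the starting point $(s,x)$, and that the uniform boundedness of the relevant partial derivatives lets me differentiate \eqref{marginals_interpolant} under the $\tilde{\pi}$-integral. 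Setting $f_t(x) = p^Y_{t|0}(x|x_0)$ and $g_t(x) = p^Y_{1|t}(x_1|x)$, the two Kolmogorov equations together with $\div(\beta f_t g_t) = g_t\div(\beta f_t) + \beta f_t\cdot \nabla g_t$ give
\begin{equation}
\partial_t(f_t g_t) = -\div(\beta f_t g_t) + g_t \Delta f_t - f_t \Delta g_t\eqsp.
\end{equation}
Using the elementary identity $g\Delta f - f\Delta g = \Delta(fg) - 2\div(f\nabla g)$ pointwise in $(x_0,x_1)$ and integrating against $\tilde{\pi}$,
\begin{equation}
\partial_t p^{\rmI}_t = -\div(\beta\, p^{\rmI}_t) + \Delta p^{\rmI}_t - 2\div\Big(\int f_t\,\nabla g_t\,\tilde{\pi}(\rmd x_0, \rmd x_1)\Big)\eqsp.
\end{equation}
Since $\nabla g_t = g_t\,\nabla_x \log p^Y_{1|t}(x_1|\cdot)$, the definition \eqref{def_mimicking_drift} identifies the last integral with $(\tbZ^Y_t - \beta)\,p^{\rmI}_t/2$, yielding $\partial_t p^{\rmI}_t = -\div(\tbZ^Y_t p^{\rmI}_t) + \Delta p^{\rmI}_t$, which is exactly \eqref{fk_associated_with_tbZ}; the initial condition $p^{\rmI}_0 = \mu$ follows by passing to the limit $t\to 0^+$ in \eqref{marginals_interpolant} together with $\tilde{\pi}(\rmd x_0,\rmd x_1)\, p^Y_{1|0}(x_1|x_0) = \pi(\rmd x_0,\rmd x_1)$ and $\pi \in \Pi(\mu,\nustar)$.

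The main obstacle I anticipate is technical: justifying the interchanges of derivatives and $\tilde{\pi}$-integrals and keeping the identification of $\int f_t\nabla g_t\,\tilde{\pi}$ valid in the classical sense. The score-like factor $\nabla_x\log p^Y_{1|t}$ is problematic as $t\to 1^-$ and $f_t$ concentrates as $t\to 0^+$, but these boundary issues lie outside the open interval $(0,1)$ on which \Cref{regularity_transition_densities} provides the required uniform control; the local-boundedness and $\mu_t$-integrability hypothesis on $\tbZ^Y$ in \Cref{uniqueness_fp} then completes the hypotheses needed to invoke \cite[Theorem 9.4.3]{Bogachev2015FokkerplanckkolmogorovE} and conclude uniqueness, hence the equality $\Xint_t \eqlaw \XintM_t$ for every $t \in [0,1)$.
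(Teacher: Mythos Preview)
Your proposal is correct and follows essentially the same approach as the paper: differentiate the marginal density \eqref{marginals_interpolant} in time, use the forward and backward Kolmogorov equations for $p^Y_{t|s}$, recognize the result as the Fokker--Planck equation with drift $\tbZ^Y$, and conclude by the uniqueness statement \cite[Theorem 9.4.3]{Bogachev2015FokkerplanckkolmogorovE} under \Cref{uniqueness_fp}. The only cosmetic difference is that the paper first writes the result as a continuity equation with velocity $v^Y_t$ (via the identity $g\Delta f - f\Delta g = \div(g\nabla f - f\nabla g)$), then adds $\nabla\log p^{\rmI}_t$ to reach the FP form and verifies the resulting drift equals $\tbZ^Y_t$, whereas your use of $g\Delta f - f\Delta g = \Delta(fg) - 2\div(f\nabla g)$ lands directly on the FP equation; the two routes are algebraically equivalent.
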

\begin{proof}[Proof of \Cref{rigorous_version_theo_MP}:]
    We start by reminding that  $(s,t,x,y) \mapsto p_{t|s}^Y(y|x)$ satisfies for any $x,y\in \rsetd$ and $s,t \in [0,1]$ with $ s<t$ both the Fokker-Planck equation
    \begin{equation}
        \partial_t p_{t|s}^Y(y|x)+\div_y (p_{t|s}^Y(y|x) \bZ(x)) -\Delta_y p_{t|s}^Y(y|x)=0\eqsp, 
    \end{equation}and the Kolmogorov backward equation
    \begin{equation}
        \partial_s p_{t|s}^Y(y|x) +\langle \bZ(x), \nabla_x p_{t|s}^Y(y|x)\rangle +\Delta_x p_{t|s}^Y(y|x)=0\eqsp.
    \end{equation}If we exploit these well-known results, \eqref{marginals_interpolant} and \Cref{regularity_transition_densities}, we get that for $t\in (0,1)$
    \begin{equation}
        \begin{split}
            \partial_t p^{\rmI}_t(x)&= \partial_t \Big(\int_{\rset^{2d}} p_{t|0}^Y(x|x_0)p_{1|t}^Y(x_1|x) \tilde{\pi} (\rmd x_0, \rmd x_1)\Big) \\
            &=\int_{\rset^{2d}}\Big( \Delta_x p_{t|0}^Y(x|x_0)p_{1|t}^Y(x_1|x) -p_{t|0}^Y(x|x_0)\Delta_x p_{1|t}^Y(x_1|x) \Big)\tilde{\pi} (\rmd x_0, \rmd x_1)\\
            &\quad -\int_{\rset^{2d}} \div_x (p_{t|0}^Y(x|x_0) \bZ(x)) p_{1|t}^Y(x_1|x)\tilde{\pi} (\rmd x_0, \rmd x_1)\\
            &\quad - \int_{\rset^{2d}}\langle \bZ(x), \nabla_x p_{1|t}^Y(x_1|x)\rangle p_{t|0}^Y(x|x_0)\tilde{\pi} (\rmd x_0, \rmd x_1)\\
            & =\int_{\rset^{2d}} \Big(\Delta_x p_{t|0}^Y(x|x_0)p_{1|t}^Y(x_1|x)+ \nabla_x p_{t|0}^Y(x|x_0)\nabla_x p_{1|t}^Y(x_1|x) \Big)\tilde{\pi} (\rmd x_0, \rmd x_1) \\
            &\quad -  \int_{\rset^{2d}} \Big(\nabla_x p_{t|0}^Y(x|x_0)\nabla_x p_{1|t}^Y(x_1|x) + p_{t|0}^Y(x|x_0)\Delta_x p_{1|t}^Y(x_1|x) \Big)\tilde{\pi} (\rmd x_0, \rmd x_1)\\
            &\quad - \langle \bZ(x), \nabla_x p^\rmI_t(x)\rangle -\div_x \bZ(x) p^\rmI_t(x) \\
            &= \div_x \Big(\int_{\rset^{2d}} \{\nabla_x p_{t|0}^Y(x|x_0) p_{1|t}^Y(x_1|x) - p_{t|0}^Y(x|x_0)\nabla_x p_{1|t}^Y(x_1|x)\} \tilde{\pi} (\rmd x_0, \rmd x_1)\Big)\\
            &\quad - \div_x (\bZ(x) p^\rmI_t(x))\\ 
            &= \div_x \Big(- \bZ(x) p^\rmI_t(x)\\
            &\quad+ \int_{\rset^{2d}} \{\nabla_x\log p_{t|0}^Y(x|x_0) - \nabla_x\log  p_{1|t}^Y(x_1|x)\} p_{t|0}^Y(x|x_0) p_{1|t}^Y(x_1|x) \tilde{\pi} (\rmd x_0, \rmd x_1)\Big)\eqsp.
        \end{split}
    \end{equation}Therefore, if we set  
    \begin{equation}
        v^{Y}_t(x) = \frac{\int_{\rset^{2d}} \{\nabla_x\log p_{1|t}^Y(x_1|x) - \nabla_x\log  p_{t|0}^Y(x|x_0)\} p_{t|0}^Y(x|x_0) p_{1|t}^Y(x_1|x) \tilde{\pi} (\rmd x_0, \rmd x_1)}{p^{\rmI}_t(x)}+\bZ(x)\eqsp,
    \end{equation}we have proven that $p^{\rmI}_t(x)$ satisfies the following continuity equation
    \begin{equation}
        \partial_t p^{\rmI}_t(x)+ \div_x (v^{Y}_t(x) p^{\rmI}_t(x))=0\eqsp,\quad t\in (0,1)\eqsp, \eqsp x\in \rsetd\eqsp.
    \end{equation} Consequently, if we define 
    \begin{equation}
        b^{Y}_t(x) = v^{Y}_t(x) +\nabla_x \log p^{\rmI}_t(x)\eqsp,
    \end{equation}we have that, under \Cref{regularity_transition_densities}, $p^{\rmI}_t(x)$ satisfies the following Fokker-Planck equation 
    \begin{equation}\label{fokker_planck_projection}
         \partial_t p^{\rmI}_t(x)+ \div_x (b^{Y}_t(x) p^{\rmI}_t(x))-\Delta_x p^{\rmI}_t(x)=0\eqsp, \quad t\in (0,1)\eqsp, \eqsp x\in \rsetd\eqsp.
    \end{equation}So, $b_t(x)$ is a mimicking drift. It remains to show that $b^{Y}\equiv \fob^{Y}$: the thesis will then follows from the uniqueness of the solution to the Fokker Planck equation \eqref{fokker_planck_projection} under \Cref{uniqueness_fp}, see \cite[Theorem 9.4.3]{Bogachev2015FokkerplanckkolmogorovE}. To this aim, note that
    \begin{equation}
    \begin{split}
        \nabla_x \log p^{\rmI}_t(x) &= \frac{\int_{\rset^{2d}}\{ \nabla_x p_{t|0}^Y(x|x_0)p_{1|t}^Y(x_1|x) + p_{t|0}^Y(x|x_0)\nabla_x p_{1|t}^Y(x_1|x)\} \tilde{\pi} (\rmd x_0, \rmd x_1)}{p^{\rmI}_t(x)}\\
        &= \frac{\int_{\rset^{2d}}\{ \nabla_x \log p_{t|0}^Y(x|x_0) + \nabla_x \log  p_{1|t}^Y(x_1|x)\} p_{t|0}^Y(x|x_0) p_{1|t}^Y(x_1|x) \tilde{\pi} (\rmd x_0, \rmd x_1)}{p^{\rmI}_t(x)}\eqsp.
    \end{split}
    \end{equation}Therefore, for any $t\in [0,1)$ and $x\in \rsetd$, we have that
    \begin{equation}
    \begin{split}
        b^{Y}_t(x)= 2 \frac{\int_{\rset^{2d}} \nabla_x \log  p_{1|t}^Y(x_1|x) p_{t|0}^Y(x|x_0) p_{1|t}^Y(x_1|x) \tilde{\pi} (\rmd x_0, \rmd x_1)}{p^{\rmI}_t(x)}+\bZ(x)=\fob^{Y}_t(x)\eqsp.
        \end{split}
    \end{equation}
\end{proof}
\subsection{Preliminary results}\label{prelimiaries}
We begin this section with two remarks aiming at highlighting some of the properties of the heat kernels \eqref{def_heat_kernel} and some of their important consequences. 
\begin{remark}\label{remark_simmetry}
    It is well-known that $(s,x,y)\mapsto p_s(y|x)$ defined in \eqref{def_heat_kernel} is twice continuously differentiable in the space variables $x$ and $y$ and satisfies for $x,y\in \rsetd, s\in (0,1],$  \begin{equation}\label{simmetry_nabla_heat_kernel}
        \nabla_x p_s(y|x)= -\frac{x-y}{2s} p_s(y|x)=-\nabla_y p_s(y|x)\eqsp, 
    \end{equation}
    \begin{equation}\label{simmetry_hessian_heat_kernel}
        \nabla^2_x p_s(y|x)= -\frac{1}{2s} p_s(y|x)\Id+ \frac{(x-y)(x-y)^{\transpose}}{4s^2} p_s(y|x)=\nabla^2_y p_s(y|x)\eqsp, 
    \end{equation}
\begin{equation}\label{simmetry_delta_heat_kernel}
        \Delta_x p_s(y|x)= -\frac{d}{2s} p_s(y|x)+ \norm{\frac{x-y}{2s}}^2 p_s(y|x)=\Delta_y p_s(y|x)\eqsp.
    \end{equation}Moreover \eqref{def_heat_kernel} satisfies the heat equation, \ie, 
    \begin{equation}\label{fp_eq_heat_kernel}
        \partial_s p_s(y|x)=\Delta_x p_s(y|x)\eqsp, \quad s\in (0,1]\eqsp, \eqsp x,y\in \rsetd\eqsp.
    \end{equation}Thus, in particular, $(s,x,y)\mapsto p_s(y|x)$ is continuously differentiable in the time variable $s$.
\end{remark}
\begin{remark}\label{remark_on_hyp_mp}
In the case $\bZ\equiv 0$, \ie, $\bbq^{\bZ} = \bbb$, under \Cref{ass_data_no_early}, the conditions of \Cref{theo_on_markovian_projection} hold.
Indeed, as highlighted in \Cref{remark_simmetry}, $(s,x,y)\mapsto p_s(y|x)$ defined in \eqref{def_heat_kernel} is continuously differentiable in the time variable $s$ and twice continuously differentiable in the space variables $x$ and $y$. Also, using \Cref{simmetry_nabla_heat_kernel}, \Cref{simmetry_hessian_heat_kernel} and \eqref{def_heat_kernel}, it's straightforward to verify that  $\nabla_x p_{t|s}^Y(y|x)$, $\nabla_y p_{t|s}^Y(y|x)$, $\nabla^2_x p_{t|s}^Y(y|x)$, $\nabla^2_y p_{t|s}^Y(y|x)$ are bounded. Additionally, using \eqref{simmetry_delta_heat_kernel} and \eqref{fp_eq_heat_kernel}, it's easy to argue that also $\partial_t p_{t|s}^Y(y|x)$ and $\partial_s p_{t|s}^Y(y|x)$ are bounded. So \Cref{regularity_transition_densities} is verified and $p^\rmI_t$ solves the Fokker-Planck equation \eqref{fk_associated_with_tbZ}. Moreover $\tbZ$ is clearly locally bounded on $\rsetd\times (0,1)$ and, as a consequence of \eqref{simmetry_nabla_heat_kernel}, \eqref{interpolant_in_linear_form}, and Jensen inequality, it satisfies uniformly in time
\begin{equation}
\begin{split}
    \int_{\rsetd}\norm{\tbZ_t(x)}^2 p^\rmI_t(x) \rmd x&=  \int_{\rsetd} \norm{\frac{1}{p^{\rmI}_t(x)} \int_{\rset^{2d}}\frac{x_1-x} {1-t}p_{t}(x|x_0) p_{1-t}(x_1|x) \tilde{\pi} (\rmd x_0, \rmd x_1)}^2 p^\rmI_t(x) \rmd x\\
    &=\PE\Bigg[\norm{\PE\Bigg[\frac{X^\rmI_1-X^\rmI_t}{1-t}\Bigg|X^\rmI_t\Bigg]}^2\Bigg]\\
    &= \PE\Bigg[\norm{\PE\Bigg[X^\rmI_1-X^\rmI_0-\frac{\sqrt{2t}}{\sqrt{1-t}}\mathrm{Z}\Bigg|X^\rmI_t\Bigg]}^2\Bigg]\\
    &\lesssim \PE\Bigg[\norm{X^\rmI_1-X^\rmI_0-\sqrt{2t} \mathrm{Z}}^2\Bigg]\\
    &\lesssim \moment[2]{\mu}+\moment[2]{\nustar}+d\eqsp,
\end{split}
\end{equation}which is finite under \Cref{ass_data_no_early}. It follows that $\int \|\tbZ^Y_t(x)\|p^\rmI_t(x) \rmd x\rmd t<+\infty.$ So, \Cref{uniqueness_fp} is verified.
    
\end{remark}
Additionally, hereunder, we state three lemmas that will be crucial in the derivation of the bounds provided in \Cref{theo_no_early,theo_early}.
\begin{lemma}\label{lemma_on_moments_bounded}
    For any $p\ge 1$, they hold
    \begin{equation}
         \PE[\norm{X^{\rmI}_s-X^{\rmI}_0}^{2p}] \lesssim s^{2p} \moment[2p]{\mu}+s^{2p}\moment[2p]{\nustar}+d^{p}s^p(1-s)^p\eqsp, 
    \end{equation}and
    \begin{equation}
    \PE[\norm{X^{\rmI}_1-X^{\rmI}_s}^{2p}]\rmd s \lesssim (1-s)^{2p}\moment[2p]{\mu}+(1-s)^{2p}\moment[2p]{\nustar}+d^{p}s^p(1-s)^p\eqsp.
    \end{equation}
\end{lemma}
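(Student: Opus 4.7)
The plan is to exploit directly the linear-plus-Gaussian representation of the stochastic interpolant given in \eqref{interpolant_in_linear_form}, which expresses $X^{\rmI}_t$ as an explicit affine function of $(X^{\rmI}_0, X^{\rmI}_1)$ and an independent standard Gaussian vector. Since we only want moment bounds, no conditional or PDE-type argument is needed: triangle/Minkowski inequalities on $\norm{\cdot}^{2p}$, together with the elementary estimate $\PE[\norm{Z}^{2p}] \lesssim d^p$ for $Z \sim \gauss(0,\Id)$, will suffice.

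More precisely, starting from the identity $X^{\rmI}_s \eqlaw (1-s)X^{\rmI}_0 + s X^{\rmI}_1 + \sqrt{2s(1-s)}\,Z$ with $Z \ind (X^{\rmI}_0, X^{\rmI}_1)$, I would first rewrite
\begin{equation}
X^{\rmI}_s - X^{\rmI}_0 \eqlaw s(X^{\rmI}_1 - X^{\rmI}_0) + \sqrt{2s(1-s)}\,Z, \qquad X^{\rmI}_1 - X^{\rmI}_s \eqlaw (1-s)(X^{\rmI}_1 - X^{\rmI}_0) - \sqrt{2s(1-s)}\,Z.
\end{equation}
Taking the $2p$-th power of the Euclidean norm and applying the elementary inequality $(a+b)^{2p} \lesssim a^{2p} + b^{2p}$ for $a,b \geq 0$, I obtain
\begin{equation}
\norm{X^{\rmI}_s - X^{\rmI}_0}^{2p} \lesssim s^{2p}\norm{X^{\rmI}_1 - X^{\rmI}_0}^{2p} + (2s(1-s))^{p}\norm{Z}^{2p},
\end{equation}
and analogously for $X^{\rmI}_1 - X^{\rmI}_s$ with $(1-s)^{2p}$ in place of $s^{2p}$.

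To conclude, I would apply the triangle inequality once more, $\norm{X^{\rmI}_1 - X^{\rmI}_0}^{2p} \lesssim \norm{X^{\rmI}_0}^{2p} + \norm{X^{\rmI}_1}^{2p}$, take expectations, and use that the marginals of $\pi$ are $\mu$ and $\nustar$ to identify the resulting expectations as $\moment[2p]{\mu}$ and $\moment[2p]{\nustar}$. The Gaussian term yields $\PE[\norm{Z}^{2p}] \lesssim d^{p}$, so combining everything gives both bounds of the lemma with constants depending only on $p$, absorbed in the $\lesssim$ notation. There is no real obstacle here: the argument is purely algebraic/computational and relies entirely on the explicit representation \eqref{interpolant_in_linear_form} and standard moment bounds for Gaussian vectors.
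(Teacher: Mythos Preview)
Your proposal is correct and follows essentially the same approach as the paper: both exploit the explicit representation \eqref{interpolant_in_linear_form} to write $X^{\rmI}_s - X^{\rmI}_0 \eqlaw s(X^{\rmI}_1 - X^{\rmI}_0) + \sqrt{2s(1-s)}\,Z$, apply the elementary inequality $(a+b)^{2p} \lesssim a^{2p} + b^{2p}$ (which the paper calls ``Young inequality''), and then bound $\PE[\norm{X^{\rmI}_1 - X^{\rmI}_0}^{2p}]$ via the marginal moments and $\PE[\norm{Z}^{2p}] \lesssim d^p$.
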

\begin{proof}[Proof of \Cref{lemma_on_moments_bounded}:] As a direct consequence of \eqref{interpolant_in_linear_form} and Young inequality, it holds 
\begin{equation}
    \begin{split}
        \PE[\norm{X^{\rmI}_s-X^{\rmI}_0}^{2p}] &=\PE\Big[\norm{s(X^{\rmI}_1- X^{\rmI}_0)+\sqrt{2s(1-s)}\mathrm{Z}}^{2p}\Big]\\
        &\lesssim s^{2p}\PE[\norm{X^{\rmI}_0-X^{\rmI}_1}^{2p}] +(2s(1-s))^p\PE[\norm{\mathrm{Z}}^{2p}] \\
        &\lesssim s^{2p}\moment[2p]{\mu}+s^{2p}\moment[2p]{\nustar} +d^{p}s^{p}(1-s)^{p}\eqsp.
    \end{split}
\end{equation}A similar argument holds for $\PE[\norm{X^{\rmI}_1-X^{\rmI}_s}^{2p}]$.
\end{proof}
% \begin{lemma}\label{lemma_on_hjb}
% \marta{On the existence of solution to the forward and backward SDEs}
    % Let $\tilde{u}:\rsetd \rightarrow \rset$ be a ... function. Then, the solution $u_t(x)$ to the Hamilton-Jacobi-Bellman equation 
    % \begin{equation}
    %     \partial_t u_t+\Delta_x u_t+\norm{\nabla_x u_t}^2=0\eqsp, \quad t\in [0,1)\eqsp, \quad u_1=\log \tilde{u}\eqsp,
    % \end{equation} is uniform Lipschitz continuous in space. \\
    % Similarly, if $\tilde{v}:\rsetd \rightarrow \rset$ is a ... function, then the solution $v_t(x)$ to the Hamilton-Jacobi-Bellman equation 
    % \begin{equation}
    %      -\partial_t v_t+\Delta_x v_t+\norm{\nabla_x v_t}^2=0\eqsp, \quad t\in [0,1)\eqsp, \quad v_0= \log\tilde{v}\eqsp,
    % \end{equation} is is uniform Lipschitz continuous in space.
% \end{lemma}
% \begin{proof}[Proof of \Cref{lemma_on_hjb}:]
%     \marta{to be done}
% \end{proof}
We preface the next lemma with a definition.
\begin{definition}
    Consider $\mathbb{Q}\in \mcp(\wiener)$. The reverse time measure $\mathbb{Q}^{\rmR}\in \mcp(\wiener)$ of $\mathbb{Q}$ is defined as follows: for any $\msa\in \mathcal{B}(\wiener)$
    \begin{equation}
        \mathbb{Q}^{\rmR}(\msa)=\mathbb{Q}(\msa^{\rmR}),\eqsp,
    \end{equation}where $\msa^{\rmR}:=\{t\mapsto \omega(1-t)\eqsp : \eqsp \omega\in \msa\}.$
\end{definition}

\begin{lemma}\label{ne_auxiliary_lemma}

    Assume $\pi\ll \Leb^{2d}$ and $\mu, \nustar \ll \Leb^d$. Then $(X^\rmI_t)_{t\in [0,1]}$ solves weakly 
    \begin{equation}\label{def:X_forward}
        \rmd \foX_t= 2\fobi_t(\foX_0, \foX_t)\rmd t +  \sqrt{2}\rmd \foB_t\eqsp, \quad t\in [0,1]\eqsp, \quad \foX_0\sim \mu\eqsp.
    \end{equation}
    with $(\foB_t)_{t\in [0,1]}$ $d$-dimensional Brownian motion independent of $\foX_0$ and\begin{equation}
        \fobi_t(x_0,x)=\nabla_x \psi_t^{x_0}(x)\eqsp,
    \end{equation}where $\psi_t^{x_0}(x)$ solves 
    \begin{equation}\label{hjb_psi}
        \partial_t \psi^{x_0}_t+\Delta_x \psi^{x_0}_t+\norm{\nabla_x \psi^{x_0}_t}^2=0\eqsp, \quad t\in [0,1)\eqsp, \quad \psi^{x_0}_1=\log\tilde{\pi}^{x_0}_0\eqsp,
    \end{equation}with 
    \begin{equation}\label{def_pi
    _conditioned}
        \tilde{\pi}_0^{x_0}(x):= \left. \frac{\rmd \pi(x_0,x)}{\rmd \Leb^{2d}}\frac{1}{p_1(x|x_0)}\middle/ \frac{\rmd \mu (x_0)}{\rmd \Leb^d} \right. \eqsp,
    \end{equation}and $p_1(x|x_0)$ defined in \eqref{def_heat_kernel}.\\
    Similarly, $((X^\rmI_t)_{t\in [0,1]})^\rmR$ solves weakly \begin{equation}\label{def:X_backward}
        \rmd \baX_t= 2\babi_t(\baX_0, \baX_t)\rmd t +  \sqrt{2}\rmd \baB_t\eqsp, \quad t\in [0,1]\eqsp, \quad \baX_0\sim \nustar\eqsp.
    \end{equation}
    with $(\baB_t)_{t\in [0,1]}$ $d$-dimensional Brownian motion independent of $\baX_0$ and\begin{equation}
        \babi_t(x_1,x)=\nabla_x \varphi_{1-t}^{x_1}(x)\eqsp,
    \end{equation}where $\varphi_t^{x_1}(x)$ solves 
    \begin{equation}\label{hjb_phi}
        -\partial_t \varphi^{x_1}_t+\Delta_x \varphi^{x_1}_t+\norm{\nabla_x \varphi^{x_1}_t}^2=0\eqsp, \quad t\in [0,1)\eqsp, \quad \varphi^{x_1}_0= \log\tilde{\pi}^{x_1}_1\eqsp,
    \end{equation}with 
    \begin{equation}\label{def_pi
    _conditioned_pt2}
        \tilde{\pi}^{x_1}_1(x):= \left. \frac{\rmd \pi(x,x_1)}{\rmd \Leb^{2d}}\frac{1}{p_1(x_1|x)} \middle/ \frac{\rmd \nustar (x_1)}{\rmd \Leb^d}\right. \eqsp,
    \end{equation}and $p_1(x_1|x)$ defined in \eqref{def_heat_kernel}.
\end{lemma}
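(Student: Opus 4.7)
For each fixed $x_0\in\rset^d$, the strategy is to identify the conditional law of $(X^\rmI_t)_{t\in[0,1]}$ given $\foX_0=x_0$ as a Doob $h$-transform of the law $\rmQ_{x_0}$ of $(x_0+\sqrt{2}\,B_t)_{t\in[0,1]}$. By construction, conditionally on $\foX_0=x_0$ the endpoint $X^\rmI_1$ has density $x_1 \mapsto \tilde{\pi}^{x_0}_0(x_1)\,p_1(x_1|x_0)$, and conditionally on both endpoints the path is the Brownian bridge \eqref{eq:4}. Since the Brownian bridge from $x_0$ to $x_1$ is the Doob $h$-transform of $\rmQ_{x_0}$ with kernel $(t,x) \mapsto p_{1-t}(x_1|x)/p_1(x_1|x_0)$, integrating out $x_1$ gives
\begin{equation*}
    \left.\frac{\rmd\,\mathrm{Law}(X^\rmI_\cdot \mid \foX_0 = x_0)}{\rmd\,\rmQ_{x_0}}\right|_{\mcf_t} = h(t, X_t), \quad h(t,x) := \int_{\rset^d} \tilde{\pi}^{x_0}_0(x_1)\,p_{1-t}(x_1|x)\,\rmd x_1,
\end{equation*}
with $h(0,x_0) = \int \tilde{\pi}^{x_0}_0(x_1)\,p_1(x_1|x_0)\,\rmd x_1 = 1$. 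The standing absolute-continuity assumptions on $\pi,\mu$ make $\tilde{\pi}^{x_0}_0$ well defined for $\mu$-a.e.\ $x_0$, which is all that is required.

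\textbf{From the heat equation to the HJB.} Because $(t,x)\mapsto p_{1-t}(y|x)$ satisfies $(\partial_t+\Delta_x)p_{1-t}(y|\cdot)=0$ on $[0,1)$ for each $y$, differentiation under the integral---justified on every strip $[0,1-\eta]\times\rset^d$, $\eta>0$, by the Gaussian control of $p_{1-t}$ and its spatial derivatives---gives $\partial_t h + \Delta_x h = 0$ on $[0,1)\times \rset^d$. Setting $\psi^{x_0}_t := \log h$ and using the identity $\Delta_x h/h = \Delta_x\psi^{x_0} + \|\nabla_x\psi^{x_0}\|^2$ turns this linear equation for $h$ into the nonlinear equation \eqref{hjb_psi}. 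The terminal condition $\psi^{x_0}_1 = \log \tilde{\pi}^{x_0}_0$ follows from $p_{1-t}(\cdot|x)$ being an approximation of the identity as $t\uparrow 1$. Moreover, Itô's formula under $\rmQ_{x_0}$ yields $\rmd h(t,X_t)=\sqrt{2}\,\nabla_x h(t,X_t)\cdot \rmd B_t$, and the representation of $h(t,X_t)$ as a mixture of unit-mean positive martingales shows it is a true martingale on $[0,1)$. Girsanov's theorem then identifies the change-of-measure by $h(t,X_t)$ with the weak solution of
$\rmd\foX_t = 2\nabla_x\psi^{x_0}_t(\foX_t)\,\rmd t + \sqrt{2}\,\rmd \foB_t$ started at $\foX_0=x_0$, and averaging over $x_0\sim\mu$ proves \eqref{def:X_forward} with $\fobi_t(x_0,x) = \nabla_x\psi^{x_0}_t(x)$.

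\textbf{Backward SDE by time reversal.} For $\baX_t := X^\rmI_{1-t}$, time-symmetry of the Brownian bridge (a bridge from $x_0$ to $x_1$ run backwards is a bridge from $x_1$ to $x_0$) represents the conditional law of $\baX$ given $\baX_0 = x_1$ as a mixture, over $X^\rmI_0$ drawn with density $x_0\mapsto\tilde{\pi}^{x_1}_1(x_0)\,p_1(x_0|x_1)$, of Brownian bridges from $x_1$. The same $h$-transform analysis with $\tilde{\pi}^{x_1}_1$ in place of $\tilde{\pi}^{x_0}_0$ produces $\tilde{\varphi}^{x_1}_t(x) := \log \int \tilde{\pi}^{x_1}_1(x_0)\,p_{1-t}(x_0|x)\,\rmd x_0$ solving $\partial_t\tilde{\varphi} + \Delta_x\tilde{\varphi} + \|\nabla_x\tilde{\varphi}\|^2 = 0$ with $\tilde{\varphi}^{x_1}_1 = \log \tilde{\pi}^{x_1}_1$, and gives drift $2\nabla_x\tilde{\varphi}^{x_1}_t$ for $\baX$. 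The time-change $\varphi^{x_1}_t := \tilde{\varphi}^{x_1}_{1-t}$ flips the sign of the $\partial_t$ term and relocates the boundary data to $t=0$, yielding \eqref{hjb_phi} and the drift $\babi_t(x_1,x) = \nabla_x\varphi^{x_1}_{1-t}(x)$.

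\textbf{Main obstacle.} The one subtle point is applying Girsanov's theorem without a Novikov-type hypothesis on $\nabla\psi^{x_0}$: this is circumvented here because $h(\cdot, X_\cdot)$ is \emph{a priori} a non-negative local martingale of constant expectation $1$, so it is automatically a true martingale. The smoothing property of the heat kernel gives $h(t,\cdot)\in\mathrm{C}^{\infty}$ for each $t<1$ regardless of the regularity of $\tilde{\pi}^{x_0}_0$, so all the above manipulations are valid under the mere absolute continuity of $\pi,\mu,\nustar$, which is enough for a weak formulation.
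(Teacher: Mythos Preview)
Your proposal is correct and follows essentially the same route as the paper: condition on the starting point $x_0$, identify the conditional path law as a Girsanov (Doob $h$-transform) of the Brownian motion started at $x_0$ with density $\tilde{\pi}_0^{x_0}(X_1)$, recognise the restriction to $\mcf_t$ as $h(t,X_t)=\exp\psi^{x_0}_t(X_t)$, and read off the drift $2\nabla_x\psi^{x_0}_t$. The only cosmetic difference is that the paper applies It\^o directly to $\psi^{x_0}_t(B^{x_0}_t)$ and uses the HJB \eqref{hjb_psi} to cancel the finite-variation part, whereas you apply It\^o to $h$ and obtain \eqref{hjb_psi} via the Cole--Hopf substitution; your explicit justification that $h(\cdot,X_\cdot)$ is a true martingale (mixture of unit-mean positive martingales / constant expectation via Chapman--Kolmogorov) is a welcome addition that the paper leaves implicit.
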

\begin{proof}[Proof of \Cref{ne_auxiliary_lemma}:]
We just show that $(X^\rmI_t)_{t\in [0,1]}$ solves weakly \eqref{def:X_forward}. The argument for $((X^\rmI_t)_{t\in [0,1]})^\rmR$ is similar and therefore omitted. \\
For a fixed $x_0\in \rsetd$, we denote by $(x_0,\msa) \mapsto \interD_{\pi, \bbb}(x_0,\msa)$ the conditional distribution of $(\Xint_t)_{t\in\ccint{0,1}}$ given $\Xint_0$ (see e.g. \cite[Theorem 8.37]{klenke2013probability} for the existence of this conditional distribution) and by $(B^{x_0}_t)_{t\in [0,1]}$ the solution to
\begin{equation}
     \rmd B^{x_0}_t = \sqrt{2} \rmd B_t\eqsp, \quad t\in [0,1], \quad B^{x_0}_0=x_0\eqsp.
 \end{equation}Also, we denote by $(W_t)_{t\in [0,1]}$ the canonical process on the Wiener space $\wiener$ and by $(\mathcal{F}_t)_{t\in [0,1]}$ the corresponding natural filtration. 
% As $\KL(\interD^{x_0}(\pi, \bbb)|\mathrm{Law}((B^{x_0}_t)_{t\in[0,1]}))<+\infty$, there exists a $\rsetd$-valued adapted process $\alpha$ such that 
% \begin{equation}\label{girsanov_weak}
%         \frac{\rmd \interD_{\pi, \bbb}(x_0,\msa)}{\rmd\mathrm{Law}((B^{x_0}_t)_{t\in[0,1]})}((B^{x_0}_t)_{t\in[0,1]})= \exp\Big(\int_0^1 \nabla_x\alpha_t (B^{x_0}_t)\rmd B^{x_0}_t -\int_0^1 \norm{\nabla_x\alpha_t (B^{x_0}_t)}^2 \rmd t\Big)\eqsp,
% \end{equation}(see \cite[Theorems 2.1-2.3]{leonard2012girsanov}). In particular, $\interD_{\pi, \bbb}(x_0,\cdot)$ is a weak solution to the SDE with drift $\alpha$ and noise $\sqrt{2}$. It remains to show that $\alpha=\psi^{x_0}.$ 
Note that, 
as a consequence of the very definition of $(X^\rmI_t)_{t\in [0,1]}$, \eqref{hjb_psi} and Ito's formula applied to $(\psi_t^{x_0}(B^{x_0}_t))_{t\in [0,1]}$, it holds
\begin{equation}\label{conditioned_laws_interpolant_and_auxiliary_process}
    \begin{split}
        & \frac{\rmd \interD_{\pi, \bbb}(x_0,\cdot)}{\rmd\mathrm{Law}((B^{x_0}_t)_{t\in[0,1]})}((B^{x_0}_t)_{t\in[0,1]})\\
        &= \tilde{\pi}_0^{x_0}(B^{x_0}_1)\\
        &= \exp\Big( \psi_1^{x_0}(B^{x_0}_1)\Big)\\
        &=\exp\Big(\psi_1^{x_0}(B^{x_0}_1)- \psi_0^{x_0}(x_0)\Big)\\
        &= \exp\Big(\psi_1^{x_0}(B^{x_0}_1)- \psi_0^{x_0}(B^{x_0}_0)- \int_0^1 \Big\{\partial_t \psi_t^{x_0}+ \Delta_x\psi_t^{x_0} + \norm{\nabla_x\psi_t^{x_0}}^2\Big\}(B^{x_0}_t)\rmd t\Big)\\
        &=\exp\Big(\int_0^1 \nabla_x\psi_t^{x_0}(B^{x_0}_t)\rmd B^{x_0}_t -\int_0^1 \norm{\nabla_x\psi_t^{x_0}(B^{x_0}_t)}^2 \rmd t\Big)\eqsp.
    \end{split}
\end{equation}Hence, for any $t\in [0,1]$ we have that 
\begin{equation}
    \frac{\rmd \interD_{\pi, \bbb}(x_0,\cdot)}{\rmd\mathrm{Law}((B^{x_0}_t)_{t\in[0,1]})}\Bigg|_{\mathcal{F}_t}= D_t\eqsp,
\end{equation}where
\begin{equation}
    D_t= \exp\Big(\int_0^t \nabla_x\psi_t^{x_0}(W_s)\rmd W_s -\int_0^1 \norm{\nabla_x\psi_t^{x_0}(W_s)}^2 \rmd s\Big)
\end{equation} is continuous $\mathrm{Law}((B^{x_0}_t)_{t\in[0,1]})$- almost surely and is such that
\begin{equation}
    \rmd D_t= D_t  \nabla_x \psi^{x_0}_s(W_s)\rmd W_t\eqsp.
\end{equation}Therefore, being $(W_t)_{t\in [0,1]}$ a martingale under $\mathrm{Law}((B^{x_0}_t)_{t\in[0,1]})$, as a consequence of Girsanov theorem (see \cite[Theorem 1.4]{revuz2013continuous}), we have that
\begin{equation}
    \left(W_t - \left(D^{-1} \langle W, D \rangle\right)_t\right)_{t\in [0,1]}= \left(W_t- 2\int_0^t \nabla_x \psi^{x_0}_s(W_s)\rmd s\right)_{t\in [0,1]}
\end{equation}is a $((\mathcal{F}_t)_t,\interD_{\pi, \bbb}(x_0,\cdot))$- martingale with bracket $\sqrt{2}t.$ It follows that, under $\interD_{\pi, \bbb}(x_0,\cdot)$, $(W_t)_{t\in [0,1]}$ solves 
\begin{equation}
    \rmd W_t = 2\nabla_x \psi^{x_0}_t(W_s)\rmd t +\sqrt{2} \rmd B_t\eqsp, \quad t\in [0,1]\eqsp, \quad W_0=x_0\eqsp.
\end{equation}
The thesis is now a direct consequence of the fact that
\begin{equation}\label{interpolant_conditioned}
    \inter{\pi, \bbb}(\msa)=\int_{\rset^d} \mathbb{P}(X^\rmI \in \msa|X^\rmI_0=x_0) \mu(\rmd x_0)= \int_{\rset^d} \interD_{\pi, \bbb}(x_0,\msa) \mu(\rmd x_0) \eqsp,
\end{equation} for any measurable $\msa\in \wiener$.
\end{proof}
\begin{lemma}\label{ne_auxiliary_lemma_1}
  Assume \Cref{ass_data_no_early_v2}.
Then, almost surely,
  it holds 
\begin{equation}\label{fobi_as_conditional_expectation}
        \fobi_t(X^{\rmI}_0,X^{\rmI}_t)=\expe{\frac{\nabla\tilde{\pi}_0^{X_0^{\rmI}} (X^{\rmI}_1)}{\tilde{\pi}_0^{ X_0^{\rmI}}(X^{\rmI}_1)} \middle| (X_0^{\rmI},X_t^{\rmI})}\eqsp,
      \end{equation}
      where $\tilde{\pi}_0^{x_0}$ is defined as in \eqref{def_pi
    _conditioned}. Moreover, for any $u\in [0, 1-s]$, $s\in[0,1]$ and $p\in \{2,4,8\}$ it holds
    \begin{equation}\label{bound_on_fobi}
        \PE\Bigg[\norm{\int_{u}^{u+s} \fobi_r(\foX_0, \foX_r) \rmd r}^{p}\Bigg]\lesssim s^{p} \Big(\norm{\nabla\log\tilde{\pi}}^{p}_{\mrl^{p}(\pi)}+\norm{\nabla\log\mu}^{p}_{\mrl^{p}(\mu)}\Big)\eqsp. 
        \end{equation}
        Similarly, almost surely it holds
        \begin{equation}\label{babi_as_conditional_expectation}
        \babi_t(X^{\rmI}_0,X^{\rmI}_t)=\expe{\frac{\nabla\tilde{\pi}_1^{X_1^{\rmI}} (X^{\rmI}_0)}{\tilde{\pi}_1^{X_1^{\rmI}} (X^{\rmI}_0)} \middle| (X_0^{\rmI},X_t^{\rmI})}\eqsp,
      \end{equation}
      where $\tilde{\pi}^{x_1}_1$ is defined as in \eqref{def_pi
    _conditioned_pt2}. Moreover, for any $u\in [0, 1-s]$, $s\in[0,1]$ and $p\in \{2,4,8\}$ it holds
    \begin{equation}\label{bound_on_babi}
        \PE\Bigg[\norm{\int_{u}^{u+s} \babi_r(\baX_0, \baX_r) \rmd r}^{p}\Bigg]\lesssim s^{p} \Big(\norm{\nabla\log\tilde{\pi}}^{p}_{\mrl^{p}(\pi)}+\norm{\nabla\log\nustar}^{p}_{\mrl^{p}(\nustar)}\Big)\eqsp. 
        \end{equation}
\end{lemma}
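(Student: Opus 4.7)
The plan is to identify $\fobi_t$ with the claimed conditional expectation via a Hopf--Cole transformation of the HJB equation \eqref{hjb_psi}, and then to deduce the $\mrl^p$ bound from Jensen's inequality applied twice. Concretely, I would fix $x_0\in\rset^d$, set $u(t,x) := \exp(\psi_t^{x_0}(x))$, and use \eqref{hjb_psi} together with the chain rule to verify that $u$ solves the backward heat equation $\partial_t u + \Delta_x u = 0$ on $[0,1)\times\rset^d$ with terminal datum $u(1,\cdot)=\tilde{\pi}_0^{x_0}$. Under \Cref{ass_data_no_early_v2}, which ensures both smoothness and $\mrl^8$-integrability of $\tilde{\pi}_0^{x_0}$ and of its gradient, this problem has a unique classical solution, namely the Feynman--Kac/heat-kernel integral $u(t,x) = \int p_{1-t}(y|x)\tilde{\pi}_0^{x_0}(y)\,\rmd y$. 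Using the antisymmetry $\nabla_x p_{1-t}(y|x) = -\nabla_y p_{1-t}(y|x)$ from \eqref{simmetry_nabla_heat_kernel} and integrating by parts in $y$ (boundary terms vanish by the integrability supplied by \Cref{ass_data_no_early_v2}) then gives $\fobi_t(x_0,x) = \nabla_x u(t,x)/u(t,x) = \int \nabla\log\tilde{\pi}_0^{x_0}(y)\,\rho_{x_0,x}(\rmd y)$, where $\rho_{x_0,x}$ is the probability measure with density proportional to $y\mapsto \tilde{\pi}_0^{x_0}(y) p_{1-t}(y|x)$.

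To match this ratio with the conditional expectation, I would use that, by the Brownian-bridge nature of the interpolant and \eqref{marginals_interpolant}, the joint density of $(X_0^{\rmI}, X_t^{\rmI}, X_1^{\rmI})$ factorises as $\tilde{\pi}(x_0,x_1) p_t(x_t|x_0) p_{1-t}(x_1|x_t)$. Dividing by the $(X_0^{\rmI},X_t^{\rmI})$-marginal and invoking $\tilde{\pi}(x_0,x_1) = (\rmd\mu/\rmd\Leb^d)(x_0)\,\tilde{\pi}_0^{x_0}(x_1)$, the $(\rmd\mu/\rmd\Leb^d)(x_0)$-factor cancels in the ratio, and the resulting conditional law of $X_1^{\rmI}$ given $(X_0^{\rmI},X_t^{\rmI})=(x_0,x)$ is exactly $\rho_{x_0,x}$. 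This identifies the previous display with \eqref{fobi_as_conditional_expectation}.

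For the $\mrl^p$ bound \eqref{bound_on_fobi}, I would apply Jensen to the $\rmd r$-average (losing $s^{p-1}$) and then conditional Jensen to \eqref{fobi_as_conditional_expectation}, yielding
\begin{equation*}
\PE\Big[\norm{\int_u^{u+s}\fobi_r(\foX_0,\foX_r)\,\rmd r}^p\Big] \le s^p\,\PE\Big[\norm{\nabla\log\tilde{\pi}_0^{X_0^{\rmI}}(X_1^{\rmI})}^p\Big] = s^p \int \norm{\nabla\log\tilde{\pi}_0^{x_0}(x_1)}^p\,\rmd\pi(x_0,x_1).
\end{equation*}
Writing $\log\tilde{\pi}_0^{x_0}(x_1) = \log\tilde{\pi}(x_0,x_1) - \log(\rmd\mu/\rmd\Leb^d)(x_0)$ and using the marginal property $\pi\in\Pi(\mu,\nustar)$ to push the $x_0$-integral onto $\mu$, the right-hand side is dominated by a constant multiple of $s^p\big(\norm{\nabla\log\tilde{\pi}}_{\mrl^p(\pi)}^p + \norm{\nabla\log\mu}_{\mrl^p(\mu)}^p\big)$, which is \eqref{bound_on_fobi}. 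The backward statements \eqref{babi_as_conditional_expectation} and \eqref{bound_on_babi} follow from a completely symmetric argument, applied to $\varphi_t^{x_1}$ solving \eqref{hjb_phi} (which becomes a heat equation after the substitution $t\mapsto 1-t$) and to the reverse interpolant $\baX$ from \Cref{ne_auxiliary_lemma}, with $\nustar$ and $\tilde{\pi}_1^{x_1}$ taking the roles of $\mu$ and $\tilde{\pi}_0^{x_0}$.

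The main technical subtlety is the Hopf--Cole/Feynman--Kac step, since it requires differentiating under the heat semigroup and justifying the integration by parts given only the regularity supplied by \Cref{ass_data_no_early_v2}; a standard mollification of $\tilde{\pi}_0^{x_0}$ followed by a passage to the limit, using the $\mrl^8(\pi)$ integrability of $\nabla\log\tilde{\pi}$ and $\nabla\log\mu$, should make this rigorous. Once the representation is in place, the remaining steps are routine Jensen/Hölder manipulations.
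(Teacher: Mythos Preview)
Your proposal is correct and follows essentially the same route as the paper: both obtain the heat-kernel representation $\exp(\psi_t^{x_0})(x)=\int p_{1-t}(x_1|x)\tilde{\pi}_0^{x_0}(x_1)\,\rmd x_1$ (the paper simply quotes this as the well-known HJB solution, you phrase it as Hopf--Cole plus Feynman--Kac), then integrate by parts via \eqref{simmetry_nabla_heat_kernel}, identify the resulting density as $p^{\rmI}_{1|0,t}(\cdot|x_0,x)$, and finish with the same double Jensen argument. One small remark: since $\nabla\log\tilde{\pi}_0^{x_0}(x_1)$ is a gradient in $x_1$ only, the $\log(\rmd\mu/\rmd\Leb^d)(x_0)$ term in your decomposition actually vanishes under $\nabla_{x_1}$, so the $\norm{\nabla\log\mu}_{\mrl^p(\mu)}^p$ contribution in \eqref{bound_on_fobi} is simply a harmless nonnegative padding of the bound (the paper does the same).
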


\begin{proof}[Proof of \Cref{ne_auxiliary_lemma_1}:] We just show \eqref{fobi_as_conditional_expectation} and \eqref{bound_on_fobi}. The proof of \eqref{babi_as_conditional_expectation} and \eqref{bound_on_babi} is similar and therefore omitted. First of all, note that \eqref{fobi_as_conditional_expectation} is trivial for $t=1$. We therefore focus on $t\in [0,1)$. It's well known that the solution $\psi^{x_0}_t(x)$ to the Hamilton-Jacobi-Bellman equation \eqref{hjb_psi} is given by
\begin{equation}
    \psi^{x_0}_t(x)= \log \Big(\int \tilde{\pi}_0^{x_0}(x_1) p_{1-t}(x_1|x) \rmd x_1\Big)\eqsp, \quad t\in [0,1)\eqsp.
\end{equation}Therefore, we have that for $t\in [0,1)$
\begin{equation}
    \begin{split}
        \fobi_t(x_0,x)&=\nabla_x \psi^{x_0}_t(x)= \frac{\int \tilde{\pi}_0^{x_0}(x_1) \nabla_x p_{1-t}(x_1|x) \rmd x_1}{\int \tilde{\pi}_0^{x_0}(\tilde{x}_1) p_{1-t}(\tilde{x}_1|x)\rmd \tilde{x}_1}\eqsp.
    \end{split}
\end{equation}Using \eqref{simmetry_nabla_heat_kernel} and integrating by parts, we get for $t\in [0,1)$ that 
\begin{equation}
    \begin{split}
        \fobi_t(x_0,x)&= -\frac{\int \tilde{\pi}_0^{x_0}(x_1) \nabla_{x_1} p_{1-t}(x_1|x) \rmd x_1}{\int \tilde{\pi}_0^{x_0}(\tilde{x}_1) p_{1-t}(\tilde{x}_1|x)\rmd \tilde{x}_1}=\frac{\int (\nabla \tilde{\pi}_0^{x_0}(x_1)/  \tilde{\pi}_0^{x_0}(x_1))\tilde{\pi}_0^{x_0}(x_1) p_{1-t}(x_1|x) \rmd x_1}{\int \tilde{\pi}_0^{x_0}(\tilde{x}_1) p_{1-t}(\tilde{x}_1|x)\rmd \tilde{x}_1}\eqsp.
    \end{split}
\end{equation}
But then, it suffices to prove that for $t\in [0,1)$ it holds
\begin{equation}
    \frac{\tilde{\pi}_0^{x_0}(x_1) p_{1-t}(x_1|x)}{\int \tilde{\pi}_0^{x_0}(\tilde{x}_1) p_{1-t}(\tilde{x}_1|x)\rmd \tilde{x}_1}=p^{\rmI}_{1|0,t}(x_1|x_0, x)\eqsp,
\end{equation}to conclude. To do so, we simply use \eqref{marginals_interpolant}.
\begin{equation}
\begin{split}
    p^{\rmI}_{1|0,t}(x_1|x_0, x)&= \frac{p^{\rmI}_{0,1,t}(x_0, x_1, x)}{\int p^{\rmI}_{0,1,t}(x_0, \tilde{x}_1, x) \rmd \tilde{x}_1}\\
    &= \left. \frac{\pi(x_0, x_1) p_t(x|x_0) p_{1-t}(x_1|x)}{p_1(x_1|x_0)} \middle/\int \frac{\pi(x_0, \tilde{x}_1) p_t(x|x_0) p_{1-t}(\tilde{x}_1|x)}{p_1(\tilde{x}_1|x_0)}\rmd \tilde{x}_1\right.\\
    &= \left. \frac{\pi(x_0, x_1)  p_{1-t}(x_1|x)}{p_1(x_1|x_0)} \middle/\int \frac{\pi(x_0, \tilde{x}_1)  p_{1-t}(\tilde{x}_1|x)}{p_1(\tilde{x}_1|x_0)}\rmd \tilde{x}_1\right. \\
    &= \left. \frac{\pi(x_0, x_1)  p_{1-t}(x_1|x)}{\mu(x_0) p_1(x_1|x_0)} \middle/\int \frac{\pi(x_0, \tilde{x}_1)  p_{1-t}(\tilde{x}_1|x)}{\mu(x_0) p_1(\tilde{x}_1|x_0)}\rmd \tilde{x}_1\right.\\
    &= \frac{ \tilde{\pi}_0^{x_0}(x_1) p_{1-t}(x_1|x)}{\int \tilde{\pi}_0^{x_0}(\tilde{x}_1) p_{1-t}(\tilde{x}_1|x)\rmd \tilde{x}_1}\eqsp.
\end{split}
\end{equation}
 We are left with the bounds \eqref{bound_on_fobi}.\\
 We prove \eqref{bound_on_fobi} only for $p=1$. The other two cases are analogous. To this aim, we simply make use of \eqref{def:X_forward}, H\"{o}lder and Jensen inequalities and the properties of the conditional expectation.
 \begin{equation}
     \begin{split}
         \PE\Bigg[\norm{\int_{u}^{u+s} \fobi_r(\foX_0, \foX_r) \rmd r}^{2}\Bigg] &= \PE\Bigg[\norm{\int_{u}^{u+s} \PE\Bigg[\frac{\nabla\tilde{\pi}_0^{X^{\rmI}_0} (X^{\rmI}_1)}{\tilde{\pi}_0^{X^{\rmI}_0} (X^{\rmI}_1)}\Bigg|(X_0^{\rmI}, X^{\rmI}_r)\Bigg] \rmd r}^{2}\Bigg]\\
         &\lesssim s \PE\Bigg[\int_{u}^{u+s} \norm{\PE\Bigg[\frac{\nabla\tilde{\pi}_0^{X^{\rmI}_0} (X^{\rmI}_1)}{\tilde{\pi}_0^{X^{\rmI}_0} (X^{\rmI}_1)}\Bigg|(X_0^{\rmI}, X^{\rmI}_r)\Bigg] }^2\rmd r\Bigg]\\
         &\lesssim s\PE\Bigg[\int_{u}^{u+s} \PE\Bigg[\norm{\frac{\nabla\tilde{\pi}_0^{X^{\rmI}_0} (X^{\rmI}_1)}{\tilde{\pi}_0^{X^{\rmI}_0} (X^{\rmI}_1)}}^2\Bigg|(X_0^{\rmI}, X^{\rmI}_r)\Bigg]\rmd r\Bigg]\\
         &= s\PE\Bigg[\int_{u}^{u+s} \norm{\frac{\nabla\tilde{\pi}_0^{X^{\rmI}_0} (X^{\rmI}_1)}{\tilde{\pi}_0^{X^{\rmI}_0} (X^{\rmI}_1)}}^2\rmd r\Bigg]\\
         &\lesssim s^2 \int \norm{\nabla \log \tilde{\pi}_0^{x_0}(x_1)}^2 \rmd \pi (x_0,x_1)\\
         &\lesssim s^2 \Big(\norm{\nabla\log\tilde{\pi}}^{2}_{\mrl^{2}(\pi)}+\norm{\nabla\log\mu}^{2}_{\mrl^{2}(\mu)}\Big)\eqsp,
       \end{split}
     \end{equation}where, in the last inequality, we have used the very definition of $\tilde{\pi}_0^{x_0}$ given in \eqref{def_pi
    _conditioned}.
\end{proof}

\subsection{Main results}\label{section_proof_of_main_result}
We preface the proofs of our main results with some extra notation. For sake of brevity, we denote by 
\begin{equation}
       \overrightarrow{f}_u^t=2\int_{u}^{t+u} \fobi_r (\foX_0, \foX_r)\rmd r\eqsp, \quad \overrightarrow{g}_u^t=\foB_{t+u}-\foB_{u}\eqsp,
    \end{equation}and similarly
\begin{equation}
       \overleftarrow{f}_u^t=2\int_{u}^{t+u} \babi_r (\baX_0, \baX_r)\rmd r\eqsp, \quad \overleftarrow{g}_u^t=\baB_{t+u}-\baB_{u}\eqsp.
\end{equation}
\begin{remark}\label{super_remark}
With this new notation, according to \Cref{ne_auxiliary_lemma}, we have that 
\begin{equation}
    (X^\rmI_t)_{t\in [0,1]} \eqlaw (\foX_t)_{t\in [0,1]}\eqsp, \quad (X^\rmI_t)_{t\in [0,1]} \eqlaw ((\baX_t)_{t\in [0,1]})^{\rmR}\eqsp,
\end{equation}and that for any $u\in [0,1]$ and $t\in [0, 1-u]$, 
\begin{equation}
    X^{\rmI}_{t+u}-X^{\rmI}_{u}\eqlaw \overrightarrow{f}_u^t+\overrightarrow{g}_u^t\eqsp, \quad X^{\rmI}_{1-(t+u)}-X^{\rmI}_{1-u}\eqlaw \overleftarrow{f}_u^t+\overleftarrow{g}_u^t\eqsp.
\end{equation}
Furthermore, according to \Cref{ne_auxiliary_lemma_1}, under \Cref{ass_data_no_early_v2}, for any $u\in [0,1]$, $t\in [0, 1-u]$, $p\in \{2,4,8\}$, we have that 
\begin{equation}
        \PE\Big[\norm{\overrightarrow{f}_u^t}^{p}\Big]\lesssim t^p (\norm{\nabla\log \mu}_{\mrl^p(\mu)}^p+ \norm{\nabla\log \tilde{\pi}}_{\mrl^p(\pi)}^p )\eqsp,
    \end{equation}and 
    \begin{equation}
        \PE\Big[\norm{\overleftarrow{f}_u^t}^{p}\Big]\lesssim t^p (\norm{\nabla\log \nustar}_{\mrl^p(\nustar)}^p+ \norm{\nabla\log \tilde{\pi}}_{\mrl^p(\pi)}^p )\eqsp.
    \end{equation}
  \end{remark}
\begin{remark}\label{indipendence_from_B}
    Note that for any $u\in [0,1]$ and $t\in [0, 1-u]$ \begin{equation}
    \overrightarrow{g}_u^t\ind (\foX_r)_{r\le u}\eqsp, \quad \overleftarrow{g}_u^t\ind (\baX_r)_{r\le u}\eqsp.
\end{equation}This fact is an almost immediate consequence of the Markov property of $(\foB_t)_{t\in[0,1]}$, see \cite[Theorem 3.3]{baldi2017stochastic}: consider the filtration $(\mathcal{F}_t)_{t\in [0,1]}$ defined by $\mathcal{F}_t=\sigma(\foX_0, (\foB_u)_{u\le t})$. Then, being $(\foB_t)_{t\in[0,1]}\ind \foX_0$, $(\foB_t)_{t\in[0,1]}$ is $(\mathcal{F}_t)_{t\in [0,1]}$-adapted and, as a consequence of the Markov property of $(\foB_t)_{t\in[0,1]}$, $(\overrightarrow{g}_u^t)_{t\in[0,1-u]}=(\foB_{t+u}-\foB_u)_{t\in[0,1-u]}\ind \mathcal{F}_u$. On the other hand, it's well known that $(\foX_t)_{t\in [0,1]}$ is $(\mathcal{F}_t)_{t\in [0,1]}$-adapted. It therefore follows that $\overrightarrow{g}_u^t\ind (\foX_r)_{r\le u}$. A similar arguments holds for $(\baX_t)_{t\in [0,1]}$. 
\end{remark}

  \subsection{Proof of \Cref{theo_no_early}}
  \label{sec:proof-crefth}
  We fix $0<\epsilon<\min\{h, 1/2\}$ and, for any $t\in [0,1-\epsilon]$, we denote by $\nustar_t=\text{Law}(\XintM_t)$ and $\nu^{\theta^{\star}}_t=\text{Law}(\app_t).$ \\
  First, using the data processing inequality \cite[Lemma 1.6]{nutz2021introduction}, the standard decomposition of the KL divergence \cite{chen2022sampling,chen2023improved,conforti2023score} based on Girsanov theorem, triangle inequality and \Cref{ass_drift_approx}, we bound the KL divergence between $\nustar_{1-\epsilon}$ and $\nu^{\theta^{\star}}_{1-\epsilon}$ as follows

        \begin{equation}\label{Decomposition_KL_Girsanov}
        \begin{split}
            &\KL(\nustar_{1-\epsilon}|\nu^{\theta^{\star}}_{1-\epsilon})\\
            & \le \KL(\text{Law}((\XintM_t)_{t\in [0, 1-\epsilon]}) | \mathrm{Law}((\app_t)_{t\in [0,1-\epsilon]}))\\
            &\lesssim \sum_{k=0}^{N-2}\int_{t_k}^{t_{k+1}} \PE \Big[\norm{ s_{\theta^\star}(t_k, \XintM_{t_k})- \fob_{t}(\XintM_{t})}^2\Big]\rmd t + \int_{1-h}^{1-\epsilon} \PE \Big[\norm{ s_{\theta^\star}(1-h, \XintM_{1-h})- \fob_{t}(\XintM_{t})}^2\Big]\rmd t\\
    &\lesssim \varepsilon^2 +\sum_{k=0}^{N-2}\int_{t_k}^{t_{k+1}} \PE \Big[\norm{ \fob_{t_k}(\XintM_{t_k})- \fob_{t}(\XintM_{t})}^2\Big]\rmd t + \int_{1-h}^{1-\epsilon} \PE \Big[\norm{ \fob_{1-h}(\XintM_{1-h})- \fob_{t}(\XintM_{t})}^2\Big]\rmd t \eqsp.
        \end{split}
    \end{equation}
Second, we aim at bounding the RHS of \eqref{Decomposition_KL_Girsanov} uniformly in $\epsilon$. Indeed, if we assume to be able to bound it with a constant $A$ independent of $\epsilon$, then, using the weak convergence of $\XintM_{1-\epsilon}$ to $X^\rmI_1$ (whose law is given by $\nustar$) as $\epsilon \to 0$, the continuity of $(\app_t)_{t\in [0,1]}$, hence the weak convergence of $\app_{1-\epsilon}$ to $\app_1$ (whose law is given by $\nu^{\theta^\star}_1$) as $\epsilon \to 0$, and the lower semi-continuity of the $\KL$-divergence with respect to the weak convergence \cite[Theorem 19]{van2014renyi}, we will get
\begin{equation}\label{lsc_of_kl}
        \begin{split}
            \KL(\nustar|\nu^{\theta^{\star}}_{1}) \le \lim\inf_{\epsilon\to 0} \KL(\nustar_{1-\epsilon}|\nu^{\theta^{\star}}_{1-\epsilon})\lesssim \lim\inf_{\epsilon\to 0} A=A \eqsp.
        \end{split}
    \end{equation}
Let us therefore bound the RHS of \eqref{Decomposition_KL_Girsanov}. We will do so by using stochastic calculus tools, and, more precisely, Ito's formula. To this aim let us introduce the generator  of $(\XintM_t)_{t\in [0,1-\epsilon]}$, which is defined for any $t\in [0,1-\epsilon]$ and $\rho \in C^2(\rsetd)$ as
\begin{equation}\foG_t \rho :=\langle\nabla_x \rho ,\fob_t\rangle + \Delta_x \rho \eqsp.
\end{equation}Using Ito's formula, we get that 
    \begin{equation}\label{ito_dec_drift}
            \rmd \fob_t(\XintM_t)= (\partial_t +\foG_t)\fob_t(\XintM_t) \rmd t + \sqrt{2} D_x \fob_t(\XintM_t) \rmd B_t\eqsp,\quad t\in [0,1-\epsilon]\eqsp.
    \end{equation}So, applying Young inequality and Ito's isometry, we have that, for any $k=0,\cdots,N-1$
    \begin{equation}\label{bound_via_ito_of_difference_of_b_t}
    \begin{split}
        &\PE \Big[\norm{ \fob_{t_k}(\XintM_{t_k})- \fob_{t}(\XintM_{t})}^2\Big]\\
        &=\PE \Bigg[\norm{\int_{t_k}^t(\partial_s +\foG_s)\fob_s(\XintM_s) \rmd s +\sqrt{2} \int_{t_k}^t D_x \fob_s(\XintM_s) \rmd B_s\eqsp }^2\Bigg]\\
        &\lesssim   \PE\Bigg[\norm{\int_{t_k}^t (\partial_s +\foG_s)\fob_s(\XintM_s) \rmd s }^2\Bigg] + 2 \int_{t_k}^{t} \PE\Big[\norm{D_x \fob_s(\XintM_s)}^2\Big] \rmd s\eqsp.
        \end{split}
    \end{equation}We now bound separately the two upper addends. To do so, we introduce the auxiliary measures $\lambda_k^h(\rmd s)\in  \mathcal{P}([t_k, t_{k+1}])$ for $k=0,..., N-2$ and $\lambda_{N-1}^h(\rmd s)\in  \mathcal{P}([1-h, 1-\epsilon])$ which will help us, via a double change of measure argument, to mitigate the bad behaviour at $t=0$ and $t=1$ of the reciprocal characteristic of the mimicking drift (\ie, $\partial_s +\foG_s$), which is the trickiest addend. 
    Namely, for $k=0,..., N-2$ we consider the measures $\lambda_k^h(\rmd s)\in  \mathcal{P}([t_k, t_{k+1}])$ defined as
    \begin{equation}\label{Auxiliary_measure_on_time}
        \lambda_k^h(\rmd s)=  \frac{\uprho(s)^{-1}}{Z_k}\1_{[t_k,t_{k+1}]} \rmd s\eqsp,
    \end{equation}
    with 
    \begin{equation}
        \uprho(s)^{-1}= s^{-7/8}\1_{\{s\le 1/2\}}+(1-s)^{-7/8}\1_{\{s> 1/2\}}\eqsp, 
    \end{equation} and 
    \begin{equation}
        Z_k = \int_{\min\{t_k, 1/2\}}^{\min\{t_{k+1}, 1/2\}} r^{-7/8} \rmd r+ \int_{\max\{t_k,1/2\}}^{\max\{t_{k+1}, 1/2\}} (1-r)^{-7/8} \rmd r\eqsp.
    \end{equation} Whereas, for $k=N-1$, we consider the measures $\lambda_{N-1}^h(\rmd s)\in  \mathcal{P}([1-h, 1-\epsilon])$ defined as
    \begin{equation}\label{Auxiliary_measure_on_time}
        \lambda_{N-1}^h(\rmd s)= \frac{\uprho(s)^{-1}}{Z_{N-1}}\1_{[1-h,1-\epsilon]} \rmd s\eqsp,
    \end{equation}
    with 
    $\uprho(s)^{-1}$ as above and
    \begin{equation}
        Z_{N-1} = \int_{\min\{1-h, 1/2\}}^{\min\{1-\epsilon, 1/2\}} r^{-7/8} \rmd r+ \int_{\max\{1-h,1/2\}}^{\max\{1-\epsilon, 1/2\}} (1-r)^{-7/8} \rmd r\eqsp.
    \end{equation} Note that, for any $s\in [0, 1-\epsilon]$ and for any $k=0,..., N-1$, they hold
    \begin{equation}\label{properties_of_auxiliary_measure_on_time}
        \uprho(s)\lesssim 1\eqsp, \quad Z_k\lesssim h^{1/8} \eqsp, \quad \int_{0}^{1-\epsilon} \uprho^{-1}(s) \rmd s= 16\left(\frac{1}{2}\right)^{1/8} - 8 \epsilon^{1/8}\lesssim 1\eqsp.
    \end{equation}
    We start by bounding the first addend, that is the one that involves the reciprocal characteristic of the mimicking drift. With a first change of measure argument, we get for any $k=0,..., N-1$,
    \begin{equation}
        \PE\Bigg[\norm{\int_{t_k}^t (\partial_s +\foG_s)\fob_s(\XintM_s) \rmd s }^2\Bigg] = Z_k^2 \PE\Bigg[\norm{\int_{t_k}^t (\partial_s +\foG_s)\fob_s(\XintM_s) \uprho(s)\lambda_k^h(\rmd s) }^2\Bigg]\eqsp,
    \end{equation}where, in the last inequality, we used \eqref{properties_of_auxiliary_measure_on_time}. But then, if we apply Jensen inequality and use an other change of measure argument, we get
    \begin{equation}\label{bd:first_addend}
        \begin{split}
            \PE\Bigg[\norm{\int_{t_k}^t (\partial_s +\foG_s)\fob_s(\XintM_s) \rmd s }^2\Bigg]&\le Z_k^2 \PE\Bigg[\int_{t_k}^t \norm{(\partial_s +\foG_s)\fob_s(\XintM_s)}^2 \uprho(s)^{2} \lambda_k^h(\rmd s)\Bigg]\\
            &\le Z_k \int_{t_k}^{t}\PE\Big[\norm{(\partial_s +\foG_s)\fob_s(\XintM_s)}^2 \Big]\uprho(s) \rmd s\\
            &\lesssim h^{1/8}\int_{t_k}^{t} \PE\Big[\norm{(\partial_s +\foG_s)\fob_s(\XintM_s)}^2 \Big]\uprho(s) \rmd s\eqsp.
        \end{split}
    \end{equation}Let us now focus on the second addend. Remarkably, this addend can be bounded via the reciprocal characteristic of $\tbZ$: because of Ito's formula, for $t\in [0,1-\epsilon]$, it holds true 
    \begin{equation}
        \rmd \norm{\fob_t(\XintM_t)}^2 = \Big\{2 \langle \fob_t, (\partial_t +\foG_t)\fob_t\rangle + 2\norm{D_x\fob_t}^2 \Big\}(\XintM_t)\rmd t + 2\sqrt{2} \langle \fob_t, D_x\fob_t\rangle(\XintM_t)\rmd B_t\eqsp.
    \end{equation}Consequently, if we assume that the process $(\int_0^s \langle \fob_t, D_x\fob_t\rangle(\XintM_t)\rmd B_t)_{s\in [0,1-\epsilon]}$ is a true martingale (see \Cref{martingale_part} below), we have that
    \begin{equation}\label{RHS_2}
    \begin{split}
        &2 \int_{t_k}^{t} \PE\Big[\norm{D_x\fob_s(\XintM_s)}^2\Big]\rmd s \\
        &\le \PE\Big[\norm{\fob_{t}(\XintM_{t})}^2\Big]- \PE\Big[\norm{\fob_{t_{k}}(\XintM_{t_{k}})}^2\Big] +2\Big| \int_{t_k}^{t} \PE[ \langle \fob_s (\XintM_s),(\partial_s +\foG_s)\fob_s (\XintM_s)\rangle] \rmd s \Big|\eqsp.
    \end{split}
    \end{equation}But then, using, as before, a double change of measure argument and applying Cauchy-Schwartz inequality, we can bound the above expression as follows
    \begin{equation}
        \begin{split}
            &2 \int_{t_k}^{t} \PE\Big[\norm{D_x\fob_s(\XintM_s)}^2\Big]\rmd s \\
        &\le \PE\Big[\norm{\fob_{t}(\XintM_{t})}^2\Big]- \PE\Big[\norm{\fob_{t_{k}}(\XintM_{t_{k}})}^2\Big] \\
        &\quad +2 Z_k\Big| \int_{t_k}^{t} \PE[ \langle \fob_s (\XintM_s),(\partial_s +\foG_s)\fob_s (\XintM_s)\rangle] \uprho(s) \lambda_k^h(\rmd s) \Big|\\
        &\le\PE\Big[\norm{\fob_{t}(\XintM_{t})}^2\Big]- \PE\Big[\norm{\fob_{t_{k}}(\XintM_{t_{k}})}^2\Big] + Z_k\int_{t_k}^{t} \PE\Big[\norm{\fob_s(\XintM_s)}^2\Big] \lambda_k^h(\rmd s)\\
        &\quad + Z_k\int_{t_k}^{t} \PE\Big[\norm{(\partial_s +\foG_s)\fob_s (\XintM_s)}^2 \Big] \uprho(s)^{2} \lambda_k^h(\rmd s)\\
        &= \PE\Big[\norm{\fob_{t}(\XintM_{t})}^2\Big]- \PE\Big[\norm{\fob_{t_{k}}(\XintM_{t_{k}})}^2\Big] + \int_{t_k}^{t} \PE\Big[\norm{\fob_s(\XintM_s)}^2\Big] \uprho(s)^{-1} \rmd s\\
        &\quad + \int_{t_k}^{t} \PE\Big[\norm{(\partial_s +\foG_s)\fob_s (\XintM_s)}^2 \Big] \uprho(s) \rmd s\eqsp.
        \end{split}
    \end{equation}Plugging this bound and \eqref{bd:first_addend} in \eqref{Decomposition_KL_Girsanov}, we get
        \begin{equation}\label{Decomposition_KL_via_generator}
        \begin{split}
            \KL(\nustar_{1-\epsilon}|\nu^{\theta^{\star}}_{1-\epsilon}) &\lesssim \varepsilon^2  + h\PE\Big[\norm{\fob_{1-\epsilon}(\XintM_{1-\epsilon})}^2\Big]+h \int_{0}^{{1-\epsilon}}\PE\Big[ \norm{\fob_s (\XintM_s)}^2\Big]\uprho(s)^{-1}\rmd s\\
         & +h (h^{1/8}+1) \int_{0}^{1-\epsilon}\PE\Big[ \norm{(\partial_s +\foG_s)\fob_s (\XintM_s)}^2\Big] \uprho(s)\rmd s\eqsp.
        \end{split}
        \end{equation}
    We now compute explicitly and upper bound each term appearing in the RHS of \eqref{Decomposition_KL_via_generator}, recalling that, because of \Cref{theo_on_markovian_projection}, for any $s\in [0, 1-\epsilon]$, $\nustar_s=\text{Law}(X^\rmI_s)$.
    We start with
    \begin{equation}
    \begin{split}
       \PE\Big[ \norm{\fob_{1-\epsilon} (\XintM_{1-\epsilon})}^2\Big] &\lesssim \int_{\rset^d}\norm{\frac{\int_{\rset^{2d}} p_{1-\epsilon} (x|x_0)\nabla_x p_{\epsilon}(x_1|x)\tilde{\pi}(x_0,x_1) \rmd x_1 \rmd x_0}{p^{\rmI}_{1-\epsilon}(x)}}^2 p^{\rmI}_{1-\epsilon}(x)\rmd x\eqsp.
    \end{split}
    \end{equation}
    First we use \eqref{simmetry_nabla_heat_kernel}, second we integrate by part, third we apply Jensen inequality and last we rely on \Cref{ass_data_no_early_v2}.
    \begin{equation}\label{term_2_dec_KL_gen}
    \begin{split}
        &\PE\Big[ \norm{\fob_{1-\epsilon} (\XintM_{1-\epsilon})}^2\Big]\\
        &\lesssim\int_{\rset^d}\norm{\frac{\int_{\rset^{2d}} p_{1-\epsilon}(x|x_0) p_{\epsilon}(x_1|x) (\nabla_{x_1}\tilde{\pi}(x_0,x_1)/\tilde{\pi}(x_0,x_1))\tilde{\pi}(x_0,x_1)\rmd x_0 \rmd x_1}{p^{\rmI}_{1-\epsilon}(x)}}^2 p^{\rmI}_{1-\epsilon}(x)\rmd x \\
        &= \PE\Bigg[\norm{\PE\Bigg[\frac{\nabla_{x_1}\tilde{\pi}}{\tilde{\pi}}(X_0^{\rmI},X_1^{\rmI})\Bigg| X^{\rmI}_{1-\epsilon}\Bigg]}^2\Bigg]  \le  \PE\Bigg[\PE\Bigg[\norm{\frac{\nabla_{x_1}\tilde{\pi}}{\tilde{\pi}}(X_0^{\rmI},X_1^{\rmI})}^2\Bigg|X^{\rmI}_{1-\epsilon}\Bigg]\Bigg] \\
        &= \PE\Bigg[\norm{\frac{\nabla_{x_1}\tilde{\pi}}{\tilde{\pi}}(X_0^{\rmI},X_1^{\rmI})}^2\Bigg] =\norm{\frac{\nabla_{x_1}\tilde{\pi}}{\tilde{\pi}}}^2_{\mrl^2(\pi)}\le \norm{\nabla \log \tilde{\pi}}^2_{\mrl^2(\pi)}\eqsp.
    \end{split}
\end{equation}

In the very same way we deal with the third term of the RHS of \eqref{Decomposition_KL_via_generator}.
  \begin{equation}\label{term_3_dec_KL_gen}
    \begin{split}
        &\int_{0}^{1-\epsilon}\PE\Big[ \norm{\fob_s (\XintM_s)}^2\Big]\uprho(s)^{-1}\rmd s\\
        &\lesssim \int_0^{1-\epsilon} \uprho(s)^{-1}\int_{\rset^d}\norm{\frac{\int_{\rset^{2d}} p_s(x|x_0) \nabla_x p_{1-s} (x_1|x)\tilde{\pi}(x_0,x_1) \rmd x_0 \rmd x_1}{p^{\rmI}_s(x)}}^2 p^{\rmI}_s(x)\rmd x \rmd s\\
        &= \int_0^{1-\epsilon} \uprho(s)^{-1}\int_{\rset^d}\norm{\frac{1}{p^\rmI_s(x)}\int_{\rset^{2d}} p_s(x|x_0) p_{1-s} (x_1| x)\frac{\nabla_{x_1}\tilde{\pi}}{\tilde{\pi}}(x_0, x_1)\tilde{\pi}(x_0, x_1) \rmd x_0, \rmd x_1}^2 p^{\rmI}_s(x)\rmd x\\
        &= \int_0^{1-\epsilon} \uprho(s)^{-1}\PE\Bigg[\norm{\PE\Bigg[\frac{\nabla_{x_1}\tilde{\pi}}{\tilde{\pi}}(X_0^{\rmI},X_1^{\rmI})\Bigg| X^{\rmI}_s \Bigg]}^2\Bigg] \rmd s\\
        &\le \int_0^{1-\epsilon} \uprho(s)^{-1}\PE\Bigg[\PE\Bigg[\norm{\frac{\nabla_{x_1}\tilde{\pi}}{\tilde{\pi}}(X_0^{\rmI},X_1^{\rmI})}^2\Bigg| X^{\rmI}_s \Bigg]\Bigg] \rmd s \\
        &=\int_0^{1-\epsilon} \uprho(s)^{-1}\PE\Bigg[\norm{\frac{\nabla_{x_1}\tilde{\pi}}{\tilde{\pi}}(X_0^{\rmI},X_1^{\rmI})}^2\Bigg] \rmd s\\
        &= \int_0^{1-\epsilon} \uprho(s)^{-1} \rmd s\:\norm{\nabla \log \tilde{\pi}}^2_{\mrl^2(\pi)}\lesssim \norm{\nabla \log \tilde{\pi}}^2_{\mrl^2(\pi)} \eqsp,
    \end{split}
\end{equation}where, in the last inequality, we used \eqref{properties_of_auxiliary_measure_on_time}. 
We now turn to the last term, \ie, to
\begin{equation}
        \int_{0}^{1-\epsilon}\PE\Big[ \norm{(\partial_s +\foG_s)\fob_s (\XintM_s)}^2\Big] \uprho(s)\rmd s= \int_{0}^{1-\epsilon} \int_{\rset^{d}} \norm{(\partial_s +\foG_s)\fob_s (x)}^2 \uprho(s) p^{\rmI}_s(x) \rmd x \;\rmd s\eqsp.
    \end{equation} Some computations and \eqref{fp_eq_heat_kernel} lead to
    \begin{equation}
        \begin{split}
             &\partial_s\fob_s(x)\\
             &=2 \frac{\int_{\rset^{2d}} \Delta_x p_{s}(x|x_0) \nabla_x p_{1-s}(x_1|x) \tilde{\pi}(x_0,x_1) \rmd x_0 \rmd x_1}{p^{\rmI}_s(x)}\\
             &-2  \frac{\int_{\rset^{2d}} p_{s}(x|x_0) \nabla_x \Delta_x p_{1-s}(x_1|x) \tilde{\pi}(x_0,x_1) \rmd x_0 \rmd x_1}{p^{\rmI}_s(x)}\\
             &-2  \frac{\int_{\rset^{2d}} p_{s}(x|x_0) \nabla_x  p_{1-s}(x_1|x) \tilde{\pi}(x_0,x_1) \rmd x_0 \rmd x_1 \int_{\rset^{2d}} \Delta_x p_{s}(x|x_0)p_{1-s}(x_1|x) \tilde{\pi}(x_0,x_1) \rmd x_0 \rmd x_1}{(p^{\rmI}_s(x))^2} \\
             &+2  \frac{\int_{\rset^{2d}} p_{s}(x|x_0) \nabla_x  p_{1-s}(x_1|x) \tilde{\pi}(x_0,x_1) \rmd x_0 \rmd x_1 \int_{\rset^{2d}} p_{s}(x|x_0) \Delta_x p_{1-s}(x_1|x) \tilde{\pi}(x_0,x_1) \rmd x_0 \rmd x_1}{(p^{\rmI}_s(x))^2}\eqsp;
        \end{split}
    \end{equation}
    \begin{equation} \label{jacobian_mimicking_drift}
        \begin{split}
            & D_x \fob_s(x)\\
            &= 2 \frac{\int_{\rset^{2d}} \nabla_x p_{1-s}(x_1|x)(\nabla_x p_s(x|x_0))^{\transpose} \tilde{\pi}(x_0,x_1) \rmd x_0 \rmd x_1}{p^{\rmI}_s(x)}\\
            &+ 2 \frac{\int_{\rset^{2d}} p_{s}(x|x_0) \nabla^2_x p_{1-s}(x_1|x) \tilde{\pi}(x_0,x_1) \rmd x_0 \rmd x_1}{p^{\rmI}_s(x)} \\
             &-2  \Big(\frac{\int_{\rset^{2d}} p_{s}(x|x_0) \nabla_x  p_{1-s}(x_1|x) \tilde{\pi}(x_0,x_1) \rmd x_0 \rmd x_1}{p^{\rmI}_s(x)}\Big)\Big(\frac{ \int_{\rset^{2d}} \nabla_x p_{s}(x|x_0) p_{1-s}(x_1|x) \tilde{\pi}(x_0,x_1) \rmd x_0 \rmd x_1}{p^{\rmI}_s(x)}\Big)^{\transpose} \\
             &- 2\Big(\frac{\int_{\rset^{2d}} p_{s}(x|x_0) \nabla_x p_{1-s}(x_1|x) \tilde{\pi}(x_0,x_1) \rmd x_0 \rmd x_1}{p^{\rmI}_s(x)}\Big) \Big(\frac{\int_{\rset^{2d}} p_{s}(x|x_0) \nabla_x p_{1-s}(x_1|x) \tilde{\pi}(x_0,x_1) \rmd x_0 \rmd x_1}{p^{\rmI}_s(x)}\Big)^{\transpose}\eqsp,
        \end{split}
    \end{equation}hence
    \begin{equation} \label{jacobian_mimicking_drift}
        \begin{split}
            & D_x \fob_s(x) \fob_s(x)\\
            &= 4 \frac{\int_{\rset^{2d}} \nabla_x p_{1-s}(x_1|x)(\nabla_x p_s(x|x_0))^{\transpose} \tilde{\pi}(x_0,x_1) \rmd x_0 \rmd x_1}{p^{\rmI}_s(x)}\frac{\int_{\rset^{2d}} p_{s}(x|x_0) \nabla_x  p_{1-s}(x_1|x) \tilde{\pi}(x_0,x_1) \rmd x_0 \rmd x_1}{p^{\rmI}_s(x)}\\
            &+ 4 \frac{\int_{\rset^{2d}} p_{s}(x|x_0) \nabla^2_x p_{1-s}(x_1|x) \tilde{\pi}(x_0,x_1) \rmd x_0 \rmd x_1}{p^{\rmI}_s(x)}\frac{\int_{\rset^{2d}} p_{s}(x|x_0) \nabla_x  p_{1-s}(x_1|x) \tilde{\pi}(x_0,x_1) \rmd x_0 \rmd x_1}{p^{\rmI}_s(x)} \\
             &-4  \!\begin{multlined}[t]  \Bigg(\frac{\int_{\rset^{2d}} p_{s}(x|x_0) \nabla_x  p_{1-s}(x_1|x) \tilde{\pi}(x_0,x_1) \rmd x_0 \rmd x_1}{p^{\rmI}_s(x)}\Bigg)\Bigg(\frac{ \int_{\rset^{2d}} \nabla_x p_{s}(x|x_0) p_{1-s}(x_1|x) \tilde{\pi}(x_0,x_1) \rmd x_0 \rmd x_1}{p^{\rmI}_s(x)}\Bigg)^{\transpose}\\
             \cdot \frac{\int_{\rset^{2d}} p_{s}(x|x_0) \nabla_x  p_{1-s}(x_1|x) \tilde{\pi}(x_0,x_1) \rmd x_0 \rmd x_1}{p^{\rmI}_s(x)}
             \end{multlined}\\
             &- 4\frac{\int_{\rset^{2d}} p_{s}(x|x_0) \nabla_x p_{1-s}(x_1|x) \tilde{\pi}(x_0,x_1) \rmd x_0 \rmd x_1}{p^{\rmI}_s(x)} \norm{\frac{\int_{\rset^{2d}} p_{s}(x|x_0) \nabla_x p_{1-s}(x_1|x) \tilde{\pi}(x_0,x_1) \rmd x_0 \rmd x_1}{p^{\rmI}_s(x)}}^2\eqsp;
        \end{split}
    \end{equation}
    \begin{equation}
    \begin{split}
        &\Delta_x \fob_s(x)\\
        &= 2\frac{\int_{\rset^{2d}} \Delta_x p_{s}(x|x_0) \nabla_x p_{1-s}(x_1|x) \tilde{\pi}(x_0,x_1) \rmd x_0 \rmd x_1}{p^{\rmI}_s(x)}\\
        &+ 4 \frac{\int_{\rset^{2d}} \nabla^2_x p_{1-s}(x|x_0) \nabla_x p_{s}(x_1|x) \tilde{\pi}(x_0,x_1) \rmd x_0 \rmd x_1}{p^{\rmI}_s(x)}\\
        &-4 \!\begin{multlined}[t] \frac{\int_{\rset^{2d}} \nabla_x p_{1-s}(x_1|x)(\nabla_x p_s(x|x_0))^{\transpose} \tilde{\pi}(x_0,x_1) \rmd x_0 \rmd x_1}{p^{\rmI}_s(x)}\\
        \cdot\frac{ \int_{\rset^{2d}} \nabla_x p_s(x|x_0)  p_{1-s}(x_1|x) \tilde{\pi}(x_0,x_1) \rmd x_0 \rmd x_1}{p^{\rmI}_s(x)}
        \end{multlined}\\
        &-4 \!\begin{multlined}[t] \frac{\int_{\rset^{2d}} \nabla_x p_{1-s}(x_1|x)(\nabla_x p_s(x|x_0))^{\transpose} \tilde{\pi}(x_0,x_1) \rmd x_0 \rmd x_1}{p^{\rmI}_s(x)}\\
        \cdot\frac{ \int_{\rset^{2d}}  p_s(x|x_0) \nabla_x p_{1-s}(x_1|x) \tilde{\pi}(x_0,x_1) \rmd x_0 \rmd x_1}{p^{\rmI}_s(x)}
        \end{multlined}\\
        &-4 \!\begin{multlined}[t] \frac{\int_{\rset^{2d}} \langle \nabla_x p_{1-s}(x_1|x),\nabla_x p_s(x|x_0)\rangle \tilde{\pi}(x_0,x_1) \rmd x_0 \rmd x_1}{p^{\rmI}_s(x)}\\
        \cdot\frac{ \int_{\rset^{2d}}  p_s(x|x_0) \nabla_x p_{1-s}(x_1|x) \tilde{\pi}(x_0,x_1) \rmd x_0 \rmd x_1}{p^{\rmI}_s(x)}
        \end{multlined}\\
        &+ 2  \frac{\int_{\rset^{2d}} p_{s}(x|x_0) \nabla_x \Delta_x p_{1-s}(x_1|x) \tilde{\pi}(x_0,x_1) \rmd x_0 \rmd x_1}{p^{\rmI}_s(x)}\\
        &- 4\frac{\int_{\rset^{2d}} p_s(x|x_0) \nabla^2_x p_{1-s}(x_1|x) \tilde{\pi}(x_0,x_1) \rmd x_0 \rmd x_1 \int_{\rset^{2d}} \nabla_x p_s(x|x_0)  p_{1-s}(x_1|x) \tilde{\pi}(x_0,x_1) \rmd x_0 \rmd x_1}{(p^{\rmI}_s(x))^2}\\
        &- 4\frac{\int_{\rset^{2d}} p_s(x|x_0) \nabla^2_x p_{1-s}(x_1|x) \tilde{\pi}(x_0,x_1) \rmd x_0 \rmd x_1 \int_{\rset^{2d}}  p_s(x|x_0)  \nabla_x p_{1-s}(x_1|x) \tilde{\pi}(x_0,x_1) \rmd x_0 \rmd x_1}{(p^{\rmI}_s(x))^2}\\
        &- 2\frac{\int_{\rset^{2d}} \Delta_x p_s(x|x_0)  p_{1-s}(x_1|x) \tilde{\pi}(x_0,x_1) \rmd x_0 \rmd x_1 \int_{\rset^{2d}} p_s(x|x_0) \nabla_x p_{1-s}(x_1|x) \tilde{\pi}(x_0,x_1) \rmd x_0 \rmd x_1}{(p^{\rmI}_s(x))^2}\\
        &-2 \frac{\int_{\rset^{2d}}  p_s(x|x_0) \Delta_x p_{1-s}(x_1|x) \tilde{\pi}(x_0,x_1) \rmd x_0 \rmd x_1 \int_{\rset^{2d}} p_s(x|x_0) \nabla_x p_{1-s}(x_1|x) \tilde{\pi}(x_0,x_1) \rmd x_0 \rmd x_1}{(p^{\rmI}_s(x))^2}\\
        & +8  \!\begin{multlined}[t]\frac{\int_{\rset^{2d}} p_s(x|x_0) \nabla_x p_{1-s}(x_1|x) \tilde{\pi}(x_0,x_1) \rmd x_0 \rmd x_1}{p^{\rmI}_s(x)}\Bigg(\frac{\int_{\rset^{2d}} \nabla_x p_s(x|x_0) p_{1-s}(x_1|x) \tilde{\pi}(x_0,x_1) \rmd x_0 \rmd x_1 }{p^{\rmI}_s(x)}\Bigg)^{\transpose}\\
        \cdot\frac{\int_{\rset^{2d}} p_s(x|x_0) \nabla_x p_{1-s}(x_1|x) \tilde{\pi}(x_0,x_1) \rmd x_0 \rmd x_1}{p^{\rmI}_s(x)}
        \end{multlined}\\
        & +4 
        \norm{\frac{\int_{\rset^{2d}} p_s(x|x_0) \nabla_x p_{1-s}(x_1|x) \tilde{\pi}(x_0,x_1) \rmd x_0 \rmd x_1}{p^{\rmI}_s(x)}}^2
    \cdot\frac{\int_{\rset^{2d}}  p_s(x|x_0) \nabla_x p_{1-s}(x_1|x) \tilde{\pi}(x_0,x_1) \rmd x_0 \rmd x_1}{p^{\rmI}_s(x)}\\
        & +4 
        \norm{\frac{\int_{\rset^{2d}} \nabla_x p_s(x|x_0) p_{1-s}(x_1|x) \tilde{\pi}(x_0,x_1) \rmd x_0 \rmd x_1}{p^{\rmI}_s(x)}}^2
    \cdot\frac{\int_{\rset^{2d}}  p_s(x|x_0) \nabla_x p_{1-s}(x_1|x) \tilde{\pi}(x_0,x_1) \rmd x_0 \rmd x_1}{p^{\rmI}_s(x)}\eqsp.
        \end{split}
    \end{equation}
    So, we get
    \begin{equation}\label{ito_drift_1}
        \begin{split}
             &(\partial_s +\foG_s)\fob_s(x)=\sum_{k=1}^{6} A_s^k(x)\eqsp,
        \end{split}
    \end{equation}where we have defined
    \begin{multline}
        A^1_s(x)=-4\frac{\int_{\rset^{2d}} \nabla_x p_{1-s}(x_1|x)(\nabla_x p_s(x|x_0))^{\transpose} \tilde{\pi}(x_0,x_1) \rmd x_0 \rmd x_1}{p^{\rmI}_s(x)}\\
        \cdot\frac{ \int_{\rset^{2d}} \nabla_x p_s(x|x_0)  p_{1-s}(x_1|x) \tilde{\pi}(x_0,x_1) \rmd x_0 \rmd x_1}{p^{\rmI}_s(x)}\eqsp,
        \end{multline}
    \begin{multline}
    A^2_s(x)= -4\frac{\int_{\rset^{2d}} \langle \nabla_x p_{1-s}(x_1|x), \nabla_x p_s(x|x_0)\rangle \tilde{\pi}(x_0,x_1) \rmd x_0 \rmd x_1}{p^{\rmI}_s(x)}\\
        \cdot\frac{\int_{\rset^{2d}}  p_s(x|x_0) \nabla_x p_{1-s}(x_1|x) \tilde{\pi}(x_0,x_1) \rmd x_0 \rmd x_1}{p^{\rmI}_s(x)}\eqsp,
    \end{multline}
    \begin{multline}
        A_s^3(x)= 4\frac{\norm{\int_{\rset^{2d}} \nabla_x p_s(x|x_0) p_{1-s}(x_1|x) \tilde{\pi}(x_0,x_1) \rmd x_0 \rmd x_1}^2 }{(p^{\rmI}_s(x))^2}\\
        \cdot\frac{\int_{\rset^{2d}} p_s(x|x_0) \nabla_x p_{1-s}(x_1|x) \tilde{\pi}(x_0,x_1) \rmd x_0 \rmd x_1}{p^{\rmI}_s(x)}\eqsp, 
        \end{multline}
        \begin{multline}
    A_s^4(x)=4\frac{\int_{\rset^{2d}} p_s(x|x_0) \nabla_x p_{1-s}(x_1|x) \tilde{\pi}(x_0,x_1) \rmd x_0 \rmd x_1}{p^{\rmI}_s(x)} \\
    \cdot\Bigg(\frac{\int_{\rset^{2d}} \nabla_x p_s(x|x_0) p_{1-s}(x_1|x) \tilde{\pi}(x_0,x_1) \rmd x_0 \rmd x_1}{p^{\rmI}_s(x)}\Bigg)^{\transpose}\frac{\int_{\rset^{2d}} p_s(x|x_0) \nabla_x p_{1-s}(x_1|x) \tilde{\pi}(x_0,x_1) \rmd x_0 \rmd x_1}{p^{\rmI}_s(x)} \eqsp.
    \end{multline}
    \begin{multline}
        A_s^5(x)=4\frac{\int_{\rset^{2d}} \Delta_x p_s(x|x_0) \nabla_x p_{1-s}(x_1|x) \tilde{\pi}(x_0,x_1) \rmd x_0 \rmd x_1}{p^{\rmI}_s(x)} \\
        - 4\frac{\int_{\rset^{2d}} \Delta_x p_s(x|x_0)  p_{1-s}(x_1|x) \tilde{\pi}(x_0,x_1) \rmd x_0 \rmd x_1 \int_{\rset^{2d}} p_s(x|x_0) \nabla_x p_{1-s}(x_1|x) \tilde{\pi}(x_0,x_1) \rmd x_0 \rmd x_1}{(p^{\rmI}_s(x))^2}\eqsp,
    \end{multline}
    \begin{multline}
        A_s^6(x)= 4\frac{\int_{\rset^{2d}} \nabla^2_x p_{1-s}(x_1|x) \nabla_x p_s(x|x_0)  \tilde{\pi}(x_0,x_1) \rmd x_0 \rmd x_1}{p^{\rmI}_s(x)}\\
        - 4\frac{\int_{\rset^{2d}} p_s(x|x_0) \nabla^2_x p_{1-s}(x_1|x) \tilde{\pi}(x_0,x_1) \rmd x_0 \rmd x_1 \int_{\rset^{2d}} \nabla_x p_s(x|x_0)  p_{1-s}(x_1|x) \tilde{\pi}(x_0,x_1) \rmd x_0 \rmd x_1}{(p^{\rmI}_s(x))^2}\eqsp.
    \end{multline}Therefore
    \begin{equation}
        \int_{0}^{1-\epsilon}\PE\Big[ \norm{(\partial_s +\foG_s)\fob_s (\XintM_s)}^2\Big] \uprho(s)\rmd s\lesssim \sum_{k=1}^6 \int_0^{1-\epsilon} \int_{\rset^d}\norm{A_s^k(x)}^2 \uprho(s) p_s^{\rmI} (x)\rmd x\; \rmd s\eqsp.
    \end{equation}
We now bound each term $A_s^k$ in the sum.\\
Using \eqref{properties_of_auxiliary_measure_on_time} and Young inequality, we have
\begin{equation}
    \begin{split}
        &\int_0^{1-\epsilon} \int_{\rset^d}\norm{A_s^1(x)}^2 \uprho(s) p_s^{\rmI} (x)\rmd x\; \rmd s\\
        &\lesssim \int_0^{1-\epsilon}\int_{\rset^d}\norm{\frac{\int_{\rset^{2d}} \nabla_x p_s(x|x_0)  p_{1-s}(x_1|x) \tilde{\pi}(x_0,x_1) \rmd x_0 \rmd x_1 }{p^{\rmI}_s(x)}}^4   p^{\rmI}_s(x)\rmd x \rmd s \\
        &\quad +\int_0^{1-\epsilon}\int_{\rset^d}\norm{\frac{\int_{\rset^{2d}} \nabla_x p_s(x|x_0) (\nabla_x p_{1-s}(x_1|x) )^{\transpose}\tilde{\pi}(x_0,x_1) \rmd x_0 \rmd x_1 }{p^{\rmI}_s(x)}}_{\mathrm{op}}^4 p^{\rmI}_s(x)\rmd x \rmd s\eqsp.
    \end{split}
\end{equation}
To bound the first term we proceed as in \eqref{term_3_dec_KL_gen}, that is, we first exploit \eqref{simmetry_nabla_heat_kernel} and the integration by part formula and we then use Jensen inequality and the properties of the conditional expectation.
\begin{equation}\label{easy_term}
\begin{split}
    \int_0^{1-\epsilon}\int_{\rset^d}\norm{\frac{\int_{\rset^{2d}} \nabla_x p_s(x|x_0)  p_{1-s}(x_1|x) \tilde{\pi}(x_0,x_1) \rmd x_0 \rmd x_1 }{p^{\rmI}_s(x)}}^4 p^{\rmI}_s(x)\rmd x \rmd s&\le \norm{\nabla\log \tilde{\pi}}_{\mrl^4(\pi)}^4\eqsp. 
    \end{split}
\end{equation}To bound the second term we first split the time interval $[0,1-\epsilon]$ in two, $[0,1/2]$ and $[1/2,1-\epsilon]$, we second make either $\nabla_x p_s(x|x_0)$ or $\nabla_x p_{1-s}(x_1|x)$ explicit and we last proceed as before, \ie, we exploit \eqref{simmetry_nabla_heat_kernel}, we integrate by parts and we use Young and Jensen inequalities.
\begin{equation}
\begin{split}
    &\int_0^{1-\epsilon}\int_{\rset^d}\norm{\frac{\int_{\rset^{2d}} \nabla_x p_{1-s}(x_1|x) (\nabla_x p_s(x|x_0))^{\transpose} \tilde{\pi}(x_0,x_1) \rmd x_0 \rmd x_1 }{p^{\rmI}_s(x)} }_{\mathrm{op}}^4  p^{\rmI}_s(x)\rmd x \rmd s\\
    &=\int_0^{1/2}\int_{\rset^d}\norm{\frac{\int_{\rset^{2d}} \nabla_x p_s(x|x_0) (\nabla_x p_{1-s}(x_1|x))^{\transpose} \tilde{\pi}(x_0,x_1) \rmd x_0 \rmd x_1 }{p^{\rmI}_s(x)} }_{\mathrm{op}}^4  p^{\rmI}_s(x)\rmd x \rmd s \\
    &\quad + \int_{1/2}^{1-\epsilon}\int_{\rset^d}\norm{\frac{\int_{\rset^{2d}} \nabla_x p_{1-s}(x_1|x) (\nabla_x p_s(x|x_0))^{\transpose} \tilde{\pi}(x_0,x_1) \rmd x_0 \rmd x_1 }{p^{\rmI}_s(x)} }_{\mathrm{op}}^4  p^{\rmI}_s(x)\rmd x \rmd s\\
    &\!\begin{multlined} \lesssim \int_0^{1/2}\int_{\rset^d}\norm{\frac{1}{p^{\rmI}_s(x)}\int_{\rset^{2d}} \frac{\nabla_{x_0}\tilde{\pi}}{\tilde{\pi}}(x_0,x_1)\frac{(x_1-x)^{\transpose}}{1-s} p_s(x|x_0) p_{1-s}(x_1|x)  \tilde{\pi}(x_0,x_1) \rmd x_0 \rmd x_1 }_{\mathrm{op}}^4 \\
    \cdot p^{\rmI}_s(x)\rmd x \rmd s
    \end{multlined}\\
    &\quad \!\begin{multlined} +\int_{1/2}^{1-\epsilon}\int_{\rset^d}\norm{\frac{1}{p^{\rmI}_s(x)}\int_{\rset^{2d}} \frac{\nabla_{x_1}\tilde{\pi}}{\tilde{\pi}}(x_0,x_1)\frac{(x-x_0)^{\transpose}}{s} p_s(x|x_0) p_{1-s}(x_1|x)  \tilde{\pi}(x_0,x_1) \rmd x_0 \rmd x_1 }_{\mathrm{op}}^4 \\
    \cdot p^{\rmI}_s(x)\rmd x \rmd s
    \end{multlined}\\
    &\lesssim \int_0^{1/2} \PE\Bigg[\norm{\Big( \frac{\nabla_{x_0}\tilde{\pi}}{\tilde{\pi}}(X^{\rmI}_0, X^{\rmI}_1)\Big)(X^{\rmI}_1-X^{\rmI}_s)^{\transpose}}_{\mathrm{op}}^4\Bigg]\rmd s\\
    &\quad +\int_{1/2}^1\PE\Bigg[\norm{\Big( \frac{\nabla_{x_1}\tilde{\pi}}{\tilde{\pi}}(X^{\rmI}_0, X^{\rmI}_1)\Big)(X^{\rmI}_s-X^{\rmI}_0)^{\transpose}}_{\mathrm{op}}^4\Bigg]\rmd s\\
    &\lesssim \int_0^{1/2}\PE\Big[ \norm{X^{\rmI}_1-X^{\rmI}_s}^8\Big]\rmd s+\int_{1/2}^{1}\PE\Big[ \norm{X^{\rmI}_s-X^{\rmI}_0}^8\Big]\rmd s+\norm{\nabla\log \tilde{\pi}}_{\mrl^8(\pi)}^8\\
    &\lesssim d^4+ \moment[8]{\mu}+\moment[8]{\nustar}+\norm{\nabla\log \tilde{\pi}}_{\mrl^8(\pi)}^8\eqsp,
\end{split}
    \end{equation}
where, in the last inequality, we have used \Cref{lemma_on_moments_bounded}.\\
% \begin{commenta}
%     To bound the first addend, we rely on \Cref{ne_auxiliary_lemma} and \Cref{ne_auxiliary_lemma_1}.
% \begin{equation}
%     \begin{split}
%         \int_0^1\PE\Bigg[\norm{\frac{X^{\rmI}_s-X^{\rmI}_1}{1-s}}^8 s^{7/2}\Bigg]\rmd s
%         &= \int_0^1\PE\Bigg[ \norm{\frac{\foX_s-\foX_1}{1-s}}^8 s^{7/2}\Bigg]\rmd s\\
%         &\lesssim \int_0^1 \PE\Bigg[ \frac{\norm{\int_{s}^1 \fobi_r \rmd r}^8+\norm{\foB_1-\foB_s}^8}{(1-s)^8} s^{7/2}\Bigg]\rmd s\\
%         &\lesssim \int_0^1 \PE\Bigg[ \frac{\norm{\int_s^1 \fobi_r \rmd r}^8}{(1-s)^8} \Bigg]s^{7/2}\rmd s +\int_0^1 s^{-1/2}\rmd s\\
%         &\lesssim \norm{\nabla\log \tilde{\pi}}_{\mrl^8(\pi)}^8+ \norm{\nabla\log \mu}^8_{\mrl^8(\mu)}+1\eqsp.
%     \end{split}
% \end{equation}
% \marta{It is here that I do need $s^{-7/8}$ in the definition of \eqref{Auxiliary_measure_on_time}. $s^{-3/4}$ wouldn't have been enough.}
% \end{commenta}
In a similar way we get
\begin{equation}
     \int_0^{1-\epsilon} \int_{\rset^d}\norm{A_s^2(x)}^2 \uprho(s) p_s^{\rmI} (x)\rmd x\; \rmd s \lesssim  d^4+ \moment[8]{\mu}+\moment[8]{\nustar}+\norm{\nabla\log \tilde{\pi}}_{\mrl^8(\pi)}^8\eqsp.
\end{equation} The argument to bound $A_s^2$ resembles the one used to bound $A_s^1$ and we therefore omit it. \\
We now focus on $A_s^3$. To bound such term, we first use \eqref{properties_of_auxiliary_measure_on_time} and Young inequality and we second proceed as in \eqref{easy_term} and \eqref{term_3_dec_KL_gen}.
\begin{equation}
    \begin{split}
        &\int_0^{1-\epsilon} \int_{\rset^d}\norm{A_s^3(x)}^2 \uprho(s) p_s^{\rmI} (x)\rmd x\; \rmd s\\
        &\lesssim\int_0^{1-\epsilon} \int_{\rset^d} \norm{\frac{\int_{\rset^{2d}} \nabla_x p_s(x|x_0) p_{1-s}(x_1|x) \tilde{\pi}(x_0,x_1) \rmd x_0 \rmd x_1 }{p^{\rmI}_s(x)} }^8 p^{\rmI}_s(x)\rmd x \rmd s \\
        &\quad+ \int_0^{1-\epsilon} \int_{\rset^d} \norm{\frac{\int_{\rset^{2d}} p_s(x|x_0) \nabla_x p_{1-s}(x_1|x) \tilde{\pi}(x_0,x_1) \rmd x_0 \rmd x_1 }{p^{\rmI}_s(x)} }^4  p^{\rmI}_s(x)\rmd x \rmd s\\
        &\lesssim \PE\Bigg[\norm{\frac{\nabla_{x_0}\tilde{\pi}}{\tilde{\pi}}(X_0^{\rmI},X_1^{\rmI})}^8\Bigg] + \PE\Bigg[\norm{\frac{\nabla_{x_1}\tilde{\pi}}{\tilde{\pi}}(X_0^{\rmI},X_1^{\rmI})}^4\Bigg] \\
        &\lesssim \norm{\nabla\log \tilde{\pi}}_{\mrl^8(\pi)}^8 + \norm{\nabla\log \tilde{\pi}}_{\mrl^4(\pi)}^4 \eqsp.
    \end{split}
\end{equation}

Proceeding in a similar way (we omit the argument, as it is almost a duplication of the previous one), we get
\begin{equation}
    \int_0^{1-\epsilon} \int_{\rset^d}\norm{A_s^4(x)}^2 \uprho(s) p_s^{\rmI} (x)\rmd x\; \rmd s \lesssim \norm{\nabla\log \tilde{\pi}}_{\mrl^8(\pi)}^8 + \norm{\nabla\log \tilde{\pi}}_{\mrl^4(\pi)}^4\eqsp.
\end{equation} 
% We now tackle $A_7$. To bound this term, we proceed as in \eqref{term_3_dec_KL_gen} and \eqref{easy_term} , that is we integrate by parts, apply Jensen inequality and use \Cref{ass_data_no_early_v2}.By doing so we get 
% \begin{equation}
%     \begin{split}
%         \int_0^{1-\epsilon} \int_{\rset^d}\norm{A_7(x)}^2 \uprho(s) p_s^{\rmI} (x)\rmd x\; \rmd s \lesssim \norm{\nabla\log \tilde{\pi}}_{\mrl^6(\pi)}^6\eqsp.
%     \end{split}
% \end{equation}
We now turn to $A_s^5$. Because of \eqref{simmetry_delta_heat_kernel}, $A_s^5$ rewrites as
    \begin{equation}
        \begin{split}
            A_s^5(x)&=\frac{1}{p^{\rmI}_s(x)} \int_{\rset^{2d}} \Big\{\frac{-d}{2s} + \frac{\norm{x-x_0}^2}{4s^2}\Big\} \frac{x_1-x}{2(1-s)} p_s(x|x_0) p_{1-s}(x_1|x) \tilde{\pi}(x_0,x_1) \rmd x_0 \rmd x_1 \\
            &\quad -\Big(\frac{1}{p^{\rmI}_s(x)} \int_{\rset^{2d}} \Big\{\frac{-d}{2s} + \frac{\norm{x-x_0}^2}{4s^2}\Big\} p_s(x|x_0) p_{1-s}(x_1|x) \tilde{\pi}(x_0,x_1) \rmd x_0 \rmd x_1\Big)\\
            &\quad \cdot \Big( \frac{1}{p^{\rmI}_s(x)} \int_{\rset^{2d}} \frac{x_1-x}{2(1-s)} p_s(x|x_0) p_{1-s}(x_1|x) \tilde{\pi}(x_0,x_1) \rmd x_0 \rmd x_1\Big)\\
            &\lesssim \frac{1}{p^{\rmI}_s(x)}\int_{\rset^{2d}} \frac{\norm{x-x_0}^2}{s^2}\frac{x_1-x}{1-s} p_s(x|x_0) p_{1-s}(x_1|x) \tilde{\pi}(x_0,x_1) \rmd x_0 \rmd x_1 \\
            &\quad -\Big(\frac{1}{p^{\rmI}_s(x)} \int_{\rset^{2d}} \frac{\norm{x-x_0}^2}{s^2} p_s(x|x_0) p_{1-s}(x_1|x) \tilde{\pi}(x_0,x_1) \rmd x_0 \rmd x_1\Big)\\
            &\quad \cdot\Big( \frac{1}{p^{\rmI}_s(x)} \int_{\rset^{2d}} \frac{x_1-x}{1-s} p_s(x|x_0) p_{1-s}(x_1|x) \tilde{\pi}(x_0,x_1) \rmd x_0 \rmd x_1\Big)\eqsp.
        \end{split}
    \end{equation}Therefore, we have that
    \begin{multline}
        \int_0^{1-\epsilon} \int_{\rset^d}\norm{A_s^5(x)}^2 \uprho(s) p_s^{\rmI} (x)\rmd x\;  \rmd s \lesssim \int_0^{1-\epsilon} \PE\Bigg[\left\Vert\PE\Bigg[\frac{\norm{X^{\rmI}_s-X^{\rmI}_0}^2}{s^2}\frac{X^{\rmI}_1-X^{\rmI}_s}{1-s}\Bigg|X^{\rmI}_s \Bigg]\right.\\ \left.
             - \PE\Bigg[\frac{\norm{X^{\rmI}_s-X^{\rmI}_0}^2}{s^2}\Bigg| X^{\rmI}_s \Bigg]\PE\Bigg[\frac{X^{\rmI}_1-X^{\rmI}_s}{1-s}\Bigg|X^{\rmI}_s \Bigg]\right\Vert^2\Bigg]\uprho(s)\rmd s\eqsp.
    \end{multline}
            We now split the time interval $[0,1-\epsilon]$ in two, $[0,1/2]$, $[1/2,1-\epsilon]$ and we focus on the first one. So we look at $s\in [0,1/2]$ and we try to bound the integrand. Using
    \Cref{lemma_on_moments_bounded}, \Cref{ne_auxiliary_lemma}, \Cref{ne_auxiliary_lemma_1} and standard and well-known inequalities (Cauchy-Schwarz, Young and Jensen inequalities), we get that for $s\in [0,1/2]$
\begin{equation}
        \begin{split}
            &\PE\Bigg[\norm{\PE\Bigg[\frac{\norm{X^{\rmI}_s-X^{\rmI}_0}^2}{s^2}\frac{X^{\rmI}_1-X^{\rmI}_s}{1-s}\Bigg| X^{\rmI}_s \Bigg]- \PE\Bigg[\frac{\norm{X^{\rmI}_s-X^{\rmI}_0}^2}{s^2}\Bigg| X^{\rmI}_s \Bigg]\PE\Bigg[\frac{X^{\rmI}_1-X^{\rmI}_s}{1-s}\Bigg| X^{\rmI}_s \Bigg]}^2\Bigg] s^{7/8}\\
            &\!\begin{multlined} =\PE\Bigg[\left\Vert\PE\Bigg[\frac{\norm{\baX_{1-s}-\baX_1}^2}{s^2}\frac{\baX_0-\baX_{1-s}}{1-s}\Bigg| \baX_{1-s} \Bigg]\right.\\ \left.- \PE\Bigg[\frac{\norm{\baX_{1-s}-\baX_1}^2}{s^2}\Bigg| \baX_{1-s} \Bigg]\PE\Bigg[\frac{\baX_0-\baX_{1-s}}{1-s}\Bigg| \baX_{1-s} \Bigg]\right\Vert^2\Bigg] s^{7/8}
            \end{multlined}\\
            & \!\begin{multlined} =
        \PE\Bigg[\left\Vert\PE\Bigg[\frac{\norm{\overleftarrow{f}_{1-s}^s}^2+2\langle \overleftarrow{f}_{1-s}^s, \overleftarrow{g}_{1-s}^s \rangle+ \norm{\overleftarrow{g}_{1-s}^s}^2 }{s^2}(\baX_0-\baX_{1-s})\Bigg|\baX_{1-s} \Bigg]\right.\\\left.
        -
                \PE\Bigg[\frac{\norm{\overleftarrow{f}_{1-s}^s}^2+2\langle \overleftarrow{f}_{1-s}^s, \overleftarrow{g}_{1-s}^s \rangle + \norm{\overleftarrow{g}_{1-s}^s}^2}{s^2}\Bigg| \baX_{1-s} \Bigg]\PE\Big[\baX_0-\baX_{1-s}\Big| \baX_{1-s}\Big]\right\Vert^2\Bigg]s^{7/8}
        \end{multlined}\\
            & \!\begin{multlined} =
        \PE\Bigg[\left\Vert\PE\Bigg[\frac{\norm{\overleftarrow{f}_{1-s}^s}^2+2\langle \overleftarrow{f}_{1-s}^s, \overleftarrow{g}_{1-s}^s \rangle}{s^2}(\baX_0-\baX_{1-s})\Bigg|\baX_{1-s} \Bigg]\right.\\\left.
        -
                \PE\Bigg[\frac{\norm{\overleftarrow{f}_{1-s}^s}^2+2\langle \overleftarrow{f}_{1-s}^s, \overleftarrow{g}_{1-s}^s \rangle}{s^2}\Bigg| \baX_{1-s} \Bigg]\PE\Big[\baX_0-\baX_{1-s}\Big| \baX_{1-s}\Big]\right\Vert^2\Bigg]s^{7/8}
        \end{multlined}\\
&\lesssim  \PE\Bigg[\norm{\PE\Bigg[\frac{\norm{\overleftarrow{f}_{1-s}^s}^2+2\langle \overleftarrow{f}_{1-s}^s, \overleftarrow{g}_{1-s}^s \rangle}{s^2}(\baX_0-\baX_{1-s})\Bigg| \baX_{1-s} \Bigg]}^2\Bigg]s^{7/8}
\end{split}
                \end{equation}
                \begin{equation}
                \begin{split}
         &\quad +
                \PE\Bigg[\norm{\PE\Bigg[\frac{\norm{\overleftarrow{f}_{1-s}^s}^2+2\langle \overleftarrow{f}_{1-s}^s, \overleftarrow{g}_{1-s}^s \rangle}{s^2}\Bigg| \baX_{1-s} \Bigg]\PE\Big[\baX_0-\baX_{1-s}\Big| \baX_{1-s} \Big]}^2\Bigg]s^{7/8}\\
                &\lesssim \PE\Bigg[\norm{\frac{\norm{\overleftarrow{f}_{1-s}^s}^2+2\langle \overleftarrow{f}_{1-s}^s, \overleftarrow{g}_{1-s}^s \rangle}{s^2}(\baX_{0}-\baX_{1-s})}^2\Bigg]s^{7/8}\\
                &\quad +
                \PE\Bigg[\norm{\PE\Bigg[\frac{\norm{\overleftarrow{f}_{1-s}^s}^2+2\langle \overleftarrow{f}_{1-s}^s, \overleftarrow{g}_{1-s}^s \rangle}{s^2}\Bigg| \baX_{1-s} \Bigg]}^4 \Bigg]s^{7/4}+\PE\Bigg[\norm{\PE\Big[\baX_0-\baX_{1-s}\Big| \baX_{1-s}\Big]}^4\Bigg]\\
                &\lesssim \PE\Bigg[\norm{\frac{\norm{\overleftarrow{f}_{1-s}^s}^2+2\langle \overleftarrow{f}_{1-s}^s, \overleftarrow{g}_{1-s}^s \rangle}{s^2}}^4 \Bigg]s^{7/4} +\PE\Big[\norm{\baX_0-\baX_{1-s}}^4\Big]\\
                &\lesssim\PE\Bigg[\frac{\norm{\overleftarrow{f}_{1-s}^s}^8}{s^8}\Bigg] +\PE\Bigg[\frac{\langle \overleftarrow{f}_{1-s}^s, \overleftarrow{g}_{1-s}^s\rangle^4}{s^8}\Bigg]s^{7/4} +\PE\Big[\norm{\baX_{0}-\baX_{1-s}}^4\Big]\\
                &\lesssim \PE\Bigg[\frac{\norm{\overleftarrow{f}_{1-s}^s}^8}{s^8}\Bigg] +\PE\Bigg[\frac{\norm{\overleftarrow{g}_{1-s}^s}^8}{s^8}\Bigg]s^{7/2} +\PE\Big[\norm{\baX_0-\baX_{1-s}}^4\Big]\\
                &\lesssim \PE\Bigg[\frac{\norm{\overleftarrow{f}_{1-s}^s}^8}{s^8}\Bigg] +d^4 s^{-1/2} +\PE\Big[\norm{X^{\rmI}_1-X^{\rmI}_s}^4\Big]\\
                &\lesssim \norm{\nabla\log \tilde{\pi}}_{\mrl^8(\pi)}^8+ \norm{\nabla\log \nustar}^8_{\mrl^8(\nustar)} +d^4 s^{-1/2}+ d^2 +\moment[4]{\mu}+\moment[4]{\nustar} 
                \eqsp.
        \end{split}
    \end{equation}
But then, we obtain that
\begin{multline}
     \int_0^{1/2} \int_{\rset^d}\norm{A_s^5(x)}^2 \uprho(s) p_s^{\rmI} (x)\rmd x\;  \rmd s\lesssim d^4+ \moment[4]{\mu}+\moment[4]{\nustar} + \norm{\nabla\log \tilde{\pi}}_{\mrl^8(\pi)}^8\\
     + \norm{\nabla\log \nustar}^8_{\mrl^8(\nustar)}\eqsp.
\end{multline}

We now focus on the second time interval, that is we look at $s\in [1/2,1-\epsilon]$ and try to bound the integrand. Using  \eqref{properties_of_auxiliary_measure_on_time} and proceeding in a similar way, we get that
\begin{equation}
    \begin{split}
        &\PE\Bigg[\norm{\PE\Bigg[\frac{\norm{X^{\rmI}_s-X^{\rmI}_0}^2}{s^2}\frac{X^{\rmI}_1-X^{\rmI}_s}{1-s}\Bigg| X^{\rmI}_s \Bigg]- \PE\Bigg[\frac{\norm{X^{\rmI}_s-X^{\rmI}_0}^2}{s^2}\Bigg|X^{\rmI}_s \Bigg]\PE\Bigg[\frac{X^{\rmI}_1-X^{\rmI}_s}{1-s}\Bigg| X^{\rmI}_s\Bigg]}^2\Bigg]\uprho(s)\\
        &\!\begin{multlined} \lesssim \PE\Bigg[\left\Vert\PE\Bigg[\norm{\foX_s-\foX_0}^2\frac{\overrightarrow{f}_s^{1-s}+\overrightarrow{g}_s^{1-s}}{1-s}\Bigg| \foX_s\Bigg]\right.\\ \left.- \PE\Big[\norm{\foX_s-\foX_0}^2\Big| \foX_s \Big]\PE\Bigg[\frac{\overrightarrow{f}_s^{1-s}+\overrightarrow{g}_s^{1-s}}{1-s}\Bigg| \foX_s\Bigg]\right\Vert^2\Bigg]
        \end{multlined}\\
        &=\PE\Bigg[\norm{\PE\Bigg[\norm{\foX_s-\foX_0}^2\frac{\overrightarrow{f}_s^{1-s}}{1-s}\Bigg| \foX_s\Bigg]- \PE\Big[\norm{\foX_s-\foX_0}^2\Big| \foX_s \Big]\PE\Bigg[\frac{\overrightarrow{f}_s^{1-s}}{1-s}\Bigg| \foX_s\Bigg]}^2\Bigg]\\
        &\lesssim \PE\Big[\norm{\foX_s-\foX_0}^8\Big]+\PE\Bigg[\frac{\norm{\overrightarrow{f}_s^{1-s}}^4}{(1-s)^4}\Bigg]=\PE\Big[\norm{X^{\rmI}_s-X^{\rmI}_0}^8\Big]+\PE\Bigg[\frac{\norm{\overrightarrow{f}_s^{1-s}}^4}{(1-s)^4}\Bigg]\\
        &\lesssim d^4+ \moment[8]{\mu}+\moment[8]{\nustar} + \norm{\nabla\log \tilde{\pi}}_{\mrl^4(\pi)}^4+ \norm{\nabla\log \mu}^4_{\mrl^4(\mu)}\eqsp.
    \end{split}
\end{equation}But then, we have that
\begin{multline}
     \int_{1/2}^{1-\epsilon} \int_{\rset^d}\norm{A_s^5(x)}^2 \uprho(s)p_s^{\rmI} (x)\rmd x\;  \rmd s\lesssim d^4+ \moment[8]{\mu}+\moment[8]{\nustar} + \norm{\nabla\log \tilde{\pi}}_{\mrl^8(\pi)}^8\\
     + \norm{\nabla\log \mu}^8_{\mrl^8(\mu)}\eqsp.
\end{multline}To conclude, we can bound $A_s^5$ as follows
\begin{multline}
    \int_0^{1-\epsilon} \int_{\rset^d}\norm{A_s^5(x)}^2 \uprho(s) p_s^{\rmI} (x)\rmd x\;\rmd s\lesssim d^4+ \moment[8]{\mu}+\moment[8]{\nustar} + \norm{\nabla\log \tilde{\pi}}_{\mrl^8(\pi)}^8\\
    + \norm{\nabla\log \mu}^8_{\mrl^8(\mu)}+\norm{\nabla\log \nustar}^8_{\mrl^8(\nustar)}\eqsp.
\end{multline}
% We tackle $A_8$ in the very same way, getting the very same bound. 
We are left with $A_s^6$. Using \eqref{simmetry_hessian_heat_kernel}, we can rewrite $A_s^6$ as follows
    \begin{equation}
        \begin{split}
            &A_s^6(x)\\
            &=\frac{1}{p^{\rmI}_s(x)}\int_{\rset^{2d}} \Big\{\frac{-1}{2(1-s)}\Id + \frac{(x_1-x)(x_1-x)^{\transpose}}{4(1-s)^2}\Big\} \frac{x-x_0}{2s} p_s(x|x_0) p_{1-s}(x_1|x) \tilde{\pi}(x_0,x_1) \rmd x_0 \rmd x_1 \\
            &\quad -\Big(\frac{1}{p^{\rmI}_s(x)} \int_{\rset^{2d}} \Big\{\frac{-1}{2(1-s)}\Id + \frac{(x_1-x)(x_1-x)^{\transpose}}{4(1-s)^2}\Big\} p_s(x|x_0) p_{1-s}(x_1|x) \tilde{\pi}(x_0,x_1) \rmd x_0 \rmd x_1\Big)\\
            &\quad \cdot \Big( \frac{1}{p^{\rmI}_s(x)} \int_{\rset^{2d}} \frac{x-x_0}{2s} p_s(x|x_0) p_{1-s}(x_1|x) \tilde{\pi}(x_0,x_1) \rmd x_0 \rmd x_1\Big)\\
            &\lesssim (\frac{1}{p^{\rmI}_s(x)} \int_{\rset^{2d}} \frac{(x_1-x)(x_1-x)^{\transpose}}{(1-s)^2}\frac{x-x_0}{s} p_s(x|x_0) p_{1-s}(x_1|x) \tilde{\pi}(x_0,x_1) \rmd x_0 \rmd x_1 \\
            &\quad -\Big(\frac{1}{p^{\rmI}_s(x)} \int_{\rset^{2d}} \frac{(x_1-x)(x_1-x)^{\transpose}}{(1-s)^2} p_s(x|x_0) p_{1-s}(x_1|x) \tilde{\pi}(x_0,x_1) \rmd x_0 \rmd x_1\Big)\\
            &\quad \cdot\Big( \frac{1}{p^{\rmI}_s(x)}\int_{\rset^{2d}} \frac{x-x_0}{s} p_s(x|x_0) p_{1-s}(x_1|x) \tilde{\pi}(x_0,x_1) \rmd x_0 \rmd x_1\Big)\eqsp.
        \end{split}
    \end{equation}It follows that
    \begin{equation}
        \begin{split}
             &\int_0^{1-\epsilon} \int_{\rset^d}\norm{A_s^6(x)}^2 \uprho(s)p_s^{\rmI} (x)\rmd x\;  \rmd s\\
             &\lesssim \!\begin{multlined}[t]
                 \int_0^{1-\epsilon} \PE\Bigg[\left\Vert \PE\Bigg[ \frac{(X^{\rmI}_1-X^{\rmI}_s)(X^{\rmI}_1-X^{\rmI}_s)^{\transpose}}{(1-s)^2}\frac{X^{\rmI}_s-X^{\rmI}_0}{s}\Bigg| X^{\rmI}_s\Bigg] \right.\\
                - \left.
 \PE\Bigg[\frac{(X^{\rmI}_1-X^{\rmI}_s)(X^{\rmI}_1-X^{\rmI}_s)^{\transpose}}{(1-s)^2}\Bigg| X^{\rmI}_s\Bigg]\PE\Bigg[\frac{X^{\rmI}_s-X^{\rmI}_0}{s}\Bigg| X^{\rmI}_s\Bigg]\right\Vert^2\Bigg] \uprho(s)\rmd s
             \end{multlined}
        \end{split}
    \end{equation}
At this point, we proceed as for $A_s^5$, that is we split the time interval in two and we use \Cref{lemma_on_moments_bounded}, \Cref{ne_auxiliary_lemma} and \Cref{ne_auxiliary_lemma_1}. By doing so and by using \eqref{properties_of_auxiliary_measure_on_time}, we get that for $s\in [0,1/2]$
\begin{equation}
    \begin{split}
        & \!\begin{multlined}[t]\PE\Bigg[\left\Vert \PE\Bigg[ \frac{(X^{\rmI}_1-X^{\rmI}_s)(X^{\rmI}_1-X^{\rmI}_s)^{\transpose}}{(1-s)^2}\frac{X^{\rmI}_s-X^{\rmI}_0}{s}\Bigg| X^{\rmI}_s \Bigg] \right.\\
                - \left.
 \PE\Bigg[\frac{(X^{\rmI}_1-X^{\rmI}_s)(X^{\rmI}_1-X^{\rmI}_s)^{\transpose}}{(1-s)^2}\Bigg| X^{\rmI}_s\Bigg]\PE\Bigg[\frac{X^{\rmI}_s-X^{\rmI}_0}{s}\Bigg| X^{\rmI}_s \Bigg]\right\Vert^2\Bigg] \uprho(s)
 \end{multlined}\\
 & \!\begin{multlined} \lesssim \PE\Bigg[\left\Vert \PE\Bigg[ (\baX_0-\baX_{1-s})(\baX_0-\baX_{1-s})^{\transpose} \frac{\baX_{1}-\baX_{1-s}}{s}\Bigg|\baX_{1-s} \Bigg]\right. \\-\left.
 \PE\Big[(\baX_0-\baX_{1-s})(\baX_0-\baX_{1-s})^{\transpose}\Big| \baX_{1-s}\Big]\PE\Bigg[\frac{\baX_{1}-\baX_{1-s}}{s}\Bigg| \baX_{1-s}\Bigg]\right\Vert^2\Bigg]
 \end{multlined}\\
 &\!\begin{multlined} = \PE\Bigg[\left\Vert \PE\Bigg[ (\baX_0-\baX_{1-s})(\baX_0-\baX_{1-s})^{\transpose} \frac{\overleftarrow{f}_{1-s}^s}{s}\Bigg|\baX_{1-s}\Bigg] \right.\\
 -\left.
 \PE\Big[(\baX_0-\baX_{1-s})(\baX_0-\baX_{1-s})^{\transpose}\Big|\baX_{1-s}\Big]\PE\Bigg[\frac{\overleftarrow{f}_{1-s}^s}{s}\Bigg| \baX_{1-s}\Bigg]\right\Vert^2\Bigg]
 \end{multlined}\\
 &\lesssim \PE\Big[\norm{(\baX_0-\baX_{1-s})(\baX_0-\baX_{1-s})^{\transpose}}_{\mathrm{op}}^4\Big] + \PE\Bigg[\norm{\frac{\overleftarrow{f}_{1-s}^s}{s}}^4\Bigg] \\
 &= \PE[\norm{X^{\rmI}_1-X^{\rmI}_{s}}^8]+ \PE\Bigg[\norm{\frac{\overleftarrow{f}_{1-s}^s}{s}}^4\Bigg]\\
 &\lesssim d^4 +\moment[8]{\mu}+\moment[8]{\nustar}+ \norm{\nabla\log \tilde{\pi}}_{\mrl^8(\pi)}^8+ \norm{\nabla\log \nustar}^8_{\mrl^8(\nustar)}\eqsp.
    \end{split}
\end{equation}
Whereas, for $s\in [1/2,1-\epsilon]$, we get
\begin{equation}
    \begin{split}
        & \!\begin{multlined}[t]\PE\Bigg[\left\Vert \PE\Bigg[ \frac{(X^{\rmI}_1-X^{\rmI}_s)(X^{\rmI}_1-X^{\rmI}_s)^{\transpose}}{(1-s)^2}\frac{X^{\rmI}_s-X^{\rmI}_0}{s}\Bigg| X^{\rmI}_s\Bigg] \right.\\
                - \left.
 \PE\Bigg[\frac{(X^{\rmI}_1-X^{\rmI}_s)(X^{\rmI}_1-X^{\rmI}_s)^{\transpose}}{(1-s)^2}\Bigg| X^{\rmI}_s\Bigg]\PE\Bigg[\frac{X^{\rmI}_s-X^{\rmI}_0}{s}\Bigg| X^{\rmI}_s\Bigg]\right\Vert^2\Bigg] \uprho(s)
 \end{multlined}\\
 &\lesssim  \!\begin{multlined}[t]\PE\Bigg[\left\Vert \PE\Bigg[ \frac{(\foX_1-\foX_s)(\foX_1-\foX_s)^{\transpose}}{(1-s)^2}(\foX_s-\foX_0)\Bigg| \foX_s \Bigg] \right.\\
                - \left.
 \PE\Bigg[\frac{(\foX_1-\foX_s)(\foX_1-\foX_s)^{\transpose}}{(1-s)^2}\Bigg| \foX_s\Bigg]\PE\Big[\foX_s-\foX_0\Big| \foX_s\Big]\right\Vert^2\Bigg] (1-s)^{7/8}
 \end{multlined}\\
 &=  \!\begin{multlined}[t] (1-s)^{7/8}\PE\Bigg[\left\Vert \PE\Bigg[ \frac{\overrightarrow{f}_s^{1-s}(\overrightarrow{f}_s^{1-s})^{\transpose} + \overrightarrow{f}_s^{1-s}(\overrightarrow{g}_s^{1-s})^{\transpose}+ \overrightarrow{g}_s^{1-s}(\overrightarrow{f}_s^{1-s})^{\transpose}}{(1-s)^2}(\foX_s-\foX_0)\Bigg| \foX_s\Bigg] \right.\\
                - \left.
 \PE\Bigg[\frac{\overrightarrow{f}_s^{1-s}(\overrightarrow{f}_s^{1-s})^{\transpose} + \overrightarrow{f}_s^{1-s}(\overrightarrow{g}_s^{1-s})^{\transpose}+ \overrightarrow{g}_s^{1-s}(\overrightarrow{f}_s^{1-s})^{\transpose}}{(1-s)^2}\Bigg| \foX_s\Bigg]\PE\Big[\foX_s-\foX_0\Big| \foX_s\Big]\right\Vert^2\Bigg] 
 \end{multlined}\\
 &\lesssim \PE\Bigg[\frac{\norm{\overrightarrow{f}_s^{1-s}(\overrightarrow{f}_s^{1-s})^{\transpose} + \overrightarrow{f}_s^{1-s}(\overrightarrow{g}_s^{1-s})^{\transpose}+ \overrightarrow{g}_s^{1-s}(\overrightarrow{f}_s^{1-s})^{\transpose}}_{\mathrm{op}}^4}{(1-s)^8}\Bigg](1-s)^{7/4} + \PE\Big[\norm{\foX_s-\foX_0}^4\Big]\\
 &\lesssim \PE\Bigg[\frac{\norm{\overrightarrow{f}_s^{1-s}}^8}{(1-s)^8}\Bigg]+ \PE\Bigg[\frac{\norm{\overrightarrow{g}_s^{1-s}}^8}{(1-s)^8}\Bigg] (1-s)^{7/2} + + \PE\Big[\norm{X^{\rmI}_s-X^{\rmI}_0}^4\Big]\\
 &\lesssim \norm{\nabla\log \tilde{\pi}}_{\mrl^8(\pi)}^8+ \norm{\nabla\log \mu}^8_{\mrl^8(\mu)} + d^4 (1-s)^{-1/2} + d^2 +\moment[4]{\mu}+\moment[4]{\nustar}\eqsp.
    \end{split}
\end{equation}
Consequently, we have that
\begin{multline}
    \int_0^{1-\epsilon} \int_{\rset^d}\norm{A_s^6(x)}^2 \uprho(s) p_s^{\rmI} (x)\rmd x\;  \rmd s\lesssim d^4+ \moment[8]{\mu}+\moment[8]{\nustar} + \norm{\nabla\log \tilde{\pi}}_{\mrl^8(\pi)}^8\\
    + \norm{\nabla\log \mu}^8_{\mrl^8(\mu)}+\norm{\nabla\log \nustar}^8_{\mrl^8(\nustar)}\eqsp.
\end{multline}
     
Putting together the bounds on the $\{A_s^k\}_{k=1}^{6}$ derived so far, we eventually obtain  

\begin{equation}\label{term_4_dec_KL_gen}
\begin{split}
    &\int_0^{1-\epsilon}\PE\Big[ \norm{(\partial_s +\foG_s)\fob_s (\foX_s)}^2\Big]\uprho(s) \rmd s\\
    &\lesssim d^4+ \moment[8]{\mu}+\moment[8]{\nustar} + \norm{\nabla\log \tilde{\pi}}_{\mrl^8(\pi)}^8+ \norm{\nabla\log \mu}^8_{\mrl^8(\mu)}+\norm{\nabla\log \nustar}^8_{\mrl^8(\nustar)}\eqsp.
    \end{split}
\end{equation}
Plugging \eqref{term_4_dec_KL_gen}, \eqref{term_2_dec_KL_gen} and \eqref{term_3_dec_KL_gen} into \eqref{Decomposition_KL_via_generator}, we get \begin{multline}
        \KL(\nustar_{1-\epsilon}|\nu^{\theta^{\star}}_{1-\epsilon})\lesssim \varepsilon^2 +h(h^{1/8}+1)\Big(d^4+ \moment[8]{\mu}+\moment[8]{\nustar}+\norm{\nabla \log \tilde{\pi}}_{\mrl^8(\pi)}^8
    \\+ \norm{\nabla \log \mu}_{\mrl^8(\mu)}^8
    +  \norm{\nabla \log \nustar}_{\mrl^8(\nustar)}^8\Big)\eqsp.
    \end{multline}
The estimate \eqref{convergence_bound_no_early} then follows from the above estimate and \eqref{lsc_of_kl}.\\
However, for \eqref{convergence_bound_no_early} to hold true, we still need to prove that
\begin{lemma}\label{martingale_part}
    $(\int_0^s \langle \fob_t, D_x\fob_t\rangle(\XintM_t)\rmd B_t)_{s\in [0, 1-\epsilon]}$ is a martingale.
\end{lemma}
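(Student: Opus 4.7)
My plan is to verify the standard $\mrl^2$ sufficient condition for a stochastic integral to be a true martingale: the process $\int_0^{\cdot}\langle \fob_t(\XintM_t),D_x\fob_t(\XintM_t)\,\rmd B_t\rangle$ is a true $\mrl^2$ martingale on $[0,1-\epsilon]$ provided that
\begin{equation*}
\PE\Big[\int_0^{1-\epsilon}\norm{\fob_t(\XintM_t)}^2\norm{D_x\fob_t(\XintM_t)}_{\mathrm{op}}^2\,\rmd t\Big]<\infty,
\end{equation*}
since the integrand $D_x\fob_t(\XintM_t)^{\transpose}\fob_t(\XintM_t)$ has squared Euclidean norm bounded by $\norm{\fob_t}^2\norm{D_x\fob_t}_{\mathrm{op}}^2$. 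I will establish this by controlling $\fob_t$ and $D_x\fob_t$ in $\mrl^4(\PP)$ separately and combining via Cauchy--Schwarz.

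For $\fob_t$, I use the integration-by-parts representation already exploited in \eqref{term_2_dec_KL_gen}, namely $\fob_t(\XintM_t)=2\,\PE[\nabla_{x_1}\log\tilde\pi(X_0^{\rmI}, X_1^{\rmI})\mid X_t^{\rmI}]$. Jensen's inequality on the conditional expectation immediately yields the uniform bound $\sup_{t\in[0,1-\epsilon]}\PE[\norm{\fob_t(\XintM_t)}^4]\lesssim\norm{\nabla\log\tilde\pi}_{\mrl^4(\pi)}^4$, finite under \Cref{ass_data_no_early_v2}. For $D_x\fob_t$, I return to the explicit four-term decomposition \eqref{jacobian_mimicking_drift}. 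Using the gradient/Hessian identities \eqref{simmetry_nabla_heat_kernel}--\eqref{simmetry_hessian_heat_kernel}, each of these terms can be rewritten as a conditional expectation, given $X_t^{\rmI}$, of products involving $(X_1^{\rmI}-X_t^{\rmI})/(1-t)$, $(X_t^{\rmI}-X_0^{\rmI})/t$ and (after integration by parts against $\tilde\pi$) $\nabla\log\tilde\pi$. Combining the linear representation \eqref{interpolant_in_linear_form} with the moment bounds of \Cref{lemma_on_moments_bounded} then produces a locally integrable bound $\PE[\norm{D_x\fob_t(\XintM_t)}_{\mathrm{op}}^4]^{1/2}\le\Psi_\epsilon(t)$ on $[0,1-\epsilon]$, where $\Psi_\epsilon$ depends polynomially on $d$, $\moment[8]{\mu}$, $\moment[8]{\nustar}$, $\norm{\nabla\log\tilde\pi}_{\mrl^8(\pi)}$ and on $1/\epsilon$. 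A final Cauchy--Schwarz, $\PE[\norm{D_x\fob_t}_{\mathrm{op}}^2\norm{\fob_t}^2]\le\PE[\norm{D_x\fob_t}_{\mathrm{op}}^4]^{1/2}\PE[\norm{\fob_t}^4]^{1/2}$, followed by integration in $t$, concludes.

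The main technical obstacle is the Jacobian estimate: the derivatives $\nabla_x p_t(\cdot\mid x_0)$ and $\nabla^2_x p_{1-t}(x_1\mid\cdot)$ produce singular prefactors $1/t$ and $1/(1-t)^2$ which, near $t=0$, are not controlled by the $1-\epsilon$ cutoff alone. The resolution, exactly in the spirit of the estimates performed on $A_s^5$ and $A_s^6$ in \Cref{sec:proof-crefth}, is to observe that in the resulting conditional expectations the increment $X_t^{\rmI}-X_0^{\rmI}$ scales as $t$ (and $X_1^{\rmI}-X_t^{\rmI}$ as $1-t$) in probability thanks to \eqref{interpolant_in_linear_form}; the moment bounds of \Cref{lemma_on_moments_bounded} then show that the singular factors are absorbed up to an integrable remainder. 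Since the lemma only asserts finiteness (no quantitative bound is needed), the arguments of \Cref{sec:proof-crefth} can be reused with only crude estimates to close the chain.
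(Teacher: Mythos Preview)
Your overall strategy coincides with the paper's: reduce via Cauchy--Schwarz to uniform $\mrl^4$ bounds on $\fob_t(\XintM_t)$ and $D_x\fob_t(\XintM_t)$, and handle $\fob_t$ exactly as you describe. The difference lies in the Jacobian estimate, where the paper takes a shorter route than you anticipate.

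Your ``main technical obstacle'' --- the $1/t$ singularity near $t=0$ --- does not in fact arise if the integration by parts is performed on the correct factor. In each of the four terms of \eqref{jacobian_mimicking_drift}, the only derivative that carries a $1/t$ prefactor is $\nabla_x p_t(x\mid x_0)=-\nabla_{x_0}p_t(x\mid x_0)$; integrating this by parts in $x_0$ shifts it onto $\tilde\pi$, producing $\nabla_{x_0}\log\tilde\pi$ and leaving only factors of the type $(X_1^{\rmI}-X_t^{\rmI})/(1-t)$ and $\nabla\log\tilde\pi$. Since $1-t\geq\epsilon$, the paper obtains a bound on $\PE[\norm{D_x\fob_t(\XintM_t)}_{\mathrm{op}}^4]$ that is \emph{uniform} in $t\in[0,1-\epsilon]$ (of the form $C_\epsilon(d^4+\moment[8]{\mu}+\moment[8]{\nustar})+\norm{\nabla\log\tilde\pi}_{\mrl^8(\pi)}^8$), and no forward/backward SDE argument in the style of $A_s^5$, $A_s^6$ is needed.

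Your proposed detour is not only unnecessary but potentially problematic. If instead you integrate by parts the $p_{1-t}$-derivative in the first term of \eqref{jacobian_mimicking_drift}, you are left with $\PE[\nabla_{x_1}\log\tilde\pi(X_0^{\rmI},X_1^{\rmI})\cdot (X_0^{\rmI}-X_t^{\rmI})^{\transpose}/t\mid X_t^{\rmI}]$; the crude moment bound from \Cref{lemma_on_moments_bounded} gives $\PE[\norm{\cdot}_{\mathrm{op}}^4]^{1/2}\sim d/t$, which is not integrable at $t=0$. The $A_s^5$/$A_s^6$ cancellation trick does not rescue this, because those estimates exploit a conditional \emph{covariance} structure (so that the Gaussian increment, being independent of $X_t^{\rmI}$, drops out), whereas here you have a single conditional expectation in which the Gaussian part of $X_0^{\rmI}-X_t^{\rmI}$ is correlated with $X_0^{\rmI}$ and hence with $\nabla_{x_1}\log\tilde\pi(X_0^{\rmI},X_1^{\rmI})$. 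The clean fix is simply to choose the integration by parts as the paper does.
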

\begin{proof}[Proof of \Cref{martingale_part}]
    If we show that for any $s\in [0,1-\epsilon],$ it holds $\PE[\|\langle \fob_s, D_x\fob_s\rangle(\XintM_t)\|^2]<C$, for some $C>0$ independent of time, then by Fubini's theorem and \cite[Theorem 7.3]{baldi2017stochastic} we are done. \\
    To do so, by the Cauchy-Schwarz inequality, we just need to show that $\PE[\|\fob_s(\XintM_s)\|^4]$ and $\PE[\| D_x\fob_s(\XintM_s)\|_{\mathrm{op}}^4\Big]$ are bounded from above by constants which are independent of time. To this aim, note that, as a direct consequence of \eqref{def_mimicking_drift}, \Cref{theo_on_markovian_projection}, \eqref{simmetry_nabla_heat_kernel}, Jensen inequality and \Cref{ass_data_no_early_v2}, it holds 
     \begin{equation}\label{bound_tbz_power4}
    \begin{split}
        &\PE\Big[ \norm{\fob_s (\XintM_s)}^4\Big]\\
        &\lesssim \int_{\rset^d}\norm{\frac{\int_{\rset^{2d}} p_s(x|x_0) \nabla_x p_{1-s} (x_1|x)\tilde{\pi}(x_0,x_1) \rmd x_0 \rmd x_1}{p^{\rmI}_s(x)}}^4 p^{\rmI}_s(x)\rmd x\\
        &=\int_{\rset^d}\norm{\frac{1}{p^\rmI_s(x)}\int_{\rset^{2d}} p_s(x|x_0) p_{1-s} (x_1| x)\frac{\nabla_{x_1}\tilde{\pi}}{\tilde{\pi}}(x_0, x_1)\tilde{\pi}(x_0, x_1) \rmd x_0, \rmd x_1}^4 p^{\rmI}_s(x)\rmd x\\
        &= \PE\Bigg[\norm{\PE\Bigg[\frac{\nabla_{x_1}\tilde{\pi}}{\tilde{\pi}}(X_0^{\rmI},X_1^{\rmI})\Bigg| X^{\rmI}_s \Bigg]}^4\Bigg]\le \PE\Bigg[\PE\Bigg[\norm{\frac{\nabla_{x_1}\tilde{\pi}}{\tilde{\pi}}(X_0^{\rmI},X_1^{\rmI})}^4\Bigg| X^{\rmI}_s \Bigg]\Bigg] \\
        &=\PE\Bigg[\norm{\frac{\nabla_{x_1}\tilde{\pi}}{\tilde{\pi}}(X_0^{\rmI},X_1^{\rmI})}^4\Bigg]= \norm{\nabla \log \tilde{\pi}}^4_{\mrl^4(\pi)} \eqsp.
    \end{split}
\end{equation} Similarly, recalling \eqref{jacobian_mimicking_drift}, we have that 
\begin{equation}
        \begin{split}
            &\PE\Big[\norm{ D_x\fob_s(\XintM_s)}_{\mathrm{op}}^4\Big] \\
            &\lesssim \int_{\rsetd} \norm{\frac{\int_{\rset^{2d}} \nabla_x p_{1-s}(x_1|x)(\nabla_x p_s(x|x_0))^{\transpose} \tilde{\pi}(x_0,x_1) \rmd x_0 \rmd x_1}{p^{\rmI}_s(x)}}_{\mathrm{op}}^4 p^{\rmI}_s(x) \rmd x \\
            &+  \int_{\rsetd} \norm{ \frac{\int_{\rset^{2d}} p_{s}(x|x_0) \nabla^2_x p_{1-s}(x_1|x) \tilde{\pi}(x_0,x_1) \rmd x_0 \rmd x_1}{p^{\rmI}_s(x)}}_{\mathrm{op}}^4 p^{\rmI}_s(x) \rmd x  \\
             & + \int_{\rsetd} \norm{  \frac{\int_{\rset^{2d}} p_{s}(x|x_0) \nabla_x  p_{1-s}(x_1|x) \tilde{\pi}(x_0,x_1) \rmd x_0 \rmd x_1}{p^{\rmI}_s(x)}}^8 p^{\rmI}_s(x) \rmd x \\
             &+ \int_{\rsetd} \norm{ 
 \frac{ \int_{\rset^{2d}} \nabla_x p_{s}(x|x_0) p_{1-s}(x_1|x) \tilde{\pi}(x_0,x_1) \rmd x_0 \rmd x_1}{p^{\rmI}_s(x)}}^8 p^{\rmI}_s(x) \rmd x \eqsp.
        \end{split}
    \end{equation}
To bound the first term of the RHS of the above expression, we integrate by parts and use \Cref{lemma_on_moments_bounded} and \Cref{ass_data_no_early_v2}.
\begin{equation}
\begin{split}
    &\int_{\rset^d}\norm{\frac{\int_{\rset^{2d}} \nabla_x p_{1-s}(x_1|x) (\nabla_x p_s(x|x_0))^{\transpose} \tilde{\pi}(x_0,x_1) \rmd x_0 \rmd x_1 }{p^{\rmI}_s(x)} }_{\mathrm{op}}^4  p^{\rmI}_s(x)\rmd x \\
    &=\int_{\rset^d}\norm{\frac{\int_{\rset^{2d}} \nabla_x p_{s}(x|x_0) (\nabla_x p_{1-s}(x_1|x))^{\transpose} \tilde{\pi}(x_0,x_1) \rmd x_0 \rmd x_1 }{p^{\rmI}_s(x)} }_{\mathrm{op}}^4  p^{\rmI}_s(x)\rmd x\\
    &= \int_{\rset^d}\norm{\frac{1}{p^{\rmI}_s(x)}\int_{\rset^{2d}} \frac{\nabla_{x_0}\tilde{\pi}}{\tilde{\pi}}(x_0,x_1)\frac{(x_1-x)^{\transpose}}{1-s} p_s(x|x_0) p_{1-s}(x_1|x)  \tilde{\pi}(x_0,x_1) \rmd x_0 \rmd x_1 }_{\mathrm{op}}^4 p^{\rmI}_s(x)\rmd x \\
    &\lesssim  \PE\Bigg[\norm{\frac{\nabla_{x_0}\tilde{\pi}}{\tilde{\pi}}(X^{\rmI}_0, X^{\rmI}_1)\frac{(X^{\rmI}_1-X^{\rmI}_s)^{\transpose}}{\epsilon}}_{\mathrm{op}}^4\Bigg]\\
    &\lesssim \frac{1}{\epsilon} (d^4+ \moment[8]{\mu}+\moment[8]{\nustar}) +\norm{\nabla\log \tilde{\pi}}_{\mrl^8(\pi)}^8\eqsp.
\end{split}
    \end{equation}
To bound the second term, we integrate by parts and proceed as before.
\begin{equation}
    \begin{split}
        &\int_{\rset^d}\norm{\frac{\int_{\rset^{2d}} p_s(x|x_0) \nabla^2_x p_{1-s}(x_1|x) \tilde{\pi}(x_0,x_1) \rmd x_0 \rmd x_1 }{p^{\rmI}_s(x)}}_{\mathrm{op}}^4  p^{\rmI}_s(x)\rmd x\\
        &\!\begin{multlined}=\int_{\rset^d}\norm{\frac{\int_{\rset^{2d}} p_s(x|x_0)(\nabla_{x_1}\tilde{\pi}(x_0,x_1)/\tilde{\pi}(x_0,x_1)) (\nabla_x p_{1-s}(x_1|x))^{\transpose}\; \tilde{\pi}(x_0,x_1) \rmd x_0 \rmd x_1 }{p^{\rmI}_s(x)}}_{\mathrm{op}}^4\\
        p^{\rmI}_s(x)\rmd x 
        \end{multlined}\\
        &\!\begin{multlined}= \int_{\rset^d}\norm{\frac{1}{p^{\rmI}_s(x)}\int_{\rset^{2d}} p_s(x|x_0) p_{1-s}(x_1|x) \frac{\nabla_{x_1}\tilde{\pi}}{\tilde{\pi}}(x_0,x_1) \frac{(x-x_1)^{\transpose}}{1-s}\tilde{\pi}(x_0,x_1) \rmd x_0 \rmd x_1}_{\mathrm{op}}^4 \\
        p^{\rmI}_s(x)\rmd x
        \end{multlined}\\
    &\lesssim \frac{1}{\epsilon} (d^4+ \moment[8]{\mu}+\moment[8]{\nustar}) +\norm{\nabla\log \tilde{\pi}}_{\mrl^8(\pi)}^8\eqsp.
    \end{split} 
\end{equation}
To bound the third and last term, we proceed as in \eqref{bound_tbz_power4}, getting 
\begin{equation}
    \int_{\rsetd} \norm{  \frac{\int_{\rset^{2d}} p_{s}(x|x_0) \nabla_x  p_{1-s}(x_1|x) \tilde{\pi}(x_0,x_1) \rmd x_0 \rmd x_1}{p^{\rmI}_s(x)}}^8 p^{\rmI}_s(x) \rmd x \lesssim \norm{\nabla\log \tilde{\pi}}_{\mrl^8(\pi)}^8\eqsp,
\end{equation} and 
\begin{equation}
     \int_{\rsetd} \norm{ 
 \frac{ \int_{\rset^{2d}} \nabla_x p_{s}(x|x_0) p_{1-s}(x_1|x) \tilde{\pi}(x_0,x_1) \rmd x_0 \rmd x_1}{p^{\rmI}_s(x)}}^8 p^{\rmI}_s(x) \rmd x \lesssim \norm{\nabla\log \tilde{\pi}}_{\mrl^8(\pi)}^8\eqsp.
\end{equation}
\end{proof}
\subsection{Proof of \Cref{theo_early}}
\label{sec:proof-ofcr}
Fix $\delta>0$. Then, because of \Cref{theo_on_markovian_projection}, \eqref{interpolant_in_linear_form} and \Cref{ass_data_no_early}, it holds
\begin{equation}\label{bound_moment_hat}
    \moment[8]{\nustar_{1-\delta}}= \PE[\norm{X^{\rmI}_{1-\delta}}^8]\lesssim \delta^8 \moment[8]{\mu}+(1-\delta)^8\moment[8]{\nustar}+ d^4 \delta^4(1-\delta)^4<+\infty\eqsp.
\end{equation}Moreover $\nustar_{1-\delta} \ll \Leb^d$ with density $p^\rmI_{1-\delta}$ and 
\begin{equation}\label{score_hat}
    \nabla \log \Big(\frac{\rmd \nustar_{1-\delta}}{\rmd \Leb^d}\Big) \in \mrl^8(\nustar_{1-\delta})\eqsp. 
\end{equation}Indeed because of \eqref{simmetry_nabla_heat_kernel}, it holds
\begin{equation}
\begin{split}
    &\nabla \log \frac{\rmd \nustar_{1-\delta}}{\rmd \Leb^d} (x_{1-\delta})\\
    & = \frac{\nabla\Big(\int_{\rset^{2d}} p_{1-\delta}(x_{1-\delta}|x_0) p_\delta (x_1|x_{1-\delta}) \tilde{\pi} (\rmd x_0, \rmd x_1)\Big)}{p^\rmI_{1-\delta}(x_{1-\delta})}\\
    &= \frac{1}{p^\rmI_{1-\delta}(x_{1-\delta})}\int_{\rset^{2d}} \Big\{\frac{x_{1-\delta} - x_0}{1-\delta} + \frac{x_1 -x_{1-\delta}}{\delta}\Big\} p_{1-\delta}(x_{1-\delta}|x_0) p_\delta (x_1|x_{1-\delta}) \tilde{\pi} (\rmd x_0, \rmd x_1)\\
    &= \frac{1}{p^\rmI_{1-\delta}(x_{1-\delta})}\int_{\rset^{2d}} \frac{(2\delta -1) x_{1-\delta} - \delta x_0+ (1-\delta) x_1}{\delta(1-\delta)}  p_{1-\delta}(x_{1-\delta}|x_0) p_\delta (x_1|x_{1-\delta}) \tilde{\pi} (\rmd x_0, \rmd x_1)\\
    &\lesssim\PE\Bigg[\frac{ X^\rmI_{1-\delta} - \delta X^\rmI_0+ (1-\delta) X^\rmI_1}{\delta(1-\delta)}\Bigg| X^\rmI_{1-\delta}=x_{1-\delta}\Bigg]\eqsp.
\end{split}
\end{equation}But then, using Jensen inequality we obtain that
\begin{equation}
    \begin{split}
        \int_{\rset^d} \norm{\nabla_{x_{1-\delta}} \log \frac{\rmd \nustar_{1-\delta}}{\rmd \Leb^d}}^8 \rmd \nustar_{1-\delta}&= \PE\Bigg[\norm{\PE\Bigg[\frac{ X^\rmI_{1-\delta} - \delta X^\rmI_0+ (1-\delta) X^\rmI_1}{\delta(1-\delta)}\Bigg| X^\rmI_{1-\delta}\Bigg]}^8\Bigg]\\
        &\lesssim \PE\Bigg[\norm{\frac{ X^\rmI_{1-\delta} - \delta X^\rmI_0+ (1-\delta) X^\rmI_1}{\delta(1-\delta)}}^8\Bigg]\\
        &\lesssim \moment[8]{\nustar_{1-\delta}}\frac{1}{\delta^8(1-\delta)^8} + \moment[8]{\mu}\frac{1}{(1-\delta)^8} + \moment[8]{\nustar}\frac{1}{\delta^8}\\
        &\lesssim \moment[8]{\mu}\frac{1}{(1-\delta)^8}+ \moment[8]{\nustar}\frac{1}{\delta^8} +d^4\frac{1}{\delta^4(1-\delta)^4}\eqsp. 
    \end{split}
\end{equation}
Also, consider
\begin{equation}
    \pi_{1-\delta}(x_0, x_{1-\delta})=\mu(x_0)\int_{\rset^d} p^{\rmI}_{1-\delta|0,1}(x_{1-\delta}|x_0, x_1) \nustar(\rmd x_1)\eqsp,
\end{equation}where $(x_0,x_1,x_{1-\delta}) \mapsto p^{\rmI}_{1-\delta|0,1}(x_{1-\delta}|x_0, x_1)$ denotes the density of $X^\rmI_{1-\delta}$ given $(X^\rmI_0, X^\rmI_1)$ with respect to the Lebesgue measure. Then $\pi_{1-\delta}\in \Pi(\mu, \nustar_{1-\delta})$ and $\pi_{1-\delta}\ll \Leb^{2d}$. Moreover
\begin{equation}\label{bound_fisher_coupling}
   \nabla\log \Big(\frac{1}{p_{1-\delta}} \frac{ \rmd \pi_{1-\delta}}{\rmd \Leb^{2d}}\Big)\in \mrl^8(\pi_{1-\delta})\eqsp.
\end{equation}
Indeed, because of \eqref{interpolant_in_linear_form},
\begin{equation}
    p^{\rmI}_{1-\delta|0,1}(x_{1-\delta}|x_0, x_1)= \frac{1}{(4\pi \delta(1-\delta))^{d/2}} \exp\Bigg(-\frac{\norm{x_{1-\delta}-\delta x_0 -(1-\delta)x_1}^2}{4\delta(1-\delta)}\Bigg)\eqsp.
\end{equation}Therefore
\begin{equation}
    \nabla_{x_0} p^{\rmI}_{1-\delta|0,1}(x_{1-\delta}|x_0, x_1) = \frac{x_{1-\delta}-\delta x_0 -(1-\delta)x_1}{2(1-\delta)}p^{\rmI}_{1-\delta|0,1}(x_{1-\delta}|x_0, x_1)\eqsp, 
\end{equation}and
\begin{equation}
    \nabla_{x_{1-\delta}} p^{\rmI}_{1-\delta|0,1}(x_{1-\delta}|x_0, x_1)= -\frac{x_{1-\delta}-\delta x_0 -(1-\delta)x_1}{2\delta(1-\delta)}p^{\rmI}_{1-\delta|0,1}(x_{1-\delta}|x_0, x_1)\eqsp.
\end{equation} Furthermore
\begin{equation}
\begin{split}
    \frac{p^{\rmI}_{1-\delta|0,1}(x_{1-\delta}|x_0, x_1) \nustar(\rmd x_1)}{\int_{\rset^d}p^{\rmI}_{1-\delta|0,1}(x_{1-\delta}|x_0, \tilde{x}_1) \nustar(\rmd \tilde{x}_1)}&=\frac{p^{\rmI}_{1-\delta|0,1}(x_{1-\delta}|x_0, x_1) \mu(x_0)\nustar(\rmd x_1)}{\int_{\rset^d}p^{\rmI}_{1-\delta|0,1}(x_{1-\delta}|x_0, \tilde{x}_1) \mu(x_0) \nustar(\rmd \tilde{x}_1)}\\
    &= p^{\rmI}_{1|0, 1-\delta}(x_1|x_0, x_{1-\delta})\rmd x_1\eqsp.
    \end{split}
\end{equation}Consequently, we have that
\begin{equation}
    \begin{split}
       &\frac{\nabla_{x_0}\pi_{1-\delta}}{\pi_{1-\delta}}(x_0, x_{1-\delta})\\
       &= \frac{\nabla \mu (x_0)\int_{\rset^d} p^{\rmI}_{1-\delta|0,1}(x_{1-\delta}|x_0, x_1) \nustar(\rmd x_1) + \mu(x_0) \int_{\rset^d} \nabla_{x_0} p^{\rmI}_{1-\delta|0,1}(x_{1-\delta}|x_0, x_1) \nustar(\rmd x_1)}{\mu(x_0)\int_{\rset^d} p^{\rmI}_{1-\delta|0,1}(x_{1-\delta}|x_0, \tilde{x}_1) \nustar(\rmd \tilde{x}_1)}\\
       &= \frac{\nabla\mu(x_0)}{\mu (x_0)}+ \int_{\rset^d}\frac{x_{1-\delta}-\delta x_0 -(1-\delta)x_1}{2(1-\delta)} p^{\rmI}_{1|0, 1-\delta}(x_1|x_0, x_{1-\delta}) \rmd x_1\eqsp,
    \end{split}
\end{equation}
and that
\begin{equation}
    \begin{split}
         \frac{\nabla_{x_{1-\delta}}\pi_{1-\delta}}{\pi_{1-\delta}}(x_0, x_{1-\delta})&= \frac{ \mu(x_0) \int_{\rset^d} \nabla_{x_{1-\delta}} p^{\rmI}_{1-\delta|0,1}(x_{1-\delta}|x_0, x_1) \nustar(\rmd x_1)}{\mu(x_0)\int_{\rset^d} p^{\rmI}_{1-\delta|0,1}(x_{1-\delta}|x_0, \tilde{x}_1) \nustar(\rmd \tilde{x}_1)}\\
         &=-\int_{\rset^{d}}\frac{x_{1-\delta}-\delta x_0 -(1-\delta)x_1}{2\delta(1-\delta)}p^{\rmI}_{1|0, 1-\delta}(x_1|x_0, x_{1-\delta}) \rmd x_1\eqsp.
    \end{split}
\end{equation}But then, if we use Jensen inequality and \eqref{bound_moment_hat}, we get
\begin{equation}
\begin{split}
    \int_{\rset^{2d}} \norm{\nabla_{x_0}\log \frac{\rmd \pi_{1-\delta}}{\rmd \Leb^{2d}}}^8 \rmd \pi_{1-\delta}& \lesssim  \int_{\rset^{d}} \norm{\nabla_{x_0}\log \frac{\rmd \mu}{\rmd \Leb^{d}}}^8 \rmd \mu \\
    &\quad + \PE\Bigg[\norm{\PE\Bigg[\frac{X^{\rmI}_{1-\delta}-\delta X^{\rmI}_0 -(1-\delta)X^{\rmI}_1}{1-\delta}\Bigg|(X^{\rmI}_0, X^{\rmI}_{1-\delta})\Bigg]}^8\Bigg] \\
    &\lesssim \norm{\nabla\log\mu}^8_{\mrl^8(\mu)}+ \PE\Bigg[\norm{\frac{X^{\rmI}_{1-\delta}-\delta X^{\rmI}_0 -(1-\delta)X^{\rmI}_1}{1-\delta}}^8\Bigg] \\
    &\lesssim \norm{\nabla\log\mu}^8_{\mrl^8(\mu)} +\moment[8]{\nustar_{1-\delta}}\frac{1}{(1-\delta)^8}+ \moment[8]{\mu}\frac{\delta^8}{(1-\delta)^8}+\moment[8]{\nustar}\\
    &\lesssim \norm{\nabla\log\mu}^8_{\mrl^8(\mu)}+ \moment[8]{\mu}\frac{\delta^8}{(1-\delta)^8}+ \moment[8]{\nustar} +d^4\frac{\delta^4}{(1-\delta)^4}\\
    &\lesssim \norm{\nabla\log\mu}^8_{\mrl^8(\mu)}+\moment[8]{\mu}\frac{1}{(1-\delta)^8}+ \moment[8]{\nustar}\frac{1}{\delta^8} +d^4\frac{1}{\delta^4(1-\delta)^4}\eqsp,
    \end{split}
\end{equation}and (similarly)
\begin{equation}
    \begin{split}
        \int_{\rset^{2d}} \norm{\nabla_{x_{1-\delta}}\log \frac{\rmd \pi_{1-\delta}}{\rmd \Leb^{2d}}}^8 \rmd \pi_{1-\delta}& \lesssim \PE\Bigg[\norm{\PE\Bigg[\frac{X^{\rmI}_{1-\delta}-\delta X^{\rmI}_0 -(1-\delta)X^{\rmI}_1}{\delta(1-\delta)}\Bigg| (X^{\rmI}_0, X^{\rmI}_{1-\delta})\Bigg]}^8\Bigg] \\
    &\lesssim  \moment[8]{\mu}\frac{1}{(1-\delta)^8}+ \moment[8]{\nustar}\frac{1}{\delta^8} +d^4\frac{1}{\delta^4(1-\delta)^4}\eqsp.
    \end{split}
\end{equation}
Additionally, because of \Cref{remark_simmetry} and \eqref{bound_moment_hat}, they hold
\begin{equation}
\begin{split}
    &\int_{\rset^{2d}}\norm{\nabla_{x_0} \log p_{1-\delta}(x_{1-\delta}|x_0)}^8 \rmd \pi_{1-\delta}(x_0, x_{1-\delta})\\
    &= \int_{\rset^{2d}}\norm{\nabla_{x_{1-\delta}} \log p_{1-\delta}(x_{1-\delta}|x_0)}^8 \rmd \pi_{1-\delta}(x_0, x_{1-\delta})\\
    &= \PE\Bigg[\norm{\frac{X^\rmI_{1-\delta}- X^\rmI_0}{1-\delta}}^8\Bigg]\lesssim \moment[8]{\nustar_{1-\delta}}\frac{1}{(1-\delta)^8}+\moment[8] {\mu} \frac{1}{(1-\delta)^8}\\
    &\lesssim \moment[8] {\mu} \frac{\delta^8}{(1-\delta)^8}+ \moment[8] {\nustar} +d^4 \frac{\delta^4}{(1-\delta)^4} +\moment[8] {\mu} \frac{1}{(1-\delta)^8}\\
    &\lesssim  \moment[8]{\mu}\frac{1}{(1-\delta)^8}+ \moment[8]{\nustar}\frac{1}{\delta^8} +d^4\frac{1}{\delta^4(1-\delta)^4}\eqsp.
\end{split}
\end{equation}
It follows from \Cref{ass_data_no_early}, \Cref{ass_data_no_early_v2}(i), \eqref{bound_moment_hat}, \eqref{score_hat} and \eqref{bound_fisher_coupling} that the probability distributions $\mu, \nustar_{1-\delta}$ and the coupling $\pi_{1-\delta}\in \Pi(\mu, \nustar_{1-\delta})$ satisfy \Cref{ass_data_no_early}. The bound in \Cref{theo_early} is now a straightforward consequence of \Cref{theo_no_early} and the bounds on the scores derived so far.

\bibliographystyle{plain}
    \bibliography{bib.bib}
\end{document}